\documentclass[twoside]{article}

\newif\ifpreprintflag
\preprintflagtrue

\ifpreprintflag
\usepackage[preprint]{aistats2027}
\else
\usepackage{aistats2027}
\fi 

\usepackage[numbers,square]{natbib}

\usepackage[utf8]{inputenc}
\usepackage[T1]{fontenc}
\usepackage{amsmath}
\usepackage{amsfonts}
\usepackage{amssymb}
\usepackage{amsthm}
\usepackage{nicefrac}
\usepackage{bm}
\usepackage{booktabs}
\usepackage{multirow}
\usepackage{graphicx}
\usepackage{subcaption}
\usepackage{wrapfig}
\usepackage{placeins}
\usepackage{circledsteps}
\usepackage{comment}
\usepackage{microtype}
\usepackage{xcolor}
\usepackage{algorithm}
\usepackage{algpseudocode}
\usepackage{enumitem}  
\usepackage{soul}      
\usepackage{url}
\usepackage{hyperref}
\usepackage[capitalize]{cleveref}

\theoremstyle{plain}
\newtheorem{theorem}{Theorem}
\newtheorem{proposition}{Proposition}

\theoremstyle{definition}
\newtheorem{definition}{Definition}

\crefname{definition}{Def.}{Defs.}
\Crefname{definition}{Def.}{Defs.}
\crefname{proposition}{Prop.}{Props.}
\Crefname{proposition}{Prop.}{Props.}
\crefname{theorem}{Thm.}{Thms.}
\Crefname{theorem}{Thm.}{Thms.}
\crefname{corollary}{Cor.}{Cors.}
\Crefname{corollary}{Cor.}{Cors.}
\crefname{lemma}{Lem.}{Lems.}
\Crefname{lemma}{Lem.}{Lems.}
\crefname{remark}{Rem.}{Rems.}
\Crefname{remark}{Rem.}{Rems.}
\crefname{assumption}{Asm.}{Asms.}
\Crefname{assumption}{Asm.}{Asms.}
\crefname{section}{Sec.}{Secs.}
\Crefname{section}{Sec.}{Secs.}
\crefname{appendix}{Apx.}{Apxs.}
\Crefname{appendix}{Apx.}{Apxs.}
\crefname{subsection}{Sec.}{Secs.}
\Crefname{subsection}{Sec.}{Secs.}
\crefname{subappendix}{Apx.}{Apxs.}
\Crefname{subappendix}{Apx.}{Apxs.}
\crefname{subsubappendix}{Apx.}{Apxs.}
\Crefname{subsubappendix}{Apx.}{Apxs.}
\crefname{figure}{Fig.}{Figs.}
\Crefname{figure}{Fig.}{Figs.}
\crefname{table}{Tab.}{Tabs.}
\Crefname{table}{Tab.}{Tabs.}
\crefname{equation}{Eq.}{Eqs.}
\Crefname{equation}{Eq.}{Eqs.}
\creflabelformat{equation}{#2#1#3}

\newcommand{\stackcell}[1]{\begin{tabular}[c]{@{}l@{}}#1\end{tabular}}

\begin{document}

\runningtitle{A Theory-Grounded Approach to Personalized Suicide-Risk Forecasting}
\runningauthor{Yacoby et al.}

\addtocontents{toc}{\protect\setcounter{tocdepth}{-1}}  

\twocolumn[
\aistatstitle{Latent Similarity Gaussian Processes: A Theory-Grounded Approach to Personalized Suicide-Risk Forecasting for Clinical Decision-Support}
\aistatsauthor{
  Yaniv Yacoby \And
  Weiwei Pan \And
  Hope Neveux \And
  Taylor C.~McGuire
}
\aistatsaddress{
  Wellesley College \\ \texttt{yy109@wellesley.edu} \And
  Harvard University \\ \texttt{weiweipan@g.harvard.edu\phantom{AAA}} \And
  Harvard University \\ \texttt{hopeneveux@fas.harvard.edu} \And
  Harvard University \\ \texttt{\phantom{AAA}tmcguire@g.harvard.edu}
}
\ifpreprintflag
\aistatsauthor{
  Franchesca Castro-Ramirez \And
  Anushka R.~Patel \And
  Matthew K.~Nock
}
\aistatsaddress{
  Harvard University \\ \texttt{framirez@g.harvard.edu} \And
  Harvard Medical School \\ Massachusetts General Hospital \\ \texttt{apatel132@mgh.harvard.edu} \And
  Harvard University \\ \texttt{nock@wjh.harvard.edu}
}
\fi 
]

\begin{abstract}
Forecasting suicide risk is difficult due to the high heterogeneity of patients and the low base rate of suicide-related events (SREs). We present Latent Similarity Gaussian Processes (LSGPs), which embed patients in a continuous latent space to jointly model similarity and forecast risk. By selectively drawing information from latent peers, LSGPs better capture individualized risk trajectories, generalizing nomothetic (pooled), idiographic (per-patient), and hierarchical frameworks. Our contributions are: (1) an identifiable two-channel Similarity Kernel; (2) proof that the standard model-fitting algorithm, mean-field variational inference, collapses LSGPs to nomothetic models, along with a fix; and (3) empirical results on intensive longitudinal suicide data showing LSGPs outperform nomothetic, idiographic, and hierarchical models for next-week risk forecasting, with the largest gains in forecasting first-occurrence SREs. 
\end{abstract}

\section{Introduction} \label{sec:intro}

Machine Learning (ML) systems for suicide-risk forecasting face dual demands: providing accurate forecasts and actionable insights to clinicians. 
Existing approaches largely fall between two extremes: \emph{nomothetic} (population-level/pooled) and \emph{idiographic} (per-patient/individualized) models\ifpreprintflag\footnote{We adopt the standard clinical psychology terms ``nomothetic'' and ``idiographic'' to ground our framework in clinical practice and unify overlapping ML terminology.}\else~\fi \citep[e.g.,][]{wang2023idiographic,pigoni2024machine}.
Hierarchical (mixed-effects) models bridge these extremes by decomposing forecasts into a shared population trend (fixed effects) and patient-specific deviations (random effects).
All three model classes make implicit assumptions about the data that limit their clinical utility.

\textbf{Nomothetic models assume that factors relevant to suicide risk are either observed or safely ignorable.}
By sharing a single global trend across all patients, nomothetic models treat any unobserved, patient-specific variation as uncorrelated noise. 
Their use therefore implicitly assumes that every meaningful driver of suicide risk has been observed; however, in clinical reality, pathways to suicide are heterogeneous~\citep{van2000pathways,coppersmith2024heterogeneity,kaurin2022integrating}, shaped by patient-specific contexts that may be absent from, incompletely recorded in, or difficult to extract from standard study instruments and health records~\citep{madden2016missing}. 
For example, important determinants of near-term risk include acute stressors~\citep{gradus2010acute}, trauma-related responses~\citep{krysinska2010post,kaminsky2015epigenetic}, interpersonal dynamics~\citep{kalesan2018role,orlins2021characteristics}, and changes in social circumstances~\citep{kposowa2001unemployment,madigan2023socioeconomic,stack2021contributing}. 
When an individual's baseline risk depends on unobserved factors, a single global trend may fail to alert clinicians in time to act.

\textbf{Idiographic models assume no commonalities in pathways to suicide.}
Idiographic approaches model each patient independently; in doing so, they discard shared, population-level dynamics.
For example, sleep disturbance~\citep{bernert2017objectively,harris2020sleep}, hopelessness~\citep{weishaar1992hopelessness,riera2024clinical}, and prior suicide attempts~\citep{chan2014predictors,ribeiro2016self} are all known to increase suicide risk. And because suicide attempts occur with a low base rate, idiographic models are prone to overfitting~\citep{hang2025improving}, especially during early monitoring when data is sparse and when insights are most needed.

\textbf{Hierarchical models assume patients represent deviations from a single norm.}
While hierarchical models bridge nomothetic and idiographic extremes, they assume patient trends are random deviations from a single global trend. In reality, patients inhabit a continuous spectrum of dynamic subtypes, governed by distinct, potentially conflicting trends \citep{hang2025improving}. Under sparse event histories---typical in suicide-risk monitoring---this assumption forces risk estimates for patients with no prior attempts to default to the global population trend, which may represent them poorly. This failure is clinically critical, as a substantial proportion of suicide fatalities occur on the first attempt~\citep{jordan2020characteristics}.

\textbf{Modeling patient similarity addresses these shortcomings and may support clinical decision-making and treatment planning.}
Explicitly modeling latent patient similarity overcomes key limitations of existing approaches by enabling cross-patient information sharing: when a patient's data is limited, the model leverages patterns from similar patients to forecast their future risk.
Beyond predictive gains, similarity also offers a foundation for clinician-AI collaboration by supporting case-based reasoning~\citep{sharafoddini2017patient}.
Prior work demonstrates substantial heterogeneity in how at-risk patients respond to treatments, even in major interventions like psychiatric hospitalization~\citep{ross2024estimated}.
Consequently, two patients with the same risk score may benefit from entirely different interventions---such as safety planning~\citep{stanley2018comparison}, lethal-means counseling~\citep{miller2020changes}, or family engagement~\citep{asarnow2017cognitive}. 
By learning patient similarity, we can expose historical precedents that allow clinicians to evaluate which interventions succeeded for similar patients and why, assess whether off-survey context renders these precedents relevant, and adapt care accordingly, narrowing a vast, heterogeneous space of clinical actions down to a few targeted choices.

\textbf{Contributions.} Translating this clinical vision into practice requires a model that meets three demands: (1) \emph{predictive power} via accurate individualized forecasts, (2) a \emph{mathematically well-posed mechanism} for learning patient similarity, and (3) \emph{scalable inference} whose objective does not itself bias the model toward purely nomothetic/idiographic regimes.
We meet these demands as follows:

\textbf{(A) Model Class: Latent Similarity Gaussian Processes (LSGPs).}
LSGPs generalize nomothetic, idiographic, and hierarchical models (\cref{apx:hlm-function-space}) by allowing each patient's forecast to be informed by others in direct proportion to their learned dynamic similarity. Formally, LSGPs are built as Latent Variable Multi-Output Gaussian Processes (LVMOGPs~\citep{dai2017efficient}), equipped with a family of Similarity Kernels with two channels: trend and relative-amplitude similarities. We prove these channels are identifiable: distinct parameterizations of a Similarity Kernel producing the same overall kernel function decompose into \emph{identical} channels, ensuring that clinical interpretations are invariant to model parameterization.

\textbf{(B) Theoretical Analysis of Personalization Collapse in the Standard Model-Fitting Algorithm---Mean-Field Variational Inference (MFVI)---and a Scalable Fix.}
We prove that the standard method for fitting LSGPs at scale (MFVI) can induce \emph{personalization collapse}: even when the model is sufficiently flexible to capture patient heterogeneity, MFVI penalizes non-nomothetic models. We characterize the gap between the mean-field and structured bounds as a Bregman information that vanishes only at a fully nomothetic model.
To avoid this collapse without sacrificing scalability, we adapt the P\'olya-Gamma-augmented scheme~\citep{polson2013bayesian,wenzel2019efficient} that restores conditional Gaussian conjugacy between local and global variables, preserving the coupling MFVI discards while keeping coordinate-ascent updates tractable.

\textbf{(C) Application to Data from an Intensive Longitudinal Suicide-Risk Study.}
We evaluate the LSGP on intensive Ecological Momentary Assessment (EMA) data tracking daily affect and suicidal thoughts with the goal of predicting next-week suicide-related events (SREs)---self-injurious behavior performed with non-zero intent to die or a psychiatric hospital presentation intended to avert a suicide attempt~\citep{nock2026using}. 
LSGPs outperform nomothetic, idiographic, and hierarchical baselines in within-patient forecasting, measured by per-patient average precision.
Their largest gains occur on patients with no SREs in the training set who experience an SRE in the test set. Where idiographic models fail due to sparse positive labels, nomothetic models blur individual dynamics into a cohort average, and hierarchical models collapse directly back to that single global norm, the LSGP selectively informs forecasts by drawing on patients with similar risk trajectories.
Further, we find that the learned similarity channels are not explained by static demographics alone, suggesting that the model isolates novel temporal patterns---a starting point for future clinical validation.

\textbf{Impact Relative to Related Work.}
This work contributes to three areas (expanded in \cref{apx:related-work}):
\emph{(i)~Machine Learning:} Scalable GPLVM-family models \citep[e.g.,][]{titsias2010bayesian,martens2019decomposing,dai2017efficient} typically rely on MFVI \citep[e.g.,][]{hensman2013gaussian,lalchand2022generalised}. While MFVI pathologies are known in other models \citep[e.g.,][]{turner2011two,foong2020expressiveness,yacoby2020failure,coker2022wide,yacoby2022mitigating}, we uncover and formalize a new failure mode where MFVI penalizes LSGPs toward the nomothetic extreme.
\emph{(ii)~Suicide Prevention:} EMA studies primarily predict suicidal thoughts rather than SREs \citep[e.g.,][]{kleiman2017examination,czyz2023ecological,lei2023ecological,wang2023idiographic}, which are rarer and harder to forecast. We show that the LSGP improves per-patient SRE forecasts---particularly for first-time SREs---by ensuring the learned patient similarity reflects genuine data-driven trends rather than inference artifacts.
\emph{(iii)~Implementation Science:} Unlike existing clinical retrieval tools that compare patients based on static feature snapshots or latent embeddings \citep[e.g.,][]{sharafoddini2017patient,oei2021using}, LSGPs learn similarity directly in \emph{function space}---matching patients by \emph{how} their survey responses map to suicide risk. By surfacing historical precedents that share true underlying dynamics rather than feature overlap, LSGPs provide a foundation for future decision-support systems that enable clinicians to synthesize their own contextual knowledge with the model's forecasts via case-based reasoning.

\section{MODEL CLASS} \label{sec:method}

We present the LSGP, an LVMOGP equipped with a \emph{Similarity Kernel} (\cref{def:iskernel}). 
By design, the LSGP quantifies similarity between patients' predictive \emph{functions} through its kernel, which ensures similarity is well-posed and uninfluenced by the latent variance scale. 

\textbf{Notation.}
Let $N$ and $I$ be the number of patients and surveys, respectively. 
Let $x_i \in \mathbb{R}^{D_x}$ be the $i$-th survey for patient $n_i$ at time $t_i$, comprising $0$--$10$ Likert items (affect, suicidal intent/urges, etc.) and survey metadata (e.g., response latency; see \cref{apx:setup-u01}). We predict a binary label $y_i \in \{0, 1\}$ indicating whether patient $n_i$ experienced an SRE within 7 days post-$t_i$. The full dataset is $\mathcal{D} = (X, Y)$, with patient-specific subsets $\mathcal{D}_n = (X_n, Y_n) = (\{x_i \mid n_i = n\}, \{y_i \mid n_i = n\})$ varying in size due to differing response rates.

\textbf{Goal.}
Given $\mathcal{D}$, as well as a \emph{new} survey $x_n^*$, we want to predict whether patient $n$ will experience an SRE, $y_n^*$, during the week that follows.

\textbf{Model.}
Patient heterogeneity naturally motivates the following model:
\begin{align}
    &\Circled{A}\ \ z_n \sim p(z) = \mathcal{N}(0, \mathbb{I}_{D_z}), \\
    &\Circled{B}\ \ \hat{x}_i(z_{n_i}) = [\, x_i^\intercal \;\; z_{n_i}^\intercal \,]^\intercal \in \mathbb{R}^{D_x + D_z}, \\
    &\Circled{C}\ \ F \mid \widehat{X}(Z); \theta \sim \mathcal{N}( m_\theta(X), K_\theta(\widehat{X}(Z), \widehat{X}(Z)) ), \\
    &\Circled{D}\ \ y_i \mid f_i \sim \mathrm{Bernoulli}( \mathrm{sigmoid}(f_i) ).
\end{align}
Here, $\hat{x}_i(z_{n_i})$ appends patient $n_i$'s latent code $z_{n_i}$ to the survey vector $x_i$, which $\widehat{X}(Z)$ stacks row-wise into a design matrix. The kernel $K_\theta(\cdot, \cdot)$ operates on these augmented inputs with hyperparameters $\theta$. The vector $F$ collects latent function values $f_i$ across all observations, and $\mathbb{I}_{D_z}$ is the $D_z$-dimensional identity matrix.
The mean function $m_\theta(X) = [m_\theta(x_1), \dots, m_\theta(x_I)]^\intercal$ applies a code-independent transformation to input features $x_i$. Because SREs are low-prevalence outcomes, we set $m_\theta(x) = b$, where $b \in \theta$ is a learned scalar initialized to the logit of the empirical training prevalence.

\textbf{The Similarity Kernel.}
To ensure similarity between patients is well-posed, we constrain the kernel:
\begin{definition} \label{def:iskernel}
A kernel $K_\theta(\hat{x}, \hat{x}')$ on augmented inputs $\hat{x} = (x, z)$ and $\hat{x}' = (x', z')$ is a \emph{Similarity Kernel} if it factorizes as follows:
\begin{align}
    \underbrace{K^x_\theta(x, x')}_{\substack{\text{within-patient} \\ \text{covariance}}} \cdot \overbrace{\underbrace{\sigma_\theta(z)\,\sigma_\theta(z')}_{\text{amplitude}} \cdot \underbrace{\zeta_\theta(z, z')}_{\substack{\text{between-patient} \\ \text{similarity}}}}^{K^z_\theta(z, z')}, \label{eq:kz}
\end{align}
wherein:
\begin{enumerate}[nosep, label=(\alph*), font=\bfseries]
    \item $K^x_\theta(x, x') \not\equiv 0$ is not the degenerate 0 kernel.
    \item $\sigma_\theta: \mathbb{R}^{D_z} \to \mathbb{R}_{>0}$ is continuous.    
    \item $\zeta_\theta(z, z') := \rho_\theta\big(S \cdot (\varphi_\theta(z) - \varphi_\theta(z'))\big) \in [0, 1]$. 
\end{enumerate}
Within $\zeta_\theta(z, z')$, 
\begin{enumerate}[nosep, label=(\roman*), font=\bfseries]
    \item The \emph{correlation}, $\rho_\theta: \mathbb{R}^{D_z} \to [0, 1]$, is continuous, positive-definite, and peaked at the origin: $\rho_\theta(0) = 1$, and $\rho_\theta(v) < 1$ for $v \neq 0$.
    \item The \emph{importances}, $S = \mathrm{diag}(S_1, \dots, S_{D_z}) \in \theta$, are non-negative.
    \item The \emph{warp}, $\varphi_\theta: \mathbb{R}^{D_z} \to \mathbb{R}^{D_z}$, is continuous, injective, and order-preserving: $z \leq z'$ coordinatewise implies $\varphi_\theta(z) \leq \varphi_\theta(z')$ coordinatewise. 
\end{enumerate}
\end{definition}
In \cref{apx:instantiations}, we show that this framework encompasses the standard stationary kernels and describe new extensions.
By constraining the patient-code kernel to a unit-diagonal correlation matrix scaled by a learned amplitude \citep[cf.][]{heinonen2016non}, the LSGP adapts standard multi-output GP formulations \citep[e.g.,][]{bonilla2007kernel,dai2017efficient} to decompose the coregionalization matrix \citep{alvarez2012kernels} into two identifiable channels: trend and amplitude similarity (\cref{prop:identified}).

\textbf{Quantifying Patient Similarity.}
Evaluated at the patients' codes, Similarity Kernels decompose patient similarity into two $N \times N$ channels, a \emph{trend channel} $A^{\mathrm{trend}}_{nm} := \zeta_\theta(z_n, z_m)$ and an \emph{amplitude channel},
\begin{align*}
    A^{\mathrm{amp}}_{nm} &:= \tfrac{\min(\sigma_\theta(z_n),\, \sigma_\theta(z_m))}{\max(\sigma_\theta(z_n),\, \sigma_\theta(z_m))} = \exp\left(-\left\lvert\log \tfrac{\sigma_\theta(z_n)}{\sigma_\theta(z_m)}\right\rvert\right) ,
\end{align*}
both with entries $\in [0, 1]$ and with unit diagonal.
We evaluate these matrices at the posterior mode, $\operatorname*{arg\,max}_{Z}\, p(Z \mid \mathcal{D})$. To facilitate empirical evaluation, we also define their product, $A^{\mathrm{comb}}_{nm} := A^{\mathrm{trend}}_{nm} \cdot A^{\mathrm{amp}}_{nm} \in [0, 1]$, representing one natural combination of the two.

\textbf{The LSGP dynamically bridges idiographic and nomothetic regimes.}
\ul{Regime 1: Fully Nomothetic (\mbox{$S = 0$ and $\sigma_\theta(z) \equiv \mathrm{const}$}).} Here, $\zeta_\theta \equiv \rho_\theta(0) = 1$, reducing \cref{eq:kz} to $K^x_\theta(x, x')\,\sigma_\theta(z)\,\sigma_\theta(z') = K^x_\theta(x, x')\,\sigma_\theta^2$, resulting in a nomothetic GP.
\ul{Regime 2: Fully Disjoint (\mbox{$S \to \infty$}).}
Off-diagonal trend terms decay to zero ($A^{\mathrm{trend}}_{nm} \to 0$ for $n \neq m$), yielding a block-diagonal kernel corresponding to $N$ independent, idiographic GPs.
\ul{Regime 3: Adaptive Interpolation (\mbox{$0 < S < \infty$}).} 
The model dynamically shares statistical patterns across patients based on inferred similarity, learning the optimal degree of inter-patient sharing through $S$.

\textbf{Identifiability of Similarity.}
Without an identifiable kernel decomposition, scale and similarity parameters trade off arbitrarily, allowing high-amplitude patients to appear deceptively similar to the entire cohort. 
As formalized in \cref{prop:identified}, our kernel structure eliminates this ambiguity: 
while the overall covariance admits an unavoidable global scale factor, both $A^{\mathrm{trend}}$ and $A^{\mathrm{amp}}$ are strictly invariant, ensuring that inferred patient relationships represent stable structural properties of the data rather than parameterization artifacts.
\begin{proposition}[Proof in \cref{apx:identif}] \label{prop:identified}
Let $K_\theta$ and $K_{\theta'}$ be Similarity Kernels (\cref{def:iskernel}).
\textbf{Then} $K_\theta = K_{\theta'}$ \textbf{if and only if} there is a scalar $\lambda > 0$ such that, as functions,
\begin{align}
    K^x_\theta = \lambda\, K^x_{\theta'} , \qquad \sigma_{\theta'} = \sqrt{\lambda}\, \sigma_\theta , \qquad \zeta_\theta = \zeta_{\theta'} .
\end{align}
As a result, when $K_\theta = K_{\theta'}$, we have $A^{\mathrm{trend}} = A'^{\mathrm{trend}}$ and $A^{\mathrm{amp}} = A'^{\mathrm{amp}}$, whatever parameters or functional forms produced them.
\end{proposition}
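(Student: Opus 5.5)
The plan is a direct functional-equation argument that exploits two normalizations built into \cref{def:iskernel}: the unit diagonal of the between-patient similarity, $\zeta_\theta(z,z) = \rho_\theta(S\cdot 0) = \rho_\theta(0) = 1$, and the strict positivity of $\sigma_\theta$. The continuity, positive-definiteness, warp and importance assumptions should not be needed for identifiability itself. The ``if'' direction is immediate: substituting $K^x_\theta = \lambda K^x_{\theta'}$, $\sigma_{\theta'} = \sqrt{\lambda}\,\sigma_\theta$ and $\zeta_\theta = \zeta_{\theta'}$ into \cref{eq:kz} for $K_{\theta'}$ gives
\begin{align*}
K^x_{\theta'}(x,x')\,\sigma_{\theta'}(z)\,\sigma_{\theta'}(z')\,\zeta_{\theta'}(z,z') = \lambda^{-1}K^x_\theta(x,x')\,\lambda\,\sigma_\theta(z)\,\sigma_\theta(z')\,\zeta_\theta(z,z'),
\end{align*}
which is $K_\theta$.

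For ``only if'', I would first restrict the identity $K_\theta = K_{\theta'}$ to the diagonal in the code argument, $z = z'$. Since $\zeta$ equals $1$ there, this yields
\begin{align*}
K^x_\theta(x,x')\,\sigma_\theta(z)^2 = K^x_{\theta'}(x,x')\,\sigma_{\theta'}(z)^2 \quad \text{for all } x, x', z.
\end{align*}
By condition (a), pick $(x_0,x_0')$ with $K^x_\theta(x_0,x_0') \neq 0$. Because $\sigma_\theta,\sigma_{\theta'}>0$, the display forces $K^x_{\theta'}(x_0,x_0')\neq 0$ as well. Dividing then gives
\begin{align*}
\sigma_{\theta'}(z)^2/\sigma_\theta(z)^2 = K^x_\theta(x_0,x_0')/K^x_{\theta'}(x_0,x_0') =: \lambda .
\end{align*}
The left side does not depend on $(x_0,x_0')$ and the right side does not depend on $z$, so $\lambda$ is a constant. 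It is positive as a ratio of squares of positive numbers. Hence $\sigma_{\theta'} = \sqrt{\lambda}\,\sigma_\theta$. Returning to the diagonal identity at any fixed $z$ and dividing by $\sigma_\theta(z)^2>0$ gives $K^x_\theta = \lambda K^x_{\theta'}$ for all $(x,x')$.

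Next I would evaluate the full identity at $(x_0,x_0')$ with arbitrary $z,z'$. Substituting the two relations just obtained into the right-hand side gives
\begin{align*}
K^x_\theta(x_0,x_0')\,\sigma_\theta(z)\,\sigma_\theta(z')\,\zeta_\theta(z,z') = \lambda^{-1}K^x_\theta(x_0,x_0')\,\lambda\,\sigma_\theta(z)\,\sigma_\theta(z')\,\zeta_{\theta'}(z,z').
\end{align*}
Cancelling the nonzero common factor yields $\zeta_\theta = \zeta_{\theta'}$ pointwise. Notably, this does not require identifying $\rho$, $S$ or $\varphi$ individually, which is consistent with the statement only claiming equality of $\zeta$.

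For the consequence, two facts finish it. First, $A^{\mathrm{trend}}_{nm} = \zeta_\theta(z_n,z_m) = \zeta_{\theta'}(z_n,z_m)$. Second, $A^{\mathrm{amp}}$ depends only on the ratio $\sigma(z_n)/\sigma(z_m)$, which is unchanged by the common factor $\sqrt{\lambda}$. One subtlety is that the matrices are evaluated at the posterior mode of $Z$. Since the model's likelihood and prior depend on $\theta$ only through $K_\theta$ (and the code-independent mean), equal kernels give the same posterior over $Z$ and hence the same mode. I read the statement as comparing at the same codes. The main obstacle is essentially the bookkeeping of nonvanishing: making sure we divide only by quantities guaranteed nonzero. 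Condition (a) and the strict positivity in condition (b) are exactly what make this go through.
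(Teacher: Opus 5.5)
Your proposal is correct and follows essentially the same route as the paper's proof. You restrict to $z=z'$ using $\zeta(z,z)=\rho_\theta(0)=1$, then fix a pair with $K^x_\theta\neq 0$ to show that $\sigma_{\theta'}^2/\sigma_\theta^2$ is a constant $\lambda>0$. Finally, you divide the full identity at that pair to get $\zeta_\theta=\zeta_{\theta'}$, and $A^{\mathrm{amp}}$ is invariant because $\sqrt{\lambda}$ cancels in the ratio. Your explicit check that $K^x_{\theta'}(x_0,x_0')\neq 0$ before dividing is a small point the paper leaves implicit. Your aside about the posterior mode goes beyond what the paper addresses, since the paper simply compares the matrices at the same codes.
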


\section{APPROXIMATE INFERENCE} \label{sec:pg}

\textbf{Sparse Variational LSGPs (SV-LSGP).}
Exact LSGP posterior inference is intractable due to the non-conjugate likelihood and non-linear latent variable transformations.
Prior work addresses this via a sparse variational approximation~\citep{titsias2009variational,hensman2013gaussian,hensman2015scalable}, as applied to LVMOGPs~\citep{dai2017efficient,hang2025improving}, conditioning on a small set of $M$ inducing point locations, $W \in \mathbb{R}^{M \times (D_x + D_z)}$, used to ``summarize'' the training inputs, replacing $\Circled{C}$ with:
\begin{align*}
    &\Circled{C.1}\ \ U; W, \theta \sim \mathcal{N}( 0, K_\theta(W, W) ), \\
    &\Circled{C.2}\ \ F \mid U, \widehat{X}(Z); W, \theta \sim \mathcal{N}\big( m_\theta(X) + \Psi(Z) U, \\
    &\qquad\qquad K_\theta(\widehat{X}(Z), \widehat{X}(Z)) - \Psi(Z) K_\theta(\widehat{X}(Z), W)^\intercal \big).
\end{align*}
Here $\Psi(Z) := K_\theta(\widehat{X}(Z), W) K_\theta(W,W)^{-1}$.
$W$ and $\theta$ are then typically learned with MFVI, i.e., with the variational family $q(Z, U; \phi) = q(U; \phi) \prod_{n=1}^N q(z_n; \phi)$~\citep[e.g.,][]{hensman2013gaussian,lalchand2022generalised} (details in \cref{apx:sv}).

\textbf{Pitfalls of MFVI.}
While computationally convenient, MFVI can be biased when local latent variables and global parameters share dependencies.
In Bayesian Neural Networks with Latent Variables~\citep{depeweg2018decomposition}---which share the LSGP's structure (local latent variables and global parameters)---the posterior mode is asymptotically biased, leading MFVI to underfit and miscalibrate uncertainty~\citep{yacoby2022mitigating}.
We therefore adapt the P\'olya-Gamma augmentation scheme~\citep{polson2013bayesian, wenzel2019efficient} to the LSGP. 
This restores conditional Gaussian conjugacy for $U$, yielding a structured variational family via closed-form coordinate-ascent updates, similar to structured stochastic variational inference for conditionally conjugate models~\citep{hoffman2015structured}, enabling the theoretical comparison of our inference method to MFVI (\cref{sec:bias}).

\textbf{P\'olya-Gamma LSGP (PG-LSGP).}
We write the likelihood as an expectation over an auxiliary variable $\omega_i$~\citep{polson2013bayesian},
\begin{align}
\begin{split}
    p(y_i \mid f_i) &= \mathbb{E}_{\omega_i \sim \mathrm{PG}(1, 0)}\!\left[\ell(y_i \mid f_i, \omega_i)\right], \\
    \log \ell(y_i \mid f_i, \omega_i) &= \tfrac{2y_i-1}{2}\, f_i - \tfrac{\omega_i}{2}\, f_i^2 - \log 2.
\end{split} \label{eq:log_ell_main}
\end{align}
Like \citet{wenzel2019efficient}, we introduce $q(\omega_i \mid Z)$ per observation and apply Jensen's inequality to obtain a variational bound,
\begin{align}
    \mathcal{L}_{\mathrm{PG-ELBO}} ={}& \sum_{i} \mathbb{E}_{q}\!\left[\log \ell(y_i \mid f_i, \omega_i)\right] \nonumber \\
    & - \sum_{i} \mathbb{E}_{q(Z)}\!\left[D_{\mathrm{KL}}[q(\omega_i \mid Z) \,\|\, \mathrm{PG}(1,0)]\right] \nonumber \\
    & - \mathbb{E}_{q(Z)}\!\left[D_{\mathrm{KL}}[q(U\mid Z) \,\|\, p(U)]\right] \nonumber \\
    & - D_{\mathrm{KL}}[q(Z) \,\|\, p(Z)] , \label{eq:pg_elbo}
\end{align}
where $\mathbb{E}_{q}$ is under the variational joint $q(Z;\phi)\,q(U\mid Z)\,p(f_i\mid U)\,q(\omega_i\mid Z)$ (derivation in \cref{apx:pg}).

\textbf{Variational Expectation-Maximization (VEM).}
For the PG-LSGP, coordinate-ascent updates over the conjugate variational factors admit exact closed forms:

\textbf{Step 1 (E-step, auxiliary update).} Update the local P\'olya-Gamma factors at the previous sweep's $c_i$:
\begin{align}
    q^{(t)}(\omega_i \mid Z) \leftarrow \mathrm{PG}\big(1,\, c^{(t-1)}_i\big) . \label{eq:pg_c}
\end{align}
\textbf{Step 2 (E-step, inducing-posterior update).} Update the global inducing-value factor, conditioned on $Z$:
\begin{align}
    q^{(t)}(U \mid Z) \leftarrow \mathcal{N}\big(\mu^{(t)}(Z),\, \Sigma^{(t)}(Z)\big) . \label{eq:pg_moments}
\end{align}
The moments for Steps 1 and 2 are calculated via:
\begin{align*}
    \bar\omega^{(t)}_i &= \tfrac{1}{2 c^{(t-1)}_i}\tanh\left(\tfrac{c^{(t-1)}_i}{2}\right), \\
    \bar\Omega^{(t)}(Z) &= \mathrm{diag}\big(\bar\omega^{(t)}_1, \dots, \bar\omega^{(t)}_I\big), \\
    \Sigma^{(t)}(Z)^{-1} &= K_\theta(W,W)^{-1} + \Psi(Z)^\intercal\, \bar\Omega^{(t)}(Z)\, \Psi(Z), \\
    \mu^{(t)}(Z) &= \tfrac12\,\Sigma^{(t)}(Z)\,\Psi(Z)^\intercal \big(2Y - 1 - 2\bar\Omega^{(t)}(Z)\, m_\theta(X)\big), \\
    c^{(t)}_i &= \sqrt{\mathbb{E}_{q^{(t)}(f_i \mid Z)}\big[f_i^2\big]} ,
\end{align*}
where $q^{(t)}(f_i \mid Z)$ is the marginal left by integrating $U \sim q^{(t)}(U \mid Z)$ out of $\Circled{C.2}$.

\textbf{Step 3 (M-step).} Update parameters $\{\theta, W, \phi\}$ via a stochastic gradient step, evaluated at the converged variational distribution $q^{(T)}$:
\begin{align}
    \{\theta, W, \phi\} \leftarrow \{\theta, W, \phi\} + \alpha\, \widehat{\nabla}\, \mathcal{L}_{\mathrm{PG-ELBO}}\big\rvert_{q^{(T)}} , \label{eq:pg_mstep}
\end{align}
where $\widehat{\nabla}$ is a Monte Carlo gradient estimator under reparameterized samples $Z \sim q(Z; \phi)$~\citep{kingma2013auto}.

\textbf{Training Scheme.} For each outer VEM step, Steps 1 and 2 are repeated for $T=15$ sweeps to achieve conditional convergence. 
With mini-batching and at unit step size, those conjugate updates are exactly natural-gradient steps~\citep{wenzel2019efficient}; Step 3 then performs a single, plain gradient M-step. 
Details in \cref{apx:methods}.

\textbf{Variational Family.}
Because $q(\omega_i \mid Z)$ and $q(U \mid Z)$ have closed-form solutions, we only need to parameterize $q(Z; \phi)$. 
We set it to a diagonal Gaussian $q(Z; \phi) = \prod_{n=1}^N \mathcal{N}\big(\mu_{z_n},\, \mathrm{diag}(\sigma^2_{z_n}) \big)$ with moments amortized via a shared encoder, $(\mu_{z_n}, \sigma^2_{z_n}) = g_\phi(\mathcal{D}_n)$. To accommodate variable-length, unordered datasets $\mathcal{D}_n$, we parameterize $g_\phi$ as a DeepSet \citep{zaheer2017deep}, embedding each point $(x, y) \in \mathcal{D}_n$, mean-pooling the embeddings, and decoding the aggregated vector. This guarantees permutation invariance by construction and mirrors conditional neural processes with patients as tasks and their surveys as context sets~\citep{garnelo2018conditional}.

\begin{figure*}[t]
    \centering
    \includegraphics[width=\textwidth]{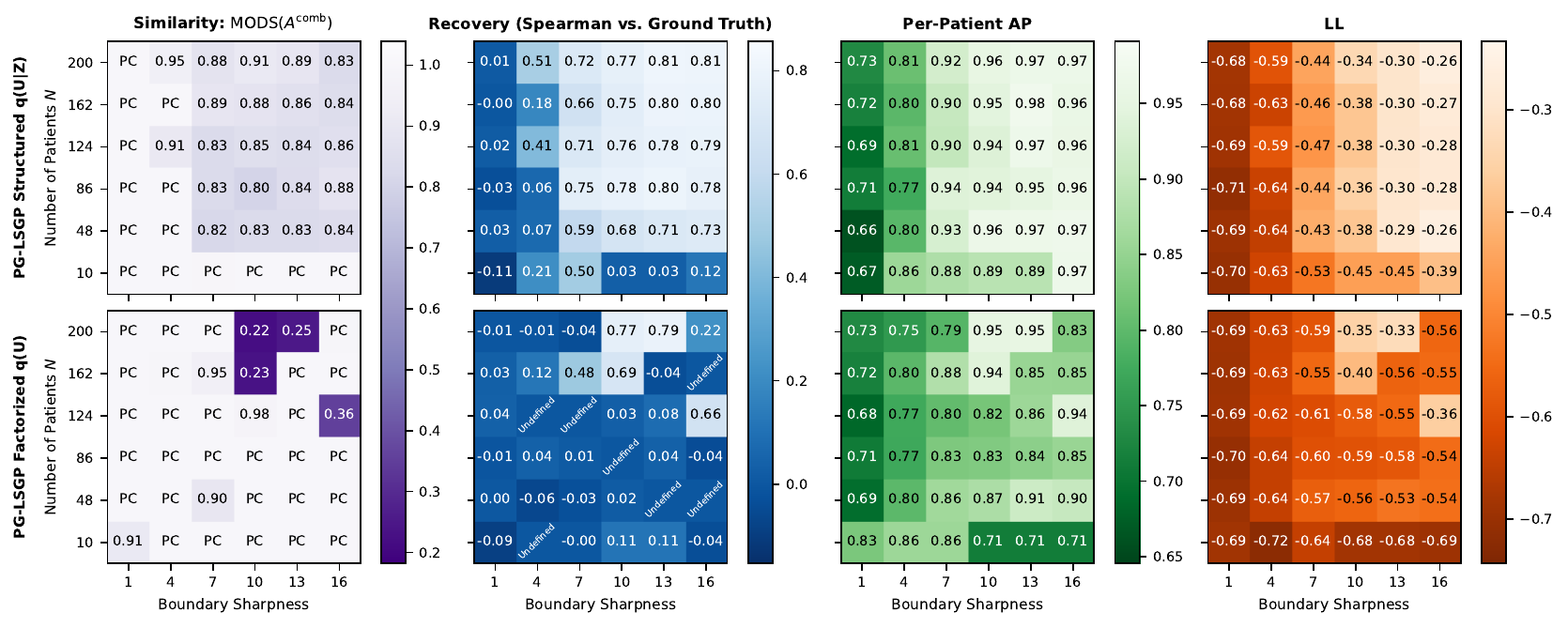}
    \caption{\textbf{Structured Outperforms Factorized PG-LSGP by Learning Patient Similarity.}
    \textbf{Top row:} structured.
    \textbf{Bottom row:} factorized.
    $\mathrm{MODS}$ is the mean off-diagonal similarity (\cref{eq:mods}); cells labeled \textsf{PC} denote personalization collapse ($\mathrm{MODS}(A^{\mathrm{comb}}) = 1.00$ to two decimals); cells labeled \textsf{Undefined} represent rank correlations that are undefined because the learned similarity matrix is constant off the diagonal.
    Recovery is the Spearman rank correlation between the learned and true similarity.
    Details in \cref{sec:synthetic,apx:data}.}
    \label{fig:gap-heatmap}
\end{figure*}

\section{PERSONALIZATION COLLAPSE} \label{sec:bias}

We now show that the commonly used factorization $q(Z, U) = q(Z) \cdot q(U)$ in MFVI~\citep[e.g.,][]{hensman2013gaussian,lalchand2022generalised} introduces a structural bias into the LSGP: it adds a penalty to the objective that vanishes only at a nomothetic model---that is, only at $A^\text{trend} = A^\text{amp} = 1$.
We call this pathology \emph{personalization collapse}.

\textbf{Characterizing the Coupling Gap.}
To quantify the cost of discarding the code-dependent posterior $q(U \mid Z)$, we formulate the variational gap as a Bregman information over natural parameters.

\begin{definition}[\cite{bregman1967relaxation}] \label{def:bregman}
For a strictly convex, differentiable $g$ on a convex domain, the \emph{Bregman divergence} it generates is $B_{g}(x, y) := g(x) - g(y) - \langle \nabla g(y),\, x - y\rangle \ge 0$, with equality if and only if $x = y$.
\end{definition}

\begin{definition}[\cite{banerjee2005clustering}] \label{def:breg_info}
For $x \sim q$ valued in the domain of $g$ (\cref{def:bregman}), the \emph{Bregman information} of $x$ is the least expected divergence to a single representative, $I_g(x) := \min_r \mathbb{E}_q[B_g(x, r)] = \mathbb{E}_q[g(x)] - g(\mathbb{E}_q[x]) \ge 0$, attained at the mean $r = \mathbb{E}_q[x]$~\citep{banerjee2005optimality} and vanishing if and only if $x$ is $q$-a.s.~constant.
\end{definition}

Let $\mathcal{A}$ denote the log-partition function of an $M$-dimensional multivariate Gaussian.
From \cref{eq:pg_moments}, the exact conditional variational posterior $q^*(U \mid Z) = \mathcal{N}(\mu(Z), \Sigma(Z))$ can be written in natural coordinates as $\eta(Z) := \big(\Sigma(Z)^{-1}\mu(Z),\, \Sigma(Z)^{-1}\big)$.

\begin{theorem}[Proof in \cref{apx:proof-gap}] \label{thm:gap}
\textbf{Suppose}:
\begin{enumerate}[nosep, label=(\roman*), font=\bfseries]
    \item The auxiliary factor $q(\Omega \mid Z) = \prod_i q(\omega_i \mid Z)$ and the code posterior $q(Z; \phi)$ are held fixed, so that the two maxima differ only in the $U$-factor;
    \item $\mathbb{E}_{q(Z;\phi)}\lVert\eta(Z)\rVert < \infty$ and $\mathbb{E}_{q(Z;\phi)}[\mathcal{A}(\eta(Z))] < \infty$; 
    \item And $q(U)$ and $q(U \mid Z)$ can attain their optima in \cref{eq:apx_gap} (automatic for full-covariance Gaussians).
\end{enumerate}
\textbf{Then}:
\begin{enumerate}[nosep, label=(\alph*), font=\bfseries]
    \item The coupling gap is the Bregman information of $\eta(Z)$ under $\mathcal{A}$:
    \begin{align}
        \mathrm{GAP} &:= \max_{q(U\mid Z)} \mathcal{L}_{\mathrm{PG-ELBO}} - \max_{q(U)} \mathcal{L}_{\mathrm{PG-ELBO}} \nonumber \\
        &= I_{\mathcal{A}}(\eta(Z)) \;\ge\; 0 . \label{eq:apx_gap}
    \end{align}
    \item And $\mathrm{GAP} = 0$ \textbf{if and only if} $\eta(Z)$ is $q(Z;\phi)$-a.s.\ constant.
\end{enumerate}
\end{theorem}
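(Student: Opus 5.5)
The plan is to isolate the $U$-dependent part of $\mathcal{L}_{\mathrm{PG-ELBO}}$ and write it, for fixed $Z$, as a Gaussian exponential-family objective in $q(U\mid Z)$. With $q(\Omega\mid Z)$ and $q(Z;\phi)$ held fixed (assumption (i)), the terms not involving the $U$-factor are constants common to both maxima, and they cancel in $\mathrm{GAP}$. Conditional on $Z$, the expected log-likelihood $\sum_i \mathbb{E}[\log\ell(y_i\mid f_i,\omega_i)]$ is a quadratic in $U$, because $f_i\mid U$ is Gaussian with mean $m_\theta(x_i)+\Psi_i(Z)U$ and $\omega_i$ enters only through $\bar\omega_i$. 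Combining it with $\log p(U)$, I will write
\[
\mathbb{E}_{p(F\mid U)q(\Omega\mid Z)}\Big[\textstyle\sum_i\log\ell\Big]+\log p(U) = \langle \eta(Z), T(U)\rangle + h(Z) + \text{const},
\]
where $T(U)=(U,-\tfrac12 UU^\intercal)$ are the Gaussian sufficient statistics and $\eta(Z)$ is exactly the natural parameter from \cref{eq:pg_moments}. The $U$-independent remainder $h(Z)$ collects the residual-variance terms of \Circled{C.2} and the prior normalizer. The $U$-dependent part of the bound for a generic $q$ then becomes $\mathbb{E}_{q(Z)}\big[\mathbb{E}_{q(U\mid Z)}\langle\eta(Z),T(U)\rangle + \mathbb{H}[q(U\mid Z)]\big]$ plus terms common to both problems.

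Next I solve both maximizations via the Gibbs/Donsker--Varadhan variational identity. For a density $q(U)$,
\[
\mathbb{E}_q\langle\eta,T\rangle + \mathbb{H}[q] = \mathcal{A}(\eta) - D_{\mathrm{KL}}[q\,\|\,\mathcal{N}_\eta].
\]
In the structured case, $q(U\mid Z)$ may vary with $Z$, so I maximize pointwise. The optimum is $q^*(U\mid Z)=\mathcal{N}_{\eta(Z)}$, and the value is $\mathbb{E}_{q(Z)}[\mathcal{A}(\eta(Z))]$. In the mean-field case, $q(U)$ cannot depend on $Z$. Linearity lets me pull the expectation inside: $\mathbb{E}_{q(Z)}\langle\eta(Z),\mathbb{E}_q T\rangle = \langle\bar\eta,\mathbb{E}_qT\rangle$ with $\bar\eta:=\mathbb{E}_{q(Z)}[\eta(Z)]$. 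The optimum is therefore $\mathcal{N}_{\bar\eta}$, with value $\mathcal{A}(\bar\eta)$. Subtracting gives $\mathrm{GAP}=\mathbb{E}[\mathcal{A}(\eta(Z))]-\mathcal{A}(\mathbb{E}\eta(Z))$, which is $I_{\mathcal{A}}(\eta(Z))$ by \cref{def:breg_info}. Part (b) then follows from strict convexity of $\mathcal{A}$ on the open natural-parameter cone, through the vanishing criterion in \cref{def:breg_info}.

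The main obstacle is making the integrability and domain issues rigorous. The first is that $\bar\eta$ must lie in the natural-parameter domain, i.e.\ $\mathbb{E}[\Sigma(Z)^{-1}]\succ0$. This holds because the domain is a convex cone, each $\Sigma(Z)^{-1}\succeq K_\theta(W,W)^{-1}\succ0$, and the expectation is finite by assumption (ii). The second is justifying the interchange of expectation and inner product, and confirming that both objective values are finite; assumption (ii) together with (iii) covers this. I also have to check that the $h(Z)$ and $q(\Omega)$ terms are genuinely identical in the two problems, so that only the $U$-factor contributes to the difference.
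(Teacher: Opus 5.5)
Your proposal is correct and matches the paper's proof in substance: the structured optimum is the Gaussian with natural parameter $\eta(Z)$, linearity of $\log q_\nu(U)$ in $\nu$ makes the mean-field optimum the Gaussian at $\bar\eta = \mathbb{E}_{q(Z;\phi)}[\eta(Z)]$, and the difference of optimal values is $I_{\mathcal{A}}(\eta(Z))$, with (b) following from strict convexity of $\mathcal{A}$. The only difference is bookkeeping: you read off the two optimal values directly as $\mathbb{E}[\mathcal{A}(\eta(Z))]$ and $\mathcal{A}(\bar\eta)$, which gives the Jensen-gap form in \cref{def:breg_info}, whereas the paper writes the gap as $\mathbb{E}_{q(Z;\phi)} D_{\mathrm{KL}}[\bar q(U)\,\|\,q^*(U\mid Z)]$ and identifies each KL with $B_{\mathcal{A}}(\eta(Z),\bar\eta)$ (under the paper's normalization of $\mathcal{A}$ your Gibbs identity omits the constant $\tfrac{M}{2}\log 2\pi$, which cancels in the gap).
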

Thus, the optimal MFVI objective is the structured objective, penalized by GAP: at every $(\theta, W, \phi)$,
\begin{equation} \label{eq:mf_split}
    \max_{q(U)} \mathcal{L}_{\mathrm{PG-ELBO}} = \max_{q(U \mid Z)} \mathcal{L}_{\mathrm{PG-ELBO}} - I_{\mathcal{A}}(\eta(Z)).
\end{equation}
This exposes a fundamental tension in factorized inference: the structured objective rewards the code-dependence needed to express the data, while $I_{\mathcal{A}}(\eta(Z))$ penalizes it.
Next, we prove that driving $I_{\mathcal{A}}(\eta(Z)) \to 0$ results in a fully nomothetic model.

\textbf{Driving GAP$\to$0 Erases Patient Heterogeneity.}
Split each inducing location into input and latent blocks, $W = [\,W^x \mid W^z\,]$, so $w^x_m \in \mathbb{R}^{D_x}$ and $w^z_m \in \mathbb{R}^{D_z}$ are the two halves of inducing point $m$'s location.

\begin{theorem}[Proof in \cref{apx:proof-oversim}] \label{thm:oversim}
Call an inducing point $m$ ``live'' if $K^x_\theta(x_i, w^x_m) \neq 0$ for some $i$.
\textbf{Suppose} the assumptions of \cref{thm:gap} hold, and:
\begin{enumerate}[nosep, label=(\roman*), font=\bfseries]
    \item $W$ is fixed, and $K_\theta(W, W)$ is nonsingular (as the definition of $\Psi$ presupposes);
    \item $q(Z;\phi)$ has full support on $\mathbb{R}^{N D_z}$; 
    \item The auxiliary factor is held fixed at a \emph{code-independent} $q(\Omega) = \prod_i q(\omega_i)$, as in the fully factorized posterior $q(U)\,q(Z)\,q(\Omega)$, so its P\'olya-Gamma means form a constant diagonal $\bar\Omega^\star = \mathrm{diag}(\bar\omega_1, \dots, \bar\omega_I)$;
    \item $K_\theta$ is a Similarity Kernel (\cref{def:iskernel}); and
    \item For each $d \in \{1, \dots, D_z\}$ there are \emph{live} inducing points $m \neq m'$ with $\varphi_\theta(w^z_m)_d \neq \varphi_\theta(w^z_{m'})_d$.
\end{enumerate}
\textbf{Then}:
\begin{enumerate}[nosep, label=(\alph*), font=\bfseries]
    \item $\mathrm{GAP} = 0$ \textbf{if and only if} $S = 0$ \emph{and} $\sigma_\theta$ is constant.
    \item And whenever $\mathrm{GAP} = 0$, both similarity channels collapse to all-ones, $A^{\mathrm{trend}} = A^{\mathrm{amp}} = 1$, at \emph{any} codes, including the inferred posterior mode.
\end{enumerate}
\end{theorem}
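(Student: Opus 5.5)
By \cref{thm:gap}(b), $\mathrm{GAP}=0$ iff $\eta(Z)$ is $q(Z;\phi)$-a.s.\ constant. The proof therefore reduces to characterizing when the natural parameters of $q^*(U\mid Z)$ do not depend on $Z$. By assumption (iii), $\bar\Omega^\star$ is a fixed diagonal with positive entries, since PG means are strictly positive. Both $\Sigma(Z)^{-1}=K_\theta(W,W)^{-1}+\Psi(Z)^\intercal\bar\Omega^\star\Psi(Z)$ and $\Sigma(Z)^{-1}\mu(Z)=\tfrac12\Psi(Z)^\intercal(2Y-1-2\bar\Omega^\star m_\theta(X))$ then depend on $Z$ only through $\Psi(Z)=K_\theta(\widehat X(Z),W)K_\theta(W,W)^{-1}$. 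Since $W$ and $K_\theta(W,W)$ are fixed and nonsingular, this is in turn only through the cross-covariance $K_\theta(\widehat X(Z),W)$.

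\textbf{Easy direction (if).} Suppose $S=0$ and $\sigma_\theta\equiv c$. Then $\zeta_\theta\equiv\rho_\theta(0)=1$, so $K_\theta(\hat x_i,w_m)=c^2K^x_\theta(x_i,w^x_m)$ for every $Z$. Hence $\Psi$, and therefore $\eta$, is constant, and $\mathrm{GAP}=0$. Part (b) is immediate in the same way: $A^{\mathrm{trend}}_{nm}=\rho_\theta(0)=1$ and $A^{\mathrm{amp}}_{nm}=c/c=1$ at any codes. Combined with (a), this gives (b).

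\textbf{Hard direction (only if).} Suppose $\eta(Z)$ is a.s.\ constant. All maps involved are continuous in $Z$ because $\sigma_\theta$, $\varphi_\theta$ and $\rho_\theta$ are continuous. Together with full support (ii), this makes $\eta$ constant \emph{everywhere} on $\mathbb R^{ND_z}$. The step I expect to be the main obstacle is passing from constancy of $\eta$ back to constancy of individual kernel entries, because $\eta$ only sees $\Psi$ through the quadratic form $\Psi^\intercal\bar\Omega^\star\Psi$ and the linear term $\Psi^\intercal r$. I would first try to avoid inverting this map. The quadratic term is $\sum_i\bar\omega_i\psi_i(z_{n_i})\psi_i(z_{n_i})^\intercal$, where $\psi_i$ is row $i$ of $\Psi$. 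Varying a single patient's code $z_n$ while fixing the others shows that $\sum_{i:n_i=n}\bar\omega_i\psi_i(z_n)\psi_i(z_n)^\intercal$ is constant in $z_n$. For a single observation of that patient, this makes $\psi_i\psi_i^\intercal$ constant, so $\psi_i$ is constant up to sign, and by continuity it is genuinely constant.

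For several observations I need a positivity trick. Taking the trace against $K_\theta(W,W)$ gives $\sum_i\bar\omega_i k_i(z_n)^\intercal K_\theta(W,W)^{-1}k_i(z_n)$. I would then combine this with the linear term, or argue entrywise along a live inducing point, to conclude that each $k_i(z)_m:=K^x_\theta(x_i,w^x_m)\,\sigma_\theta(z)\sigma_\theta(w^z_m)\,\zeta_\theta(z,w^z_m)$ is constant in $z$. If this fails, I would fall back to strengthening the claim: show that constancy of $\Psi\Psi^\intercal$-type Gram data plus continuity forces constancy of $\Psi$ up to a fixed orthogonal transform, and handle the sign or rotation by connectedness.

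\textbf{Extracting $S$ and $\sigma_\theta$.} Once $k_i(z)_m$ is constant in $z$ for each live $m$ (choosing an $i$ with $K^x_\theta(x_i,w^x_m)\neq0$), the function $\sigma_\theta(z)\,\rho_\theta\big(S(\varphi_\theta(z)-\varphi_\theta(w^z_m))\big)$ is constant in $z$. Evaluate at $z=w^z_m$, which is allowed since $z$ ranges over all of $\mathbb R^{D_z}$. There $\rho_\theta=1$ attains its maximum, so $\sigma_\theta(z)\rho_\theta(\cdot)=\sigma_\theta(w^z_m)$ for all $z$. Now take the ratio of this identity for two live points $m\neq m'$. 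If $S_d>0$ for the coordinate $d$ given by (v), pick $z=w^z_m$. The argument of $\rho_\theta$ for $m'$ is nonzero because $\varphi_\theta(w^z_m)_d\neq\varphi_\theta(w^z_{m'})_d$, so $\rho_\theta<1$ there, while the $m$-factor equals $1$. This yields $\sigma_\theta(w^z_m)<\sigma_\theta(w^z_m)\cdot\sigma_\theta(w^z_{m'})/\sigma_\theta(w^z_{m'})$ after normalizing, a contradiction. Hence $S=0$. Then $\rho_\theta\equiv1$, and the identity reads $\sigma_\theta(z)=\sigma_\theta(w^z_m)$ for all $z$, so $\sigma_\theta$ is constant.
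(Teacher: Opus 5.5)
Your reduction to constancy of $\eta(Z)$, the easy direction, part (b), and the upgrade from a.s.\ constancy to constancy everywhere all match the paper. The hard direction, however, has a genuine gap at the step you flagged yourself: you never get from constant $\eta(Z)$ to constancy of the individual entries $K^z_\theta(z, w^z_m)$ for live $m$.

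Your proposed routes do not close it.
\begin{itemize}
\item The trace against $K_\theta(W,W)$ is a single scalar identity. Neither it nor the linear term $\Psi^\intercal(2Y-1-2\bar\Omega^\star m_\theta(X))$ isolates individual entries.
\item The orthogonal-transform fallback fails. Constancy of $\Psi^\intercal\bar\Omega^\star\Psi$ only determines $\bar\Omega^{\star 1/2}\Psi(Z)$ up to an $I\times I$ orthogonal $Q(Z)$. Connectedness removes discrete ambiguities such as a sign, not a continuous family of rotations.
\end{itemize}
The multi-observation case looks hard only because you work in the $\Psi$ basis. There each row $\psi_i = k_i^\intercal K_\theta(W,W)^{-1}$ mixes all inducing points, so the rank-one terms $\bar\omega_i\psi_i\psi_i^\intercal$ seem to need disentangling.

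The missing idea is to undo that mixing. Since $K_\theta(W,W)$ is fixed and nonsingular, $\Psi(Z)^\intercal\bar\Omega^\star\Psi(Z)$ is constant if and only if $G(Z) := K_\theta(\widehat X(Z),W)^\intercal\bar\Omega^\star K_\theta(\widehat X(Z),W)$ is. In this basis the product structure of the Similarity Kernel makes the code dependence a common per-column factor: $[K_\theta(\widehat X(Z),W)]_{im} = K^x_\theta(x_i,w^x_m)\,K^z_\theta(z_{n_i},w^z_m)$. Now set all codes equal to an arbitrary $z$, which is legitimate once $\eta$ is constant everywhere. The diagonal entry becomes $G_{mm} = b_m\,K^z_\theta(z,w^z_m)^2$, with $b_m = \sum_i \bar\omega_i K^x_\theta(x_i,w^x_m)^2 > 0$ for live $m$. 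Because $K^z_\theta \ge 0$, $K^z_\theta(z,w^z_m)$ is then constant in $z$, with no sign ambiguity.

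Your one-patient-at-a-time variation also works in this basis. The diagonal entry becomes $K^z_\theta(z_n,w^z_m)^2\sum_{i:n_i=n}\bar\omega_i K^x_\theta(x_i,w^x_m)^2$ plus a constant, and liveness supplies a patient with a nonzero sum. This diagonal-entry argument is the paper's Step 1.

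From there your extraction of $S$ and $\sigma_\theta$ goes through, with one fix. Evaluating only at $z = w^z_m$ gives $\sigma_\theta(w^z_m)\,\zeta_\theta(w^z_m,w^z_{m'}) = \sigma_\theta(w^z_{m'})$, i.e.\ $\sigma_\theta(w^z_{m'}) < \sigma_\theta(w^z_m)$, which is not a contradiction. You also need the identity for $m$ at $z = w^z_{m'}$, which gives $\sigma_\theta(w^z_m) < \sigma_\theta(w^z_{m'})$. The paper equivalently multiplies the two evaluations and uses symmetry to get $\zeta_\theta(w^z_m,w^z_{m'})^2 = 1$.
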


\textbf{Enriching $q(Z)$ Alone Cannot Prevent Personalization Collapse.} 
Because the gap arises from enforcing conditional independence between $U$ and $Z$, \cref{thm:oversim} holds for \emph{any} non-degenerate, full-support $q(Z)$, including the \emph{exact} posterior $p(Z \mid \mathcal{D})$.

\textbf{The Trade-Off Between Data Fit and GAP.}
We conjecture that when the data carries little predictive signal---e.g.~when a cohort is small or its decision boundary is diffuse---the ELBO can be maximized by collapsing personalization ($S \to 0$, $\sigma_\theta \to \mathrm{const}$), setting $\mathrm{GAP} = 0$ at the cost of erasing patient dynamics. 
We demonstrate this empirically in \cref{sec:synthetic}.

\begin{figure*}[t]
    \centering
    \includegraphics[width=\textwidth]{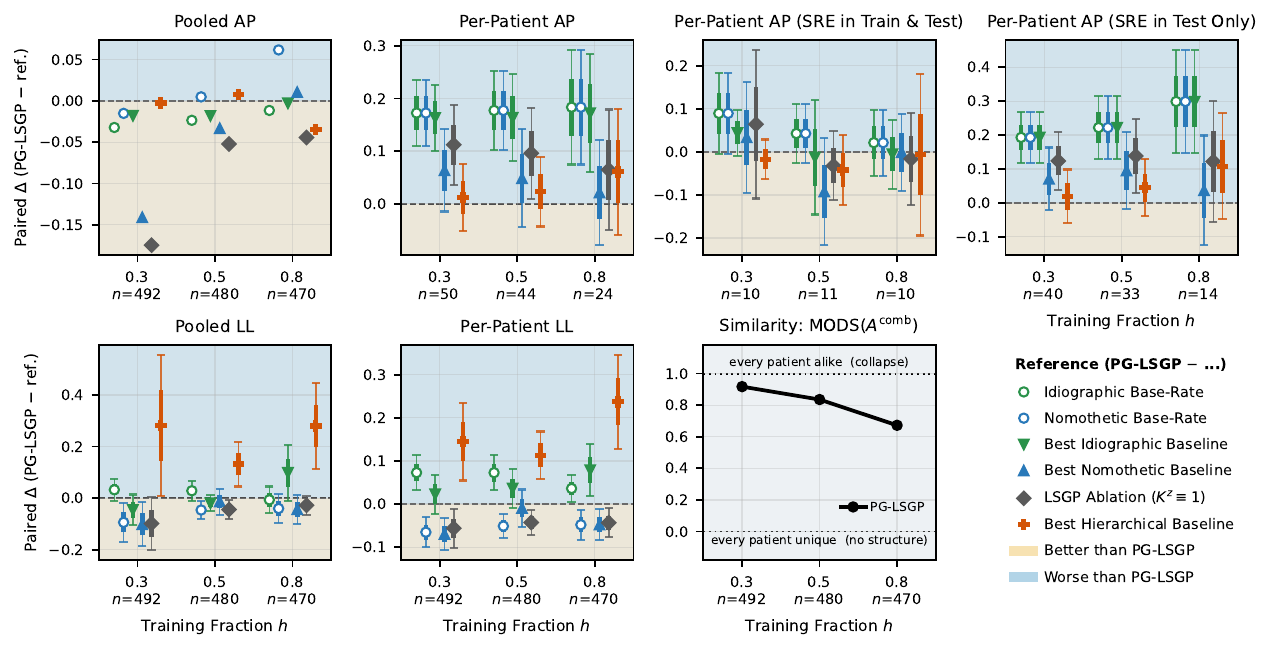}
    \caption{\textbf{Paired per-patient comparison: PG-LSGP outperforms baselines on per-patient AP.} Markers denote mean patient-level differences (LSGP minus reference); bars denote $\pm 1$ and $ \pm 2$ paired standard errors, for $n$ patients. Pooled AP (non-decomposable) is shown without bars, while the MODS panel confirms the LSGP avoids collapse across all $h$. The LSGP gains concentrate in test-only event cases (top-right panel).}
    \label{fig:q7}
\end{figure*}

\section{RESULTS: SYNTHETIC DATA} \label{sec:synthetic}

So far, we showed that the similarity an LSGP recovers is identifiable (\cref{prop:identified}) and that the factorized variational family carries a penalty that vanishes only when that similarity collapses (\cref{thm:gap,thm:oversim}). We now show that, on synthetic data with known ground truth, we do indeed recover the true similarity.

Note that we prove \cref{thm:oversim} for a code-independent auxiliary $q(\Omega)$ (assumption \textbf{(iii)}) and a fixed $W$ (assumption \textbf{(i)}); in the empirical analysis, we relax both assumptions---using the code-conditional $q(\Omega \mid Z)$ (derivation in \cref{apx:pg-factorized}) and freely learned inducing points---and \emph{still} observe personalization collapse.

\textbf{Setup.}
We evaluate $36$ synthetic cohorts from the ``Hemisphere'' process (\cref{apx:data-synthetic}), where unique patient classification boundaries are sampled on a 1D manifold. The cohorts consist of six cohort sizes ranging from $N = 10$ to $200$, crossed with six levels of steepness of the decision surface around the decision boundary ($\gamma = 1$ to $16$), called ``boundary sharpness.''
This evaluates the performance of methods across different amounts of predictive signal in the data.
Across 3 random restarts per cohort, we fit PG-LSGPs with $q(U \mid Z)$ vs.~$q(U)$, selecting the best of each across the restarts via validation log-likelihood (LL). We report Mean Off-Diagonal Similarity of $A^{\mathrm{comb}}$, $\mathrm{MODS}(A^{\mathrm{comb}})$, to assess collapse, ground-truth similarity recovery via the Spearman rank correlation between the learned and true similarity, and held-out LL and per-patient Average Precision (AP). Details in \cref{apx:data}.

\textbf{Result \ref{sec:synthetic}.1: Under Low Signal, the Structured Model Becomes Nomothetic.}
First, \cref{fig:gap-heatmap} confirms our conjecture (\cref{sec:bias}): when the likelihood provides minimal signal (e.g., small cohorts or diffuse decision boundaries), structured and factorized bounds perform comparably; both reduce to the nomothetic model. This adaptive behavior is desirable, as pooling data across patients is optimal under weak signal. 

\textbf{Result \ref{sec:synthetic}.2: Structured PG-LSGP Outperforms the Factorized Baseline by Learning Patient Similarity as Signal Strengthens.}
As shown in \cref{fig:gap-heatmap}, as the signal strengthens (higher $N$ and/or $\gamma$), the structured PG-LSGP quickly prevents personalization collapse, better recovers ground-truth patient similarity, and achieves better per-patient AP and LL.

\begin{figure*}[t]
    \centering
    \includegraphics[width=\textwidth]{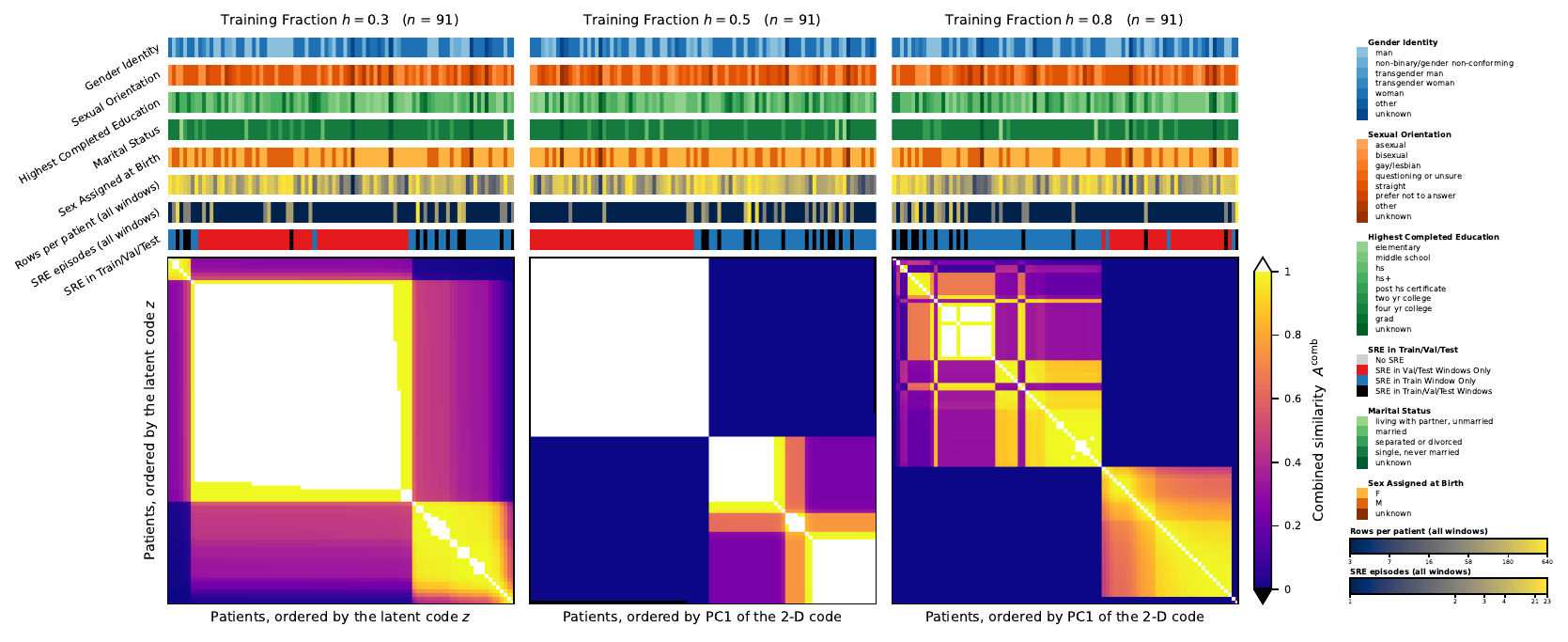}
    \caption{\textbf{Learned combined similarity ($A^{\mathrm{comb}}$) for patients with an SRE anywhere in their record.} Each panel shows the variant with the highest validation per-patient AP, at the restart selected on validation LL, for its $h$. Patients are ordered by the first principal component of their inferred latent codes $z_n$, with annotation strips displaying demographic categories, total survey counts, and numbers of SRE episodes. Full cohort in \cref{fig:heatmap-extended}.}
    \label{fig:heatmap}
\end{figure*}

\section{RESULTS: REAL DATA} \label{sec:results}

We now apply the LSGP to EMA data that tracks suicidal thoughts and behaviors.
We confirm that the LSGP is able to combine nomothetic trends with idiographic dynamics, leading to improved per-patient forecasting performance by inferring similarities.

\textbf{Setup.}
We evaluate on EMA data ($N = 497$ patients, $76{,}798$ surveys; 3-month, 6-times-daily protocol), retaining all patients ($81.7\%$ SRE-free) to mirror clinical deployment. Timelines are split across per-patient training fractions $h \in \{0.3, 0.5, 0.8\}$, with the remainder divided equally into validation and test windows. Performance is evaluated primarily via \emph{per-patient} AP (justification in Result~\ref{sec:results}.1), alongside pooled AP, log-likelihood (LL), and $\mathrm{MODS}(A^{\mathrm{comb}})$. We selected all hyperparameters via validation per-patient AP, using validation LL to select intra-model restarts (5 per variant) to assess convergence (\cref{apx:setup-u01}). Baselines include idiographic linear variants (logistic regression, variational Bayes, empirical Bayes), nomothetic linear/non-linear baselines (gradient-boosted trees, random forests), hierarchical (mixed-effects) logistic regression with random intercepts and/or slopes, base-rate predictors, and a similarity-ablated LSGP ($K^z_\theta \equiv 1$). Details in \cref{apx:data}.


\textbf{Result \ref{sec:results}.1: LL Fails to Select Models Capable of Forecasting Within-Patient Risk.}
As shown in \cref{fig:q7}, the Nomothetic Base-Rate baseline---which outputs a constant prediction equal to the training prevalence---is among the highest performers on pooled and per-patient LL across all $h$; however, it performs poorly on all remaining metrics because a constant predictor cannot capture individualized risk. Under severe class imbalance ($96\%$ event-free surveys), LL prioritizes static base-rate calibration over learning dynamic, within-patient risk trajectories. These observations are consistent with existing work that identifies shortcomings of LL for task-specific model selection~\citep{yao2019quality,deshpande2024are}. \ul{We therefore use validation per-patient AP for model selection, which remains robust under extreme class imbalance and avoids disproportionately weighting individuals with higher survey volume.}
We still use validation pooled LL to select among random restarts within each model, since LL confirms inference successfully converged according to our modeling assumptions---a necessary check until future work establishes better model selection metrics for this regime (see \cref{sec:future}).

\textbf{Result \ref{sec:results}.2: LSGP Improves Within-Patient Risk Forecasting.}
\Cref{fig:q7} shows that the LSGP outperforms all baselines in overall per-patient AP across every $h$. Comparing the LSGP to its similarity-ablated variant ($K^z_\theta \equiv 1$) isolates the performance benefit of learning patient similarity: the gain is on the ``SRE in Test Only'' subgroup---patients without training-period SREs who experience test-period events. On these patients, idiographic models collapse to the chance-level floor due to a lack of positive training signals, while nomothetic models ignore individual trajectories.

\textbf{Result \ref{sec:results}.3: Global vs.~Per-Patient Forecasting Trade-Offs.}
We hypothesize the trade-off between pooled and per-patient AP stems from our LSGP mean function, which lacks a patient-specific base rate. In future extensions, we hope to introduce a personalized prior mean and incorporate baseline prevalence into the kernel as another similarity channel, carefully adapted to maintain our identifiability guarantees.

\textbf{Result \ref{sec:results}.4: Recovering Uncollapsed Patient Similarity.}
Inspecting $A^{\mathrm{comb}}$ across the full cohort (\cref{fig:heatmap-extended}) and the subset of patients with at least one SRE (\cref{fig:heatmap}) reveals two distinct groups: those with and those without training-window SREs.
This is expected: without positive training labels to delineate individual decision boundaries, unobserved risk profiles remain less differentiated.
Crucially, however, the LSGP identifies similarities between some patients with and without training-window SREs---leveraging this similarity to improve forecasts for individuals whose first SRE occurs in the test window.
Ultimately, the learned similarity captures nuanced risk heterogeneity and may be clinically useful beyond predictive gains.

\textbf{Result \ref{sec:results}.5: Learned Similarity Is Not Explained by Demographics, Event Rate, or Survey Volume.}
In \cref{fig:heatmap}, patients are sorted by inferred latent codes $z_n$ alongside demographic annotations and survey counts. Visual inspection shows no demographic clustering. Following \citet{hang2025improving}, we formalize this by treating the off-diagonal similarity $A^{\mathrm{comb}}$ as a weighted graph (\cref{fig:graph}) and measuring its weighted modularity $Q$~\citep{newman2004analysis} against a label-permutation null distribution. We find no statistical evidence of demographic structuring. The learned similarity is not driven by demographics, event frequencies (where the idiographic base-rate baseline trails the LSGP across all $h$; \cref{fig:q7}), or survey completion volume. Thus, the latent space encodes within-patient dynamics.

\section{LIMITATIONS \& FUTURE WORK} \label{sec:future}
We set out to develop a model class extending established frameworks in clinical psychology that satisfies three core requirements: (i) accurate short-term forecasting, (ii) an explicit and interpretable formulation of patient similarity, and (iii) an inference scheme whose objective does not itself bias that similarity toward collapse. 
The LSGP meets these requirements: its within-patient forecasts improve over all baselines, with the largest gains in patients with no training-window SREs who go on to experience an SRE in the test window.
And it does this by inferring patient similarity, exposing relevant historical precedents that may prompt clinicians to investigate unmeasured contextual factors driving risk.
Still, several open problems remain.

\textbf{Training Objectives \& Metrics.}
Because LL fails to differentiate uninformative base-rate models from predictive models on the EMA data, we question its suitability for model selection and its impact on Bayesian inference.
In future work, we plan to investigate alternative training objectives and selection criteria that optimize PAC-Bayes generalization bounds \citep[e.g.,][]{germain2016pac,reeb2018learning} rather than relying on marginal likelihoods.

\textbf{Clinical Grounding Beyond Demographics.}
While static demographic traits do not account for the learned similarity structure, the latent geometry likely captures underlying, unobserved clinical dynamics. In future work, we hope to validate the learned similarities against richer clinical variables to assess whether they correspond to actionable clinical phenotypes that can support clinical decision-making.

\textbf{Accounting for Temporal Dynamics.}
The current LSGP evaluates risk cross-sectionally via latent codes $z_n$, omitting temporal dependencies such as survey autocorrelation. Because psychiatric risk is inherently dynamic, future extensions should explicitly incorporate temporal dynamics across observation timestamps.

\FloatBarrier

\ifpreprintflag

\subsubsection*{Acknowledgments}
Research reported in this publication was supported by the National Institute of Mental Health (NIMH) of the National Institutes of Health (NIH) under Award Numbers R01MH139486, U01MH116928, P50MH132766, and K23MH135211. The content is solely the responsibility of the authors and does not necessarily represent the official views of the National Institutes of Health. The authors are also grateful for funding from the Tommy Fuss Collaborative STAR Fund, the Chet and Will Griswold Suicide Prevention Fund, and a gift from an anonymous donor. 

\fi

\subsection*{AI use statement}

\paragraph{Approach.} We approached AI use in this project as an informal, first-person human-computer interaction experiment, treating our own experiences as a subject of inquiry. We carefully observed how interacting with AI affected our cognitive processes, using these observations to form, test, and refine hypotheses about when AI enhanced versus eroded our productivity, creativity, and sense of meaning in our work. We hope that this reflective approach can generate insights (useful for ourselves, and maybe later for others too) in deciding how, when, and why to use AI in research.

\paragraph{AI Use.} In this work, we used generative AI to assist with software development, writing, and preliminary mathematical derivations under strict oversight:

\begin{itemize}
\item \textbf{Implementation and Analysis.}
We wrote and verified the core model implementations ourselves, and used AI to check them against the relevant papers.
We used AI to help refactor and write the generic experimental pipeline (synthetic data generators, hyperparameter grid search, evaluation pipelines, unit tests, and figure generation) and to run diagnostic checks on experimental outputs, which helped us identify and correct reporting bugs.

\item \textbf{Manuscript and Literature.} We used AI to generate initial draft text from a detailed, granular outline that we wrote; we scrapped or heavily edited nearly all of it. We also used AI to polish prose and to verify related-work coverage. We authored the core arguments, narrative flow, and conceptual structure entirely ourselves.

\item \textbf{Theory.} We used AI for initial proof sketching. Because the AI-generated derivations contained errors or were incomplete, we subsequently rewrote, finished, and verified all proofs ourselves.

\item \textbf{Verification and Accountability.} We audited all AI-assisted text, claims, and code against our experimental logs. We computed every numerical result reported in this manuscript directly from raw experimental artifacts, not from AI-generated text and not from AI syntheses of the raw results. 
\end{itemize}

We take full responsibility for the integrity, accuracy, and content of this manuscript.

{
\bibliographystyle{plainnat}
\bibliography{references}
}

\ifpreprintflag
\else

\section*{Checklist}

\begin{enumerate}

  \item For all models and algorithms presented, check if you include:
  \begin{enumerate}
    \item A clear description of the mathematical setting, assumptions, algorithm, and/or model. \textbf{Yes.}
    \emph{The model and its assumptions are given in \cref{sec:method} and \cref{def:iskernel}; the two variational bounds, their closed-form updates, and the training loops are in \cref{apx:pg,apx:pg-factorized,apx:sv}.}
    \item An analysis of the properties and complexity (time, space, sample size) of any algorithm. \textbf{Yes.}
    \emph{Per-gradient-step cost is given for every variant (\cref{apx:sv,apx:pg}).}
    \item (Optional) Anonymized source code, with specification of all dependencies, including external libraries. \textbf{No.}
    \emph{Dependencies are specified in \cref{apx:data}. We will release the source code upon publication.}
  \end{enumerate}

  \item For any theoretical claim, check if you include:
  \begin{enumerate}
    \item Statements of the full set of assumptions of all theoretical results. \textbf{Yes.}
    \emph{Each result (\cref{prop:identified,thm:gap,thm:oversim}) states its assumptions.}
    \item Complete proofs of all theoretical results. \textbf{Yes.}
    \emph{\Cref{apx:identif} proves \cref{prop:identified}; \cref{apx:bias} proves \cref{thm:gap,thm:oversim}.}
    \item Clear explanations of any assumptions. \textbf{Yes.}
    \emph{\Cref{def:iskernel} enumerates its conditions and is followed by a walk-through of what each one rules out.}
  \end{enumerate}

  \item For all figures and tables that present empirical results, check if you include:
  \begin{enumerate}
    \item The code, data, and instructions needed to reproduce the main experimental results (either in the supplemental material or as a URL). \textbf{Partial.}
    \emph{The synthetic cohorts of \cref{sec:synthetic} are fully specified by the generator and the settings in \cref{apx:setup-hemisphere}, but the EMA data of \cref{sec:results} is clinical data collected under an institutional review board protocol and cannot be released.}

    \item All the training details (e.g., data splits, hyperparameters, how they were chosen). \textbf{Yes.}
    \emph{\Cref{apx:data} gives the splits, every hyperparameter, the search grids for the baselines, and the criterion each selection uses.}
    \item A clear definition of the specific measure or statistics and error bars (e.g., with respect to the random seed after running experiments multiple times). \textbf{Yes.}
    \emph{The metrics are listed in \cref{apx:data}; the error bars of \cref{fig:q7,fig:q8} are paired standard errors across patients, defined in their captions.}
    \item A description of the computing infrastructure used. (e.g., type of GPUs, internal cluster, or cloud provider). \textbf{Not Applicable.}
    \emph{The claims in the paper are agnostic to the computing infrastructure.}
  \end{enumerate}

  \item If you are using existing assets (e.g., code, data, models) or curating/releasing new assets, check if you include:
  \begin{enumerate}
    \item Citations of the creator If your work uses existing assets. \textbf{Yes.}
    \emph{The software we build on is cited in \cref{apx:data}.}
    \item The license information of the assets, if applicable. \textbf{Not Applicable.}
    \item New assets either in the supplemental material or as a URL, if applicable. \textbf{Not Applicable.}
    \emph{We do not have any assets, aside from code, which will be released upon publication.}
    \item Information about consent from data providers/curators. \textbf{Yes.}
    \emph{\Cref{apx:data-ema} describes the consent procedure, the exclusion of participants unable to consent, the compensation, and institutional review board approval of the study protocol.}
    \item Discussion of sensible content if applicable, e.g., personally identifiable information or offensive content. \textbf{Yes.}
    \emph{\Cref{apx:data-ema} describes the de-identification protocol and states what is and is not recoverable from the demographic-annotated similarity figures (\cref{fig:graph,fig:heatmap}).}
  \end{enumerate}

  \item If you used crowdsourcing or conducted research with human subjects, check if you include:
  \begin{enumerate}
    \item The full text of instructions given to participants and screenshots. \textbf{No.}
    \emph{\Cref{apx:data-ema} describes the sampling schedule and the survey content---three suicidal-thoughts items and seventeen affective states on a $0$--$10$ scale---but the verbatim instrument and screenshots are not reproduced, since the data was not collected for this project.}
    \item Descriptions of potential participant risks, with links to Institutional Review Board (IRB) approvals if applicable. \textbf{Yes.}
    \emph{\Cref{apx:data-ema} reports institutional review board approval.}
    \item The estimated hourly wage paid to participants and the total amount spent on participant compensation. \textbf{Not Applicable.}
    \emph{We did not collect any new data as part of this work.}
  \end{enumerate}

\end{enumerate}

\fi

\clearpage
\appendix
\crefalias{section}{appendix}
\crefalias{subsection}{subappendix}
\crefalias{subsubsection}{subsubappendix}

\thispagestyle{empty}
\onecolumn
\aistatstitle{APPENDIX}

\addtocontents{toc}{\protect\setcounter{tocdepth}{2}}  
\tableofcontents

\section{RELATED WORK} \label{apx:related-work}

There are many modeling paradigms for learning latent structure from EMA data alongside a prediction, ranging from classical frameworks like linear mixed-effects models \citep[e.g.,][]{dora2024tutorial} to continuous-time and deep sequence models like Latent SDEs \citep[e.g.,][]{lu2026neural}. 
Rather than broadly comparing across all paradigms, we constrain our model selection using two key criteria. 

First, to ensure clinical accessibility, the model should represent a minimal departure from frameworks familiar to clinical researchers---nomothetic, idiographic, and mixed-effects models---while offering an explicit, interpretable latent structure. 
Second, because our primary goal is fixed-horizon forecasting (predicting next-week risk), framing the task as a classification problem cleanly isolates our core theoretical contributions---e.g., kernel identifiability, personalization collapse of MFVI, etc.---without the confounding computational and inference complexities of sequential models. This establishes a rigorous theoretical foundation that future work can readily extend to more complex sequential frameworks and continuous-time point processes.

These constraints point directly to the GPLVM family \citep[e.g.,][]{titsias2010bayesian,martens2019decomposing,dai2017efficient}---a GP extended with a continuous input latent variable. 
By focusing on this model class, we can cleanly illustrate our core claims regarding continuous patient-similarity representations for risk forecasting, while leaving extensions to sequential models to future work.

\textbf{Existing Modeling Approaches for EMA Data in Clinical Psychology.}
Most work on forecasting suicide risk from EMA data targets suicidal thoughts rather than SREs \citep[e.g.,][]{kleiman2017examination,czyz2023ecological,lei2023ecological,wang2023idiographic}. SREs are particularly challenging to predict due to severe class imbalance and acute stressors layered atop suicidal thoughts.
Historically, nomothetic and idiographic approaches have been common in clinical modeling research due to their intuitive presentation~\citep[e.g.,][]{barlow2009can,beltz2016bridging}. 
But since they make unfavorable trade-offs between pooling and personalization, EMA researchers have increasingly been turning to hierarchical (mixed-effects) models. 
Hierarchical models bridge the gap between nomothetic and idiographic paradigms by decomposing risk trajectories into a shared population trend (fixed effects) and patient-specific deviations (random effects)~\citep{dora2024tutorial}.
In \cref{sec:method,apx:hlm-function-space}, we show that LSGPs generalize both nomothetic and idiographic extremes, as well as hierarchical models, offering four key advantages:
\begin{enumerate}[leftmargin=*,itemsep=3pt,topsep=2pt]
\item \textbf{Explicit Patient Similarity:} Hierarchical models assume that individual patient deviations are completely independent. In contrast, LSGPs directly quantify similarity between patients and use it to improve risk forecasting of SREs.
\item \textbf{Targeted Information Sharing:} Hierarchical models shrink all patient-specific trends toward a single global population mean. For patients with no SREs during the training window, risk predictions default entirely to this generic average. In contrast, LSGPs selectively inform forecasts using the sub-population of patients who share similar dynamic risk trajectories, significantly improving risk predictions for individuals who experience their first SRE during the test period (Result~\ref{sec:results}.2).
That is, under an LSGP, the prediction for a given patient is informed directly by their own observed data, while the model selectively reweighs data from other patients based on the learned similarity of their predictive functions. Finally, under an LSGP, the shrinkage strength itself is per-patient, set by the code-dependent amplitude rather than by a single cohort-wide parameter.
\item \textbf{Non-linear Representation Learning:} Hierarchical random slopes enter linearly through a fixed design matrix. In contrast, LSGPs transform the latent code through any choice of kernel, enabling the model to learn arbitrarily complex, non-linear transformations of risk trajectories over time.
\item \textbf{Capacity Control under High Sparsity:} Fitting random-slope models requires estimating $D_x + 1$ individual parameters per patient directly from sparse longitudinal data. In our cohort ($D_x = 35$), where $81.7\%$ of patients exhibit zero events, estimating this many parameters per individual can lead to overfitting. In contrast, LSGPs map each patient into a low-dimensional latent space (1--2 dimensions in our experiments), replacing dozens of individual parameter estimates with a compact representation of patient similarity.
\end{enumerate}

\textbf{Existing Approaches for Learning Patient Similarity from EMA Data in Clinical Psychology.}
Methods for inferring patient similarity in EMA data generally fall into two categories: (1) feature-based clustering on summary statistics~\citep[e.g.,][]{kleiman2018digital}, and (2) parametric latent variable modeling~\citep[e.g.,][]{nagin2024recent,dora2024tutorial}.
Summary-statistic approaches collapse EMA histories into aggregate metrics (e.g., means or variances) independently of risk forecasting. This discards temporal nuance: two patients with identical summary statistics may exhibit vastly different risk dynamics---such as a steady decline versus extreme volatility---that imply fundamentally different near-term SRE risk. 
Parametric latent models structure patient heterogeneity but restrict representations to discrete subgroup categories or impose rigid functional forms that struggle to capture complex, non-linear EMA patterns.
In contrast, we frame next-week risk prediction by modeling patient similarity directly within the predictive function space via a continuous latent process. This enables sharing statistical strength across patients with similar non-linear profiles without collapsing EMA histories into aggregate metrics, forcing discrete subtyping, or assuming restrictive parametric curves.

\textbf{Leveraging Similarity in Clinical Decision Support.}
Clinical decision support systems increasingly leverage patient similarity to retrieve historical precedents for case-based reasoning~\citep{sharafoddini2017patient}. 
Existing approaches broadly fall into two categories. 
The first calculates similarity over static health records using fixed distance metrics or domain heuristics \citep[e.g.,][]{sharafoddini2017patient,fang2021patient,oei2021using}.
The second uses metric learning or deep architectures to map longitudinal feature trajectories into latent embeddings on which similarities are computed \citep[e.g.,][]{che2017rnn,suo2018multi,zhu2019measuring,landi2020deep}. 
However, both paradigms evaluate similarity between two patients, $i$ and $j$, in the input feature space ($x_i \approx x_j$) or latent embedding space ($z_i \approx z_j$), rather than in function space ($f_i \approx f_j$). 
For dynamic outcomes like suicide risk, matching feature values or embedding overlaps is insufficient.
Function-space similarity instead captures \emph{how} the inputs, $x$---like daily EMA survey responses---dynamically drive changes in risk $y = f(x)$. That is, two patients may present with identical symptom scores ($x_i \approx x_j$), yet these scores may drive suicide risk differently ($f_i(x_i) \neq f_j(x_j)$): for example, an acute spike in interpersonal conflict might increase suicide risk in one patient while leaving another relatively stable. 
The LSGP addresses these limitations by explicitly capturing function-space similarity while providing formal identifiability guarantees.

\textbf{Inference Pathologies in GPLVM Family Models.}
Scalable inference for the GPLVM family typically relies on MFVI, factorizing the variational posterior across the inducing variables and the local latent variables~\citep[e.g.,][]{hensman2013gaussian,lalchand2022generalised} (the collapsed bound of \citet{titsias2010bayesian} instead integrates the inducing variables out analytically, which is possible for a Gaussian likelihood but not for the Bernoulli likelihood here). 
However, the specific inference pathologies arising from MFVI in GPLVMs remain uncharacterized, despite a broad literature showing pathologies of MFVI in other models \citep[e.g.,][]{turner2011two,foong2020expressiveness,yacoby2020failure,coker2022wide}. 
Most closely related to the pathology we expose here is the work by \citet{yacoby2022mitigating} on Bayesian Neural Networks with Latent Variables (BNN+LVs)---the neural network analog of GPLVMs. 
In BNN+LVs, MFVI drives inference toward an asymptotically biased posterior mode, resulting in poor model fit, generalization, and uncertainty quantification. 
While this pathology has been demonstrated in neural architectures, its impact on non-parametric GP-based models has not been characterized.
In this work, we give an exact, closed-form characterization of the penalty incurred by MFVI for LSGPs and show that minimizing it collapses the LSGP to a nomothetic model.
In future work, we hope to translate these findings from the LSGP to the broader GPLVM family.

\section{HIERARCHICAL MODELS AS GAUSSIAN PROCESSES} \label{apx:hlm-function-space}

We now recall the well-known result that hierarchical (mixed-effects) models can be represented as GPs, using this relationship to highlight the exact points where LSGPs differ.

\textbf{Hierarchical (Mixed-Effects) Model.}
Let $\beta_0 \in \mathbb{R}$ and $\beta \in \mathbb{R}^{D_x}$ denote the shared fixed-effect bias and slopes.
Each patient $n$ carries a random bias $z_{n,0} \in \mathbb{R}$ and, in the \emph{intercept+slopes} model, random slopes $z_{n,1:D_x} := (z_{n,1}, \dots, z_{n,D_x}) \in \mathbb{R}^{D_x}$; the \emph{intercept} model omits the random slopes, so all formulas below specialize to it by deleting the slope terms.
We write $z_n$ for patient $n$'s random effects and $\Sigma_\tau := \operatorname{diag}(\tau_1^2, \dots, \tau_{D_x}^2)$ for the slope variance components. The generative process is:
\begin{align}
\beta_0 &\sim \mathcal{N}(0, \sigma_{\mathrm{fe}}^2); \qquad \beta \sim \mathcal{N}(0, \sigma_{\mathrm{fe}}^2\, \mathbb{I}_{D_x}); \\
\log \tau_k &\sim \mathcal{N}(0, \sigma_{\mathrm{vc}}^2), && k = 0, \dots, D_x; \\
z_{n,0} &\sim \mathcal{N}(0, \tau_0^2); \qquad z_{n,1:D_x} \sim \mathcal{N}(0, \Sigma_\tau), && n = 1, \dots, N; \\
y_i \mid \beta, z &\sim \mathrm{Bernoulli}\Big( \mathrm{sigmoid}\big( f_{n_i}(x_i) \big) \Big),    
\end{align}
where the logit function for patient $n$ makes the shared trend and the patient's deviation explicit:
\begin{equation}
    f_n(x) := \underbrace{\beta_0 + \beta^\top x}_{\text{fixed effects}} + \underbrace{z_{n,0} + z_{n,1:D_x}^\top x}_{\text{random effects}} . \label{eq:hlm-weight-space}
\end{equation}

\textbf{Hierarchical Model as a GP.}
Integrating out $(\beta_0, \beta)$ and $z$ yields a zero-mean GP over the joint patient-feature index $(n, x)$:
\begin{align}
    \mathrm{Cov}\big(f_n(x), f_m(x')\big)
    &= \mathbb{E}[\beta_0^2] + x^\top \mathbb{E}[\beta \beta^\top]\, x' + \mathbb{E}[z_{n,0}\, z_{m,0}] + x^\top \mathbb{E}\big[ z_{n,1:D_x}\, z_{m,1:D_x}^\top \big]\, x' \nonumber \\
    &= \underbrace{\sigma_{\mathrm{fe}}^2 \cdot (1 + x^\top x')}_{k_{\mathrm{fixed}}(x, x')} \;+\; \mathbb{I}(n = m) \cdot \underbrace{\big( \tau_0^2 + x^\top \Sigma_\tau\, x' \big)}_{k_{\mathrm{random}}(x, x')} . \label{eq:hlm-function-space}
\end{align}
Thus, hierarchical models can be represented as GPs with a linear fixed-effect kernel $k_{\mathrm{fixed}}$ plus an indicator-gated linear deviation kernel $k_{\mathrm{random}}$. 

\textbf{LSGPs Generalize the Hierarchical Model.} 
Specifically, they do this by (1) replacing the rigid indicator $\mathbb{I}(n = m)$ with a learned continuous similarity kernel $K^z_\theta(z_n, z_m)$; (2) replacing the fixed identity input $n$ with an inferable low-dimensional code $z_n$; (3) relaxing linear feature kernels to flexible input kernels $K^x_\theta$; and (4) parameterizing shrinkage strength via a learned, patient-specific amplitude function $\sigma_\theta(z_n)^2$, parameterized by a neural network.

\section{THEORY FOR THE MODEL} \label{apx:theory-model}

\subsection{Instantiating the Similarity Kernel} \label{apx:instantiations}

\textbf{Existing Kernels.}
Recall from \cref{def:iskernel} that the similarity factor is defined as $\zeta_\theta(z,z') = \rho_\theta(r)$, where $r := \left\lVert S \big(\varphi_\theta(z) - \varphi_\theta(z')\big)\right\rVert$ is the weighted feature distance. We incorporate Automatic Relevance Determination (ARD) across all kernels via feature ``importances'' $S = \operatorname{diag}(S_1, \dots, S_{D_z}) \in \theta$. The table below shows that many popular kernels are instantiations of this similarity formulation, and each row can be combined with any $z$-independent kernel $K^x_\theta$ satisfying \cref{def:iskernel}.

\textbf{New Kernels.}
We extend the stationary, constant-amplitude kernels above by introducing two key mechanisms: (i) input-dependent amplitude, $\sigma_\theta(z) = \operatorname{softplus}(\operatorname{NN}_\theta(z))$, where $\operatorname{NN}_\theta$ is a neural network; and (ii) non-stationarity, via per-dimension input warping \citep{snoek2014input}. As shown by \citet{archer1993application}, a sigmoid network with non-negative weights yields a monotonic transformation. Whereas \citet{archer1993application} enforce non-negativity dynamically during optimization by damping learning rates, we enforce monotonicity \emph{by construction} via a softplus parameterization:
\begin{equation}
    \operatorname{warp}_\theta(z)_d := \operatorname{softplus}(a_d)\, z_d + \sum_{k=1}^{K} \operatorname{softplus}(w_{dk}) \operatorname{sigmoid}\Big(\operatorname{softplus}(b_{dk})\, z_d + c_{dk}\Big), \label{eq:warp}
\end{equation}
where $K$ is the number of sigmoid units per coordinate, and $a \in \mathbb{R}^{D_z}, w, b, c \in \mathbb{R}^{D_z \times K}$ are parameters in $\theta$. The bias terms $c$ remain unconstrained as they merely set activation thresholds. Crucially, this guaranteed positivity ensures that $\varphi_\theta = \operatorname{warp}_\theta$ is strictly order-preserving, satisfying \cref{def:iskernel}\textbf{(iii)}.

\begin{center}
\setlength{\aboverulesep}{4pt}
\setlength{\belowrulesep}{4pt}
\begin{tabular}{@{}cllll@{}}
\toprule
& \textbf{ARD $K^z_\theta$} & \textbf{Amplitude} & \textbf{Correlation} & \textbf{Warp} \\
\specialrule{\lightrulewidth}{4pt}{1.5pt}
\specialrule{\lightrulewidth}{0pt}{4pt}
\multirow{3}{*}{\rotatebox[origin=c]{90}{\scriptsize \shortstack{\textbf{Examples of}\\\textbf{Existing Kernels}}}}
 & RBF & $\sigma_\theta(z) \equiv \sigma \in \theta$ & $\rho_\theta(r) = e^{-r^2/2}$ & $\varphi_\theta(z) = z$ \\
\cmidrule(l){2-5}
 & Mat\'ern ($5/2$) & $\sigma_\theta(z) \equiv \sigma \in \theta$ & $\rho_\theta(r) = \big(1 + \sqrt{5}r + \tfrac{5}{3}r^2\big)e^{-\sqrt{5}r}$ & $\varphi_\theta(z) = z$ \\
\cmidrule(l){2-5}
 & \stackcell{Rational-\\Quadratic} & $\sigma_\theta(z) \equiv \sigma \in \theta$ & \stackcell{$\rho_\theta(r) = (1 + r^2/2\alpha)^{-\alpha}$\\where $\alpha \in \theta$} & $\varphi_\theta(z) = z$ \\
\specialrule{\lightrulewidth}{6pt}{6pt}
\multirow{3}{*}{\rotatebox[origin=c]{90}{\scriptsize \shortstack{\textbf{Examples of}\\\textbf{New Kernels}}}}
 & \stackcell{RBF,\\amplitude-scaled} & $\sigma_\theta(z) = \mathrm{softplus}(\mathrm{NN}_\theta(z))$ & $\rho_\theta(r) = e^{-r^2/2}$ & $\varphi_\theta(z) = z$ \\
\cmidrule(l){2-5}
 & \stackcell{RBF,\\per-dim.\ warp} & $\sigma_\theta(z) \equiv \sigma \in \theta$ & $\rho_\theta(r) = e^{-r^2/2}$ & $\varphi_\theta(z) = \mathrm{warp}_\theta(z)$ \\
\cmidrule(l){2-5}
 & \stackcell{RBF,\\per-dim.\ warp,\\amplitude-scaled} & $\sigma_\theta(z) = \mathrm{softplus}(\mathrm{NN}_\theta(z))$ & $\rho_\theta(r) = e^{-r^2/2}$ & $\varphi_\theta(z) = \mathrm{warp}_\theta(z)$ \\
\bottomrule
\end{tabular}
\end{center}

\subsection{The Identifiability of the Similarity} \label{apx:identif}

\begin{proof}[Proof of \cref{prop:identified}]
Write both kernels in the form of \cref{eq:kz}. \emph{($\Leftarrow$)} Assume the three identities hold for some $\lambda > 0$; upon substituting them, the three factors cancel:
\begin{align}
    K_{\theta'}(\hat{x}, \hat{x}')
    &= \tfrac{1}{\lambda} K^x_\theta(x, x') \cdot \sqrt{\lambda}\,\sigma_\theta(z) \cdot \sqrt{\lambda}\,\sigma_\theta(z') \cdot \zeta_\theta(z, z') \nonumber \\
    &= K_\theta(\hat{x}, \hat{x}') .
\end{align}
\emph{($\Rightarrow$)} Fix $(x_1, x_2)$ with $K^x_\theta(x_1, x_2) \neq 0$ and write $\lambda(z) := \sigma_{\theta'}(z)^2 / \sigma_\theta(z)^2$. Then, for all $x, x'$ and all $z, z'$,
\begin{align}
    K_\theta &= K_{\theta'} && \text{assumed} \label{eq:apx_hyp} \\
    \Longrightarrow \ K^x_\theta(x, x')\, \sigma_\theta(z)^2 &= K^x_{\theta'}(x, x')\, \sigma_{\theta'}(z)^2 && \text{at } z' = z, \ \zeta_\theta(z,z) = 1 \label{eq:apx_within} \\
    \Longrightarrow \ K^x_\theta &= \lambda(z)\, K^x_{\theta'} && \sigma_\theta > 0, \ \cref{def:iskernel}\textbf{(b)} \\
    \Longrightarrow \ \lambda(z) &= K^x_\theta(x_1, x_2) / K^x_{\theta'}(x_1, x_2) \ \equiv \ \lambda && K^x_\theta \text{ independent of } z, \ \cref{def:iskernel}\textbf{(a)} \\
    \Longrightarrow \ \sigma_{\theta'} &= \sqrt{\lambda}\, \sigma_\theta && \text{positive roots} \\
    \Longrightarrow \ \zeta_\theta &= \zeta_{\theta'} && \text{divide \cref{eq:apx_hyp} at } (x_1, x_2) \text{ by } K^x_\theta\, \sigma_\theta\, \sigma_\theta
\end{align}
Thus, evaluating at any pair of codes $z_n, z_m$, we have $A^{\mathrm{trend}} = A'^{\mathrm{trend}}$.
Finally, since $\sqrt{\lambda} > 0$ cancels out,
\begin{align}
    A'^{\mathrm{amp}}_{nm}
    = \frac{\min\big(\sqrt{\lambda}\,\sigma_\theta(z_n),\; \sqrt{\lambda}\,\sigma_\theta(z_m)\big)}{\max\big(\sqrt{\lambda}\,\sigma_\theta(z_n),\; \sqrt{\lambda}\,\sigma_\theta(z_m)\big)}
    = \frac{\min\big(\sigma_\theta(z_n),\; \sigma_\theta(z_m)\big)}{\max\big(\sigma_\theta(z_n),\; \sigma_\theta(z_m)\big)}
    = A^{\mathrm{amp}}_{nm} .
\end{align}
\end{proof}

\thispagestyle{empty}

\section{THEORY FOR APPROXIMATE INFERENCE} \label{apx:methods}

\subsection{The Sparse Variational LSGP Lower Bound} \label{apx:sv}

\textbf{Sparse Augmentation.}
We use the sparse augmentation $\Circled{C.1}$--$\Circled{C.2}$ of \cref{sec:pg}, whose $\Psi(Z)$ has rows $\psi_i(z_{n_i})$.
The inducing values $U$ carry the \emph{zero-mean} sparse GP, with the mean function entering additively at $F$; integrating $U$ out of $\Circled{C.1}$--$\Circled{C.2}$ therefore returns $\Circled{C}$ of \cref{sec:method} exactly.

\textbf{Inference.}
We define our variational family,
\begin{align}
    q(F, U, Z; W, \theta, \phi) &= p(F \mid U, \widehat{X}(Z); W, \theta) \cdot \underbrace{q(U \mid Z; \phi)}_{\mathcal{N}(\mu_\phi(Z),\Sigma_\phi(Z))} \cdot \; q(Z; \phi)
\end{align}
Here $q(U \mid Z; \phi)$ is Gaussian and $q(Z; \phi)$ is a general code posterior, both drawn from the families in the \emph{Variational Families} paragraph below; the mean-field default takes $q(Z; \phi) = \prod_{n=1}^N q(z_n; \phi)$ and $q(U \mid Z; \phi)$ independent of the codes.
The KL divergence between the approximate and true posteriors is:
\begin{align*}
    0 &\leq  
    D_{\mathrm{KL}}\!\left[ q(F, U, Z; W, \theta, \phi) \,\Big\|\, p(F, U, Z \mid X, Y; W, \theta)\right] \\
    &= D_{\mathrm{KL}}\!\left[p(F \mid U, \widehat{X}(Z); W, \theta) \cdot q(U \mid Z; \phi) \cdot q(Z; \phi) \,\Big\|\, p(F, U, Z \mid X, Y; W, \theta)\right] \\
    &= \mathbb{E}_{p(F \mid U, \widehat{X}(Z); W, \theta)\, q(U \mid Z; \phi)\, q(Z; \phi)}\!\left[\log \frac{p(F \mid U, \widehat{X}(Z); W, \theta) \cdot q(U \mid Z; \phi) \cdot q(Z; \phi)}{p(Y \mid F) \cdot p(F \mid U, \widehat{X}(Z); W, \theta) \cdot p(U; W, \theta) \cdot p(Z)}\right] + \log p(Y \mid X; W, \theta)
\end{align*}
Thus,
\begin{align}
    \log p(Y \mid X; W, \theta) &\geq
    \mathbb{E}_{p(F \mid U, \widehat{X}(Z); W, \theta)\, q(U \mid Z; \phi)\, q(Z; \phi)}\!\left[\log p(Y \mid F)\right] \nonumber \\
    &\quad - \mathbb{E}_{q(Z; \phi)}\!\left[D_{\mathrm{KL}}\!\left[q(U \mid Z; \phi) \,\|\, p(U; W, \theta)\right]\right] - D_{\mathrm{KL}}\!\left[q(Z; \phi) \,\|\, p(Z)\right] \\
    &= \sum_{i=1} \mathbb{E}_{p(f_i \mid U, \hat{X}(Z); W, \theta)\, q(U \mid Z; \phi)\, q(Z; \phi)}\!\left[\log p(y_i \mid f_i)\right] \nonumber \\
    &\quad - \mathbb{E}_{q(Z; \phi)}\!\left[D_{\mathrm{KL}}\!\left[q(U \mid Z; \phi) \,\|\, p(U; W, \theta)\right]\right] - D_{\mathrm{KL}}\!\left[q(Z; \phi) \,\|\, p(Z)\right] \\
    &= \sum_{i=1} \mathbb{E}_{q(f_i \mid \hat{X}(Z); W, \theta)\, q(Z; \phi)}\!\left[\log p(y_i \mid f_i)\right] \nonumber \\
    &\quad - \mathbb{E}_{q(Z; \phi)}\!\left[D_{\mathrm{KL}}\!\left[q(U \mid Z; \phi) \,\|\, p(U; W, \theta)\right]\right] - D_{\mathrm{KL}}\!\left[q(Z; \phi) \,\|\, p(Z)\right] \\
    &=: \mathcal{L}_{\text{SV-ELBO}}
\end{align}
wherein we follow \citet{hensman2015scalable} for the analytical form of:
\begin{align}
q(f_i \mid Z, X; W, \theta) 
    &= \mathbb{E}_{q(U \mid Z; \phi)}\!\left[ p(f_i \mid U, \hat{x}_i(z_{n_i}); W, \theta) \right] \\
    &= \mathcal{N}\!\left( m_\theta(x_i) + \psi_i(z_{n_i}) \cdot \mu_\phi(Z), \; K_\theta(\hat{x}_i(z_{n_i}), \hat{x}_i(z_{n_i})) + \psi_i(z_{n_i}) \cdot (\Sigma_\phi(Z) - K_\theta(W, W)) \cdot \psi_i(z_{n_i})^\intercal \right)
\end{align}
with $\psi_i(z_{n_i}) = K_\theta(\hat{x}_i(z_{n_i}), W) \cdot K_\theta(W, W)^{-1}$.
Mini-batching the reconstruction term brings the cost of a gradient step down to $O(M^3)$~\citep{hensman2015scalable,hensman2013gaussian,lalchand2022generalised}.

\textbf{Variational Families.}
For the inducing posterior $q(U \mid Z; \phi)$, we compare two designs: a \emph{factorized} Gaussian $\mathcal{N}(\mu_\phi, \Sigma_\phi)$ that treats $U$ as independent of the codes, and a \emph{structured} Gaussian $\mathcal{N}(\mu_\phi(Z), \Sigma_\phi(Z))$ whose moments depend directly on $Z$. To map the unordered set of codes $Z = \{z_n\}_{n=1}^N$ to the inducing moments while preserving permutation invariance, we parameterize this structured family using a DeepSet \citep{zaheer2017deep}.
For the latent code posterior $q(Z; \phi)$, we can leverage \emph{amortization} via DeepSets rather than optimizing free per-instance parameters. Specifically, $q(Z; \phi)$ uses an encoder $g_\phi$ evaluated over patient $n$'s observation set $\mathcal{D}_n$ (\cref{sec:pg}).

\subsection{The P\'{o}lya-Gamma Sparse Variational LSGP Lower Bound} \label{apx:pg}

\textbf{P\'{o}lya-Gamma Augmentation.}
From~\citet{polson2013bayesian}, we know that, for any $\alpha \in \mathbb{R}$:
\begin{align}
    \mathrm{sigmoid}(\alpha) 
    &= \frac{1}{1 + \exp(-\alpha)}
    = \frac{\exp(\tfrac{1}{2} \cdot \alpha)}{2 \cdot \cosh(\tfrac{1}{2} \cdot \alpha)}
    = \frac{1}{2} \cdot \mathbb{E}_{\omega \sim \mathrm{PG}(1, 0)} \left[ \exp\left( \frac{\alpha - \alpha^2 \cdot \omega}{2} \right) \right], \label{eq:pg_identity}
\end{align}
where $\mathrm{PG}(\cdot, \cdot)$ is the P\'{o}lya-Gamma distribution.
With this identity, we rewrite the Bernoulli likelihood as follows:
\begin{align}
p(y_i | f_i) 
    &= \mathrm{Bernoulli}\left( y_i \mid \mathrm{sigmoid}(f_i) \right) \\
    &= \mathrm{sigmoid}(f_i)^{y_i} \cdot \left(1 - \mathrm{sigmoid}(f_i) \right)^{1-y_i} \\
    &= \left(\frac{\exp(f_i)}{1 + \exp(f_i)}\right)^{y_i} \cdot \left(\frac{1}{1 + \exp(f_i)}\right)^{1-y_i} \\
    &= \frac{\exp(y_i f_i)}{(1 + \exp(f_i))^{y_i}} \cdot \frac{1}{(1 + \exp(f_i))^{1-y_i}} \\
    &= \frac{\exp(y_i f_i)}{(1 + \exp(f_i))^{y_i + 1 - y_i}} \\
    &= \frac{\exp(y_i f_i)}{1 + \exp(f_i)} \\
    &= \exp(y_i \cdot f_i) \cdot \mathrm{sigmoid}(-f_i) \\
    &= \mathbb{E}_{\omega_i \sim \mathrm{PG}(1, 0)} \underbrace{\left[ \frac{1}{2} \cdot \exp(y_i \cdot f_i) \cdot \exp\left( \frac{-f_i - f_i^2 \cdot \omega_i}{2} \right) \right]}_{\ell(y_i | f_i, \omega_i)} 
\end{align}
wherein $\log \ell(y_i | f_i, \omega_i)$ can be further simplified:
\begin{align}
    \log \ell(y_i \mid f_i, \omega_i) 
    &= \log\left(\frac{1}{2}\right) + y_i f_i + \frac{-f_i - f_i^2 \omega_i}{2} \\
    &= -\log 2 + y_i f_i - \frac{f_i}{2} - \frac{\omega_i}{2} f_i^2 \\
    &= -\log 2 + \frac{2y_i - 1}{2} f_i - \frac{\omega_i}{2} f_i^2 \label{eq:log_ell}
\end{align}

\textbf{The P\'{o}lya-Gamma Augmented ELBO.}
We define our variational family,
\begin{align}
    q(F, U, Z; W, \theta, \phi) &= p(F \mid U, \widehat{X}(Z); W, \theta) \cdot q(U \mid Z) \cdot \underbrace{q(Z; \phi)}_{\prod_{n=1}^N q(z_n; \phi)},
\end{align}
and begin with the same ELBO, $\mathcal{L}_{\text{SV-ELBO}}$, of the Sparse Variational LSGP.
Then, drawing on work by~\citet{wenzel2019efficient}, we substitute the PG identity $p(y_i \mid f_i) = \mathbb{E}_{\omega_i \sim \mathrm{PG}(1,0)}\!\left[\ell(y_i \mid f_i, \omega_i)\right]$ into $\mathcal{L}_{\text{SV-ELBO}}$:
\begin{align}
    \mathcal{L}_{\text{SV-ELBO}}
    &= \sum_{i=1} \mathbb{E}_{q(Z; \phi)\, q(U \mid Z)\, p(f_i \mid U, \hat{x}_i(z_{n_i}); W, \theta)}\!\left[\log p(y_i \mid f_i)\right] \nonumber \\
    &\quad - \mathbb{E}_{q(Z; \phi)}\!\left[D_{\mathrm{KL}}\!\left[q(U \mid Z) \,\|\, p(U; W, \theta)\right]\right] - D_{\mathrm{KL}}\!\left[q(Z; \phi) \,\|\, p(Z)\right] \\
    &= \sum_{i=1} \mathbb{E}_{q(Z; \phi)\, q(U \mid Z)\, p(f_i \mid U, \hat{x}_i(z_{n_i}); W, \theta)}\!\left[\log \mathbb{E}_{q(\omega_i \mid Z)}\!\left[\frac{\mathrm{PG}(\omega_i \mid 1, 0)}{q(\omega_i \mid Z)} \cdot \ell(y_i \mid f_i, \omega_i)\right]\right] \nonumber \\
    &\quad - \mathbb{E}_{q(Z; \phi)}\!\left[D_{\mathrm{KL}}\!\left[q(U \mid Z) \,\|\, p(U; W, \theta)\right]\right] - D_{\mathrm{KL}}\!\left[q(Z; \phi) \,\|\, p(Z)\right] \\
    &\geq \sum_{i=1} \mathbb{E}_{q(Z; \phi)\, q(U \mid Z)\, p(f_i \mid U, \hat{x}_i(z_{n_i}); W, \theta)\, q(\omega_i \mid Z)}\!\left[\log\frac{\mathrm{PG}(\omega_i \mid 1, 0)}{q(\omega_i \mid Z)} + \log \ell(y_i \mid f_i, \omega_i)\right] \nonumber \\
    &\quad - \mathbb{E}_{q(Z; \phi)}\!\left[D_{\mathrm{KL}}\!\left[q(U \mid Z) \,\|\, p(U; W, \theta)\right]\right] - D_{\mathrm{KL}}\!\left[q(Z; \phi) \,\|\, p(Z)\right] \\
    &= \sum_{i=1} \mathbb{E}_{q(Z; \phi)\, q(U \mid Z)\, p(f_i \mid U, \hat{x}_i(z_{n_i}); W, \theta)\, q(\omega_i \mid Z)}\!\left[\log \ell(y_i \mid f_i, \omega_i)\right] \nonumber \\
    &\quad - \sum_{i=1} \mathbb{E}_{q(Z; \phi)}\!\left[D_{\mathrm{KL}}\!\left[q(\omega_i \mid Z) \,\|\, \mathrm{PG}(\omega_i \mid 1, 0)\right]\right] \nonumber \\
    &\quad - \mathbb{E}_{q(Z; \phi)}\!\left[D_{\mathrm{KL}}\!\left[q(U \mid Z) \,\|\, p(U; W, \theta)\right]\right] - D_{\mathrm{KL}}\!\left[q(Z; \phi) \,\|\, p(Z)\right] \\
    &=: \mathcal{L}_{\mathrm{PG-ELBO}}
\end{align}
This provides us with a variational lower bound we can optimize via VEM.

\textbf{Closed-Form P\'olya-Gamma ELBO.}
\cref{eq:pg_elbo} defines the variational lower bound with terms expressed as expectations. We now derive analytical expressions for these terms, with the exception of expectations with respect to $q(Z; \phi)$, which lack a closed-form solution and are estimated via Monte Carlo sampling. To this end, we specify a Gaussian inducing factor $q(U \mid Z) = \mathcal{N}(\mu(Z), \Sigma(Z))$ and a P\'olya-Gamma auxiliary factor $q(\omega_i \mid Z) = \operatorname{PG}(1, c_i(Z))$ parameterized by tilt parameters $c_i(Z) \ge 0$. As demonstrated later, the optimal coordinate-ascent updates take precisely these parametric forms; hence, this choice incurs no loss of generality at the fixed point.
We do this term by term:
\begin{itemize}
\item \textbf{The P\'olya-Gamma Mean.} 
From~\citet{polson2013bayesian}, we know the mean of a P\'olya-Gamma distribution is
\begin{align}
    \mathbb{E}_{\mathrm{PG}(b, c)}\!\left[\omega\right]
    &= \frac{b}{2c}\tanh\!\left(\frac{c}{2}\right),
    \label{eq:pg-mean}
\end{align}
a fact we use in the derivation of the reconstruction term and the coordinate-ascent updates below.
Since here we only care about the case in which $b = 1$, we define:
\begin{align}
    \bar\omega(c) := \mathbb{E}_{\mathrm{PG}(1, c)}\!\left[\omega\right] = \frac{1}{2c}\tanh\!\left(\frac{c}{2}\right). \label{eq:pg-mean-b1}
\end{align}

\item \textbf{The Reconstruction Term Given $Z$.} Taking the expectation of \cref{eq:log_ell} over $q(\omega_i \mid Z)\, q(U \mid Z)\, p(f_i \mid U, \hat{x}_i(z_{n_i}); W, \theta)$, and writing $m_i := m_\theta(x_i)$ for the mean-function value at row $i$, we obtain the augmented log-likelihood:
\begin{align}
    &\mathbb{E}_{q(\omega_i \mid Z)\, q(U \mid Z)\, p(f_i \mid U, \hat{x}_i(z_{n_i}); W, \theta)}\!\left[\log \ell(y_i \mid f_i, \omega_i)\right] \nonumber \\
    &\qquad = -\log 2
    + \tfrac{2y_i - 1}{2}\, \mathbb{E}_{q(f_i \mid Z, X; W, \theta)}[f_i]
    - \frac{\bar\omega\big(c_i(Z)\big)}{2}\,\mathbb{E}_{q(f_i \mid Z, X; W, \theta)}\!\left[f_i^2\right] \\
    &\qquad = -\log 2
    + \tfrac{2y_i - 1}{2}\, \mathbb{E}_{q(f_i \mid Z, X; W, \theta)}[f_i]
    - \frac{\bar\omega\big(c_i(Z)\big)}{2} \left( \mathbb{V}_{q(f_i \mid Z, X; W, \theta)}\!\left[f_i\right] + \mathbb{E}_{q(f_i \mid Z, X; W, \theta)}\!\left[f_i\right]^2 \right)
    \label{eq:pg_datafit_moments}
\end{align}
where the two moments of $f_i$ are taken under the Gaussian marginal obtained by integrating $U$ out of $\Circled{C.2}$---the sparse-variational marginal of \citet{hensman2015scalable}, $q(f_i \mid Z, X; W, \theta) = \int q(U \mid Z)\, p(f_i \mid U, \hat{x}_i(z_{n_i}); W, \theta)\, \mathrm{d}U$, in which:
\begin{align}
\begin{split}
    \mathbb{E}_{q(f_i \mid Z, X; W, \theta)}[f_i] &= m_i + \psi_i(z_{n_i})\, \mu(Z), \\
    \mathbb{V}_{q(f_i \mid Z, X; W, \theta)}[f_i] &= K_\theta(\hat{x}_i(z_{n_i}), \hat{x}_i(z_{n_i})) + \psi_i(z_{n_i})\, \big(\Sigma(Z) - K_\theta(W, W)\big)\, \psi_i(z_{n_i})^\intercal .    
\end{split} \label{eq:pg_f_moments}
\end{align}

\item \textbf{The Auxiliary KL Term Given $Z$.} From~\citet{polson2013bayesian}, we have:
\begin{align}
    \mathrm{PG}(\omega_i \mid 1, c_i(Z)) &= \cosh\!\left(\frac{c_i(Z)}{2}\right) \exp\!\left(-\frac{\omega_i c_i(Z)^2}{2}\right) \mathrm{PG}(\omega_i \mid 1, 0). \label{eq:pg_tilt}
\end{align}
Therefore,
\begin{align}
    D_{\mathrm{KL}}\!\left[\mathrm{PG}(1, c_i(Z)) \,\|\, \mathrm{PG}(1, 0)\right]
    &= \mathbb{E}_{\mathrm{PG}(1, c_i(Z))}\!\left[\log \cosh\!\left(\frac{c_i(Z)}{2}\right) - \frac{\omega_i c_i(Z)^2}{2}\right] \nonumber \\
    &= \log \cosh\!\left(\frac{c_i(Z)}{2}\right) - \frac{c_i(Z)^2}{2}\,\bar\omega\big(c_i(Z)\big) \nonumber \\
    &= \log \cosh\!\left(\frac{c_i(Z)}{2}\right) - \frac{c_i(Z)}{4}\tanh\!\left(\frac{c_i(Z)}{2}\right) . \label{eq:pg_kl_omega}
\end{align}

\item \textbf{The Remaining KL Terms.} 
The inducing-point KL term, $D_{\mathrm{KL}}\!\left[q(U \mid Z) \,\|\, p(U; W, \theta)\right]$, is a KL between two Gaussians and admits an analytic form.
In this work we use a mean-field diagonal Gaussian variational family over the latent codes, giving the standard KL term, $D_{\mathrm{KL}}[q(Z;\phi) \,\|\, p(Z)] = \sum_{n=1}^N D_{\mathrm{KL}}[q(z_n; \phi) \,\|\, \mathcal{N}(0, \mathbb{I}_{D_z})]$.
\end{itemize}
Substituting the three terms above into \cref{eq:pg_elbo}, we get:
\begin{align}
    \mathcal{L}_{\mathrm{PG-ELBO}}
    &= \mathbb{E}_{q(Z; \phi)}\Bigg[ \sum_{i=1}^{I} \bigg(
        -\log 2 - \log \cosh\!\left(\frac{c_i(Z)}{2}\right) + \frac{c_i(Z)}{4}\tanh\!\left(\frac{c_i(Z)}{2}\right)
        + \tfrac{2y_i - 1}{2} \big(m_i + \psi_i(z_{n_i})\, \mu(Z)\big) \nonumber \\
    &\qquad\qquad - \frac{\bar\omega\big(c_i(Z)\big)}{2}\Big(
        \big(m_i + \psi_i(z_{n_i})\, \mu(Z)\big)^2
        + K_\theta(\hat{x}_i(z_{n_i}), \hat{x}_i(z_{n_i}))
        + \psi_i(z_{n_i})\, \big(\Sigma(Z) - K_\theta(W, W)\big)\, \psi_i(z_{n_i})^\intercal
        \Big) \bigg) \nonumber \\
    &\qquad\qquad - D_{\mathrm{KL}}\!\left[q(U \mid Z) \,\|\, p(U; W, \theta)\right] \Bigg]
    - D_{\mathrm{KL}}\!\left[q(Z; \phi) \,\|\, p(Z)\right] . \label{eq:pg_elbo_closed}
\end{align}
The per-observation sum is what permits mini-batching: for a batch $\mathcal{B}$ of $|\mathcal{B}|$ rows drawn uniformly, rescaling that inner sum by $I / |\mathcal{B}|$ is unbiased, while the two KL terms are not rescaled.

\subsection{Coordinate-Ascent Updates on the Sparse Variational P\'olya-Gamma Lower Bound}

\textbf{Update for $q(\omega_i \mid Z)$.}
We adopt the approach of~\citet{wenzel2019efficient} to derive the $q(\omega_i \mid Z)$ update.
The terms in $\mathcal{L}_{\mathrm{PG-ELBO}}$ (\cref{eq:pg_elbo}) that depend on $q(\omega_i \mid Z)$ are:
\begin{align}
    \mathcal{L}_{q(\omega_i \mid Z)}
    &= \mathbb{E}_{q(Z; \phi)}\!\left[
        \mathbb{E}_{q(U \mid Z)\, p(f_i \mid U, \hat{x}_i(z_{n_i}); W, \theta)\, q(\omega_i \mid Z)}\!\left[\log \ell(y_i \mid f_i, \omega_i)\right] \right. \left. - D_{\mathrm{KL}}\!\left[q(\omega_i \mid Z) \,\|\, \mathrm{PG}(\omega_i \mid 1, 0)\right]
    \right]
\end{align}
Taking the functional derivative with respect to $q(\omega_i \mid Z)$ for fixed $Z$, and writing out the expectation as an integral, we get:
\begin{align}
    &\frac{\delta \mathcal{L}_{\mathrm{PG-ELBO}}}{\delta q(\omega_i \mid Z)} \nonumber \\
    &= q(Z; \phi) \cdot \frac{\delta}{\delta q(\omega_i \mid Z)} \int q(\omega_i \mid Z) \Big[
        \mathbb{E}_{q(U \mid Z)\, p(f_i \mid U, \hat{x}_i(z_{n_i}); W, \theta)}\!\left[\log \ell(y_i \mid f_i, \omega_i)\right] + \log \mathrm{PG}(\omega_i \mid 1, 0) 
        - \log q(\omega_i \mid Z)
    \Big] \mathrm{d}\omega_i \\
    &= q(Z; \phi) \cdot \Big[
        \mathbb{E}_{q(U \mid Z)\, p(f_i \mid U, \hat{x}_i(z_{n_i}); W, \theta)}\!\left[\log \ell(y_i \mid f_i, \omega_i)\right] + \log \mathrm{PG}(\omega_i \mid 1, 0) - \log q(\omega_i \mid Z) - 1
    \Big]
\end{align}
Since $q(Z; \phi) > 0$, setting $\frac{\delta \mathcal{L}_{\mathrm{PG-ELBO}}}{\delta q(\omega_i \mid Z)} = 0$ and solving for $\log q^*(\omega_i \mid Z)$, we get:
\begin{align}
    \log q^*(\omega_i \mid Z)
    &= \mathbb{E}_{q(U \mid Z)\, p(f_i \mid U, \hat{x}_i(z_{n_i}); W, \theta)}\!\left[\log \ell(y_i \mid f_i, \omega_i)\right] + \log \mathrm{PG}(\omega_i \mid 1, 0) + \mathrm{const} ,
\end{align}
where the $-1$ is absorbed into the normalizing constant. 
Substituting in $\log \ell(y_i \mid f_i, \omega_i)$ from \cref{eq:log_ell} and taking the expectation term by term, we get:
\begin{align}
    &\mathbb{E}_{q(U \mid Z)\, p(f_i \mid U, \hat{x}_i(z_{n_i}); W, \theta)}\!\left[\log \ell(y_i \mid f_i, \omega_i)\right] \nonumber \\
    &= -\log 2 
    + \underbrace{\frac{2y_i - 1}{2}\mathbb{E}_{q(U \mid Z)\, p(f_i \mid U, \hat{x}_i(z_{n_i}); W, \theta)}\!\left[f_i\right]}_{\text{const w.r.t. }\omega_i} 
    - \frac{\omega_i}{2}\underbrace{\mathbb{E}_{q(U \mid Z)\, p(f_i \mid U, \hat{x}_i(z_{n_i}); W, \theta)}\!\left[f_i^2\right]}_{=:\,c_i(Z)^2} .
\end{align}
All terms except $-\frac{c_i(Z)^2}{2}\omega_i$ are constant with respect to $\omega_i$, so:
\begin{align}
    \log q^*(\omega_i \mid Z)
    &= -\frac{c_i(Z)^2}{2}\omega_i + \log \mathrm{PG}(\omega_i \mid 1, 0) + \mathrm{const} .
\end{align}
This is exactly the exponential tilt of \cref{eq:pg_tilt}, so
\begin{align}
    q^*(\omega_i \mid Z) = \mathrm{PG}\left(\omega_i \mid 1, c_i(Z)\right).
\end{align}
Therefore, by \cref{eq:pg_f_moments},
\begin{align}
    c_i(Z) &= \sqrt{\mathbb{E}_{q(f_i \mid Z, X; W, \theta)}\!\left[f_i^2\right]} \\
    &= \sqrt{\mathbb{V}_{q(f_i \mid Z, X; W, \theta)}\!\left[f_i\right] + \mathbb{E}_{q(f_i \mid Z, X; W, \theta)}\!\left[f_i\right]^2} \\
    &= \sqrt{\big(m_i + \psi_i(z_{n_i})\, \mu(Z)\big)^2 + K_\theta(\hat{x}_i(z_{n_i}), \hat{x}_i(z_{n_i})) + \psi_i(z_{n_i})\, \big(\Sigma(Z) - K_\theta(W, W)\big)\, \psi_i(z_{n_i})^\intercal} . \label{eq:pg_ci}
\end{align}

\textbf{Update for $q(U \mid Z)$.}
We adopt the approach of~\citet{wenzel2019efficient} to derive the $q(U \mid Z)$ update.
Taking the functional derivative of $\mathcal{L}_{\mathrm{PG-ELBO}}$ with respect to $q(U \mid Z)$ for fixed $Z$, and writing out the expectation as an integral, we get:
\begin{align}
    &\frac{\delta \mathcal{L}_{\mathrm{PG-ELBO}}}{\delta q(U \mid Z)} \nonumber \\
    &= q(Z; \phi) \cdot \frac{\delta}{\delta q(U \mid Z)} \int q(U \mid Z) \Big[
        \sum_{i=1} \mathbb{E}_{q(\omega_i \mid Z)\, p(f_i \mid U, \hat{x}_i(z_{n_i}); W, \theta)}\!\left[\log \ell(y_i \mid f_i, \omega_i)\right] + \log p(U; W, \theta)
        - \log q(U \mid Z)
    \Big] \mathrm{d}U \\
    &= q(Z; \phi) \cdot \Big[
        \sum_{i=1} \mathbb{E}_{q(\omega_i \mid Z)\, p(f_i \mid U, \hat{x}_i(z_{n_i}); W, \theta)}\!\left[\log \ell(y_i \mid f_i, \omega_i)\right] + \log p(U; W, \theta) - \log q(U \mid Z) - 1 
    \Big] .
\end{align}
Since $q(Z; \phi) > 0$, setting $\frac{\delta \mathcal{L}_{\mathrm{PG-ELBO}}}{\delta q(U \mid Z)} = 0$ and solving for $\log q^*(U \mid Z)$:
\begin{align}
    \log q^*(U \mid Z)
    &= \sum_{i=1} \mathbb{E}_{q(\omega_i \mid Z)\, p(f_i \mid U, \hat{x}_i(z_{n_i}); W, \theta)}\!\left[\log \ell(y_i \mid f_i, \omega_i)\right] + \log p(U; W, \theta) + \mathrm{const}, \label{eq:opt-qu-given-z}
\end{align}
where the $-1$ is absorbed into the normalizing constant.
Substituting in $\log \ell(y_i \mid f_i, \omega_i)$ from \cref{eq:log_ell} and taking the expectation over $p(f_i \mid U, \hat{x}_i(z_{n_i}); W, \theta)$ term by term, we get:
\begin{align}
    &\mathbb{E}_{p(f_i \mid U, \hat{x}_i(z_{n_i}); W, \theta)}\!\left[\log \ell(y_i \mid f_i, \omega_i)\right] \nonumber \\
    &= -\log 2
    + \frac{2y_i - 1}{2}\underbrace{\mathbb{E}_{p(f_i \mid U, \hat{x}_i(z_{n_i}); W, \theta)}\!\left[f_i\right]}_{=\,m_i + \psi_i(z_{n_i}) U}
    - \frac{\omega_i}{2}\underbrace{\mathbb{E}_{p(f_i \mid U, \hat{x}_i(z_{n_i}); W, \theta)}\!\left[f_i^2\right]}_{=\,K_\theta(\hat{x}_i(z_{n_i}), \hat{x}_i(z_{n_i})) - \psi_i(z_{n_i})\, K_\theta(W, W)\, \psi_i(z_{n_i})^\intercal + (m_i + \psi_i(z_{n_i}) U)^2} ,
\end{align}
where $m_i := m_\theta(x_i)$ (so $f_i = m_i + \psi_i(z_{n_i}) U$ conditional on $U$), and we used $\mathbb{E}_{p(f_i \mid U, \hat{x}_i(z_{n_i}); W, \theta)}[f_i] = m_i + \psi_i(z_{n_i}) U$ and:
\begin{align}
    \mathbb{E}_{p(f_i \mid U, \hat{x}_i(z_{n_i}); W, \theta)}[f_i^2]
    &= \mathbb{V}_{p(f_i \mid U, \hat{x}_i(z_{n_i}); W, \theta)}[f_i] + \left(\mathbb{E}_{p(f_i \mid U, \hat{x}_i(z_{n_i}); W, \theta)}[f_i]\right)^2 \nonumber \\
    &= K_\theta(\hat{x}_i(z_{n_i}), \hat{x}_i(z_{n_i})) - \psi_i(z_{n_i})\, K_\theta(W, W)\, \psi_i(z_{n_i})^\intercal + \left(m_i + \psi_i(z_{n_i}) U\right)^2
\end{align}
with $\psi_i(z_{n_i})$ as in \cref{apx:sv}. 
Now, taking the expectation over $q(\omega_i \mid Z)$, whose mean is $\bar\omega\big(c_i(Z)\big)$, we get:
\begin{align}
    &\mathbb{E}_{q(\omega_i \mid Z)\, p(f_i \mid U, \hat{x}_i(z_{n_i}); W, \theta)}\!\left[\log \ell(y_i \mid f_i, \omega_i)\right] \nonumber \\
    &= -\log 2
    + \frac{2y_i - 1}{2}\big(m_i + \psi_i(z_{n_i}) U\big)
    - \frac{\bar\omega\big(c_i(Z)\big)}{2}\Big(\big(m_i + \psi_i(z_{n_i}) U\big)^2 + K_\theta(\hat{x}_i(z_{n_i}), \hat{x}_i(z_{n_i})) - \psi_i(z_{n_i})\, K_\theta(W, W)\, \psi_i(z_{n_i})^\intercal\Big)
\end{align}
using the second moment from above.
Summing over all observations and dropping all terms constant with respect to $U$ (expanding the square, the mean contributes the linear term $-\bar\omega\big(c_i(Z)\big) m_i\,\psi_i U$), we get: 
\begin{align}
    &\sum_{i=1} \mathbb{E}_{q(\omega_i \mid Z)\, p(f_i \mid U, \hat{x}_i(z_{n_i}); W, \theta)}\!\left[\log \ell(y_i \mid f_i, \omega_i)\right] \nonumber \\
    &\qquad= \sum_{i=1}\!\left(\frac{2y_i - 1 - 2\bar\omega\big(c_i(Z)\big) m_i}{2}\,\psi_i(z_{n_i}) U - \frac{\bar\omega\big(c_i(Z)\big)}{2}\left(\psi_i(z_{n_i}) U\right)^2\right) + \mathrm{const} \\
    &\qquad= \frac{1}{2}\big(2Y - 1 - 2\bar{\Omega}(Z)\, m\big)^\intercal \Psi(Z) U
    - \frac{1}{2} U^\intercal \Psi(Z)^\intercal \bar{\Omega}(Z) \Psi(Z) U + \mathrm{const} ,
\end{align}
where $\Psi(Z)$ is the matrix with rows $\psi_i(z_{n_i})$, $\bar{\Omega}(Z) = \mathrm{diag}\big(\bar\omega(c_1(Z)), \ldots, \bar\omega(c_I(Z))\big)$ (both depending on $Z$), and $m = m_\theta(X)$ is the vector of mean-function values.
We add $\log p(U; W, \theta)$ from $\Circled{C.1}$, using $p(U; W, \theta) = \mathcal{N}(U \mid 0, K_\theta(W, W))$:
\begin{align}
    \log p(U; W, \theta)
    &= -\frac{1}{2} U^\intercal K_\theta(W, W)^{-1} U + \mathrm{const}.
\end{align}
We combine all terms:
\begin{align}
    \log q^*(U \mid Z)
    &= \frac{1}{2}(2Y - 1 - 2\bar{\Omega}(Z) m)^\intercal \Psi(Z) U - \frac{1}{2} U^\intercal \Psi(Z)^\intercal \bar{\Omega}(Z) \Psi(Z) U - \frac{1}{2} U^\intercal K_\theta(W, W)^{-1} U + \mathrm{const} \\
    &= \frac{1}{2}(2Y - 1 - 2\bar{\Omega}(Z) m)^\intercal \Psi(Z) U - \frac{1}{2} U^\intercal \underbrace{\left(K_\theta(W, W)^{-1} + \Psi(Z)^\intercal \bar{\Omega}(Z) \Psi(Z)\right)}_{=:\,\Sigma(Z)^{-1}} U + \mathrm{const} \\
    &= U^\intercal \underbrace{\frac{1}{2}\Psi(Z)^\intercal(2Y - 1 - 2\bar{\Omega}(Z) m)}_{=:\,\Sigma(Z)^{-1}\mu(Z)}
    - \frac{1}{2} U^\intercal \Sigma(Z)^{-1} U + \mathrm{const} \\
    &= U^\intercal \Sigma(Z)^{-1} \mu(Z) - \frac{1}{2} U^\intercal \Sigma(Z)^{-1} U + \mathrm{const} .
\end{align}
We now complete the square by expanding $-\frac{1}{2}(U - \mu(Z))^\intercal \Sigma(Z)^{-1}(U - \mu(Z))$:
\begin{align}
    &-\frac{1}{2}(U - \mu(Z))^\intercal \Sigma(Z)^{-1}(U - \mu(Z)) 
    = -\frac{1}{2} U^\intercal \Sigma(Z)^{-1} U + U^\intercal \Sigma(Z)^{-1} \mu(Z) - \frac{1}{2}\mu(Z)^\intercal \Sigma(Z)^{-1} \mu(Z) .
\end{align}
Since $\frac{1}{2}\mu(Z)^\intercal \Sigma(Z)^{-1} \mu(Z)$ does not depend on $U$, it is absorbed into the constant:
\begin{align}
    \log q^*(U \mid Z)
    &= -\frac{1}{2}(U - \mu(Z))^\intercal \Sigma(Z)^{-1}(U - \mu(Z)) + \mathrm{const} . \label{eq:pg_qstar}
\end{align}
This is the log of a Gaussian, so $q^*(U \mid Z) = \mathcal{N}(\mu(Z), \Sigma(Z))$, where $\Sigma(Z)$ and $\mu(Z)$ are obtained by inverting the definitions above:
\begin{align}
    \Sigma(Z)^{-1} &= K_\theta(W, W)^{-1} + \Psi(Z)^\intercal \bar{\Omega}(Z) \Psi(Z) \label{eq:pg_sigma} \\
    &\Rightarrow \quad
    \Sigma(Z) = \left(K_\theta(W, W)^{-1} + \Psi(Z)^\intercal \bar{\Omega}(Z) \Psi(Z)\right)^{-1} \\
    \Sigma(Z)^{-1}\mu(Z) &= \frac{1}{2}\Psi(Z)^\intercal(2Y - 1 - 2\bar{\Omega}(Z) m) \nonumber \\
    &\Rightarrow \quad
    \mu(Z) = \frac{1}{2}\,\Sigma(Z)\,\Psi(Z)^\intercal (2Y - 1 - 2\bar{\Omega}(Z) m)
\end{align}
so:
\begin{align}
    q^*(U \mid Z) = \mathcal{N}(\mu(Z), \Sigma(Z))
\end{align}

\subsection{Variational Expectation-Maximization (VEM) Inference Algorithm}

\begin{algorithm}[t]
\caption{A Single Iteration of the VEM Algorithm for PG-LSGP.}
\label{alg:pg_cavi}
\begin{algorithmic}[1]
\Require parameters $\{\theta, W, \phi\}$; rows $(X, Y)$; inner sweeps $T$; Monte Carlo samples $J$.
\For{$j = 1$ \textbf{to} $J$} \Comment{Monte Carlo samples of the codes}
  \State draw $Z^{(j)} \sim q(Z; \phi)$ (reparameterized)
  \State $c^{(0)}_i \gets 1$ for every row $i$ \Comment{initialize the P\'olya-Gamma tilt parameters}
  \Statex \quad \textbf{E-step} (closed-form CAVI over the conjugate factors; every iterate below is at $Z^{(j)}$, so the argument is left implicit):
  \For{$t = 1$ \textbf{to} $T$} \Comment{alternate the two closed-form conjugate updates}
    \For{each row $i$}
      \State $\bar\omega^{(t)}_i \gets \bar\omega\big(c^{(t-1)}_i\big)$ \Comment{PG mean (\cref{eq:pg-mean-b1})}
    \EndFor
    \State $\big(\Sigma^{(t)}\big)^{-1} \gets K_\theta(W,W)^{-1} + \Psi(Z^{(j)})^\intercal \bar\Omega^{(t)}\, \Psi(Z^{(j)})$ \Comment{closed-form $q^*(U\mid Z^{(j)})$ (\cref{eq:pg_sigma})}
    \State $\mu^{(t)} \gets \tfrac12\,\Sigma^{(t)}\,\Psi(Z^{(j)})^\intercal\big(2Y-1 - 2\bar\Omega^{(t)}\, m\big)$
    \For{each row $i$}
      \State $c^{(t)}_i \gets \sqrt{\mathbb{E}_{q(f_i \mid Z^{(j)}, X; W, \theta)}[f_i^2]}$ at $\big(\mu^{(t)}, \Sigma^{(t)}\big)$ \Comment{refresh tilt parameters (\cref{eq:pg_ci})}
    \EndFor
  \EndFor
  \State Freeze $c \gets c^{(T)}$ via stop-gradient \Comment{envelope theorem: $\partial\mathcal{L}_{\mathrm{PG-ELBO}}/\partial c = 0$ at the fixed point}
  \State Recompute $\bar\Omega(Z^{(j)}), \Sigma(Z^{(j)}), \mu(Z^{(j)})$ in closed form at the frozen $c$ \Comment{differentiated through}
  \State $\mathcal{L}^{(j)} \gets$ the bracketed integrand of \cref{eq:pg_elbo_closed} at $\big(c, \mu(Z^{(j)}), \Sigma(Z^{(j)})\big)$ 
\EndFor
\Statex \textbf{M-step} (gradient step on the non-conjugate parameters):
\State $\mathcal{L}_{\mathrm{PG-ELBO}} \gets \tfrac{1}{J}\sum_{j=1}^{J} \mathcal{L}^{(j)} - D_{\mathrm{KL}}[q(Z; \phi) \,\|\, p(Z)]$ \Comment{Monte Carlo estimate of \cref{eq:pg_elbo_closed}}
\State $\{\theta, W, \phi\} \gets \textsc{AdamStep}(\{\theta, W, \phi\},\, \nabla_{\{\theta,W,\phi\}} \mathcal{L}_{\mathrm{PG-ELBO}})$
\end{algorithmic}
\end{algorithm}

We perform inference via a Variational Expectation-Maximization (VEM) algorithm (\cref{alg:pg_cavi}) that alternates between a conjugate E-step and a non-conjugate M-step. In the \textbf{E-step}, we execute Coordinate-Ascent Variational Inference (CAVI) over the conjugate factors, which exhibits fast, monotonic convergence to its fixed point. Crucially, because the variational parameters $c$ reach the CAVI optimum where $\partial \mathcal{L}_{\mathrm{PG-ELBO}} / \partial c = 0$, the envelope theorem guarantees that stopping gradients through the inner loop yields exact total derivatives for the non-conjugate parameters $\{\theta, W, \phi\}$ (kernel hyperparameters, inducing inputs, and remaining variational parameters). This stop-gradient eliminates the need to store intermediate states across $T$ sweeps during reverse-mode automatic differentiation---drastically reducing memory overhead for large inducing sets $M$---without altering the forward value. In the \textbf{M-step}, we take a plain Euclidean gradient step on $\{\theta, W, \phi\}$, differentiating through the closed-form distribution $q^*(U \mid Z)$, which is recomputed at the current parameters and thus envelope-exact as well. Unlike Conjugate-Computation Variational Inference \citep{khan2017conjugate}, which recasts non-conjugate updates via mirror descent in mean-parameter space, our M-step uses standard Euclidean gradients and naturally admits mini-batching over observation rows.

\textbf{Numerical Stability of the P\'olya-Gamma Mean.}
\Cref{eq:pg-mean-b1} has a removable singularity at $c = 0$, where the numerator and denominator vanish together.
Expanding the numerator resolves it: 
\begin{align}
    \bar\omega(c)
    &= \frac{1}{2c}\left[u - \frac{u^3}{3} + O(u^5)\right]\bigg|_{u = c/2}
    && \big(\tanh u = u - \tfrac{u^3}{3} + O(u^5)\big) \nonumber \\
    &= \frac{1}{2c}\left[\frac{c}{2} - \frac{c^3}{24} + O(c^5)\right]
    && \big(\text{substituting } u = c/2\big) \nonumber \\
    &= \frac{1}{4} - \frac{c^2}{48} + O(c^4) . \label{eq:pg_omega_taylor}
\end{align}
The leading term gives the limit, $\lim_{c \to 0}\bar\omega(c) = \tfrac14$, so $\bar\omega$ extends continuously through $c = 0$.
In floating point the two cases differ: evaluated directly, \cref{eq:pg-mean-b1} is accurate for every $c \neq 0$, but at $c = 0$ exactly it returns \texttt{NaN} rather than $\tfrac14$, and its derivative loses precision to cancellation as $c \to 0$.
We therefore pick a small threshold ($\varepsilon = 10^{-6}$) and evaluate \cref{eq:pg_omega_taylor} below it and \cref{eq:pg-mean-b1} above it:
\begin{align}
    \bar\omega(c) \approx
    \mathbb{I}\!\left[\,|c| < \varepsilon\,\right] \left(\frac{1}{4} - \frac{c^2}{48}\right)
    \;+\;
    \mathbb{I}\!\left[\,|c| \ge \varepsilon\,\right] \frac{1}{2|c|}\tanh\!\left(\frac{|c|}{2}\right),
    \qquad \varepsilon = 10^{-6} .
    \label{eq:pg_omega_impl}
\end{align}
This guards the \emph{forward} value; the gradient is not at issue in \cref{alg:pg_cavi}, where $c$ is frozen by stop-gradient before the M-step.

\textbf{Mini-Batching and Stochastic Natural Gradients.}
To scale training to more rows, we can execute \cref{alg:pg_cavi} using mini-batches. For a mini-batch $\mathcal{B} \subseteq \{1, \dots, I\}$, we apply the dataset scaling factor $I / |\mathcal{B}|$ in two places:
\begin{enumerate}
\item \textbf{Reconstruction Term.}
The likelihood reconstruction sum in $\mathcal{L}_{\mathrm{PG-ELBO}}$ is rescaled by $I / |\mathcal{B}|$, leaving the global KL divergence terms unscaled.

\item \textbf{Conjugate Updates.}
Inside the conjugate updates for $q(U \mid Z)$, the posterior moments depend on two full-dataset row sums:
\begin{align}
    \Circled{1} &:= \Psi(Z)^\intercal \bar\Omega(Z) \Psi(Z) = \sum_{i=1}^{I} \bar\omega\big(c_i(Z)\big)\, \psi_i(z_{n_i})^\intercal \psi_i(z_{n_i}), \nonumber \\
    \Circled{2} &:= \tfrac{1}{2}\Psi(Z)^\intercal\big(2Y - 1 - 2\bar\Omega(Z)\, m\big) = \sum_{i=1}^{I} \Big(\tfrac{2y_i - 1}{2} - \bar\omega\big(c_i(Z)\big)\, m_i\Big) \psi_i(z_{n_i})^\intercal, \label{eq:pg_suffstats}
\end{align}
which yield $\Sigma(Z)^{-1} = K_\theta(W, W)^{-1} + \Circled{1}$ and $\mu(Z) = \Sigma(Z) \Circled{2}$. On a mini-batch $\mathcal{B}$, we replace these full-dataset sums with their rescaled, unbiased estimators:
\begin{equation}
    \widehat{\Circled{1}} := \frac{I}{|\mathcal{B}|}\sum_{i \in \mathcal{B}} \bar\omega\big(c_i(Z)\big)\, \psi_i(z_{n_i})^\intercal \psi_i(z_{n_i}), \quad
    \widehat{\Circled{2}} := \frac{I}{|\mathcal{B}|}\sum_{i \in \mathcal{B}} \Big(\tfrac{2y_i - 1}{2} - \bar\omega\big(c_i(Z)\big)\, m_i\Big) \psi_i(z_{n_i})^\intercal. \label{eq:pg_suffstats_hat}
\end{equation}
\end{enumerate}
Solving the conjugate system in closed form is a stochastic natural-gradient step at unit step size~\citep{wenzel2019efficient}.
Finally, we note that while the estimated statistics $\widehat{\Circled{1}}$ and $\widehat{\Circled{2}}$ are unbiased, the matrix inversion required to compute $\Sigma(Z)$ introduces a small bias of order $O(1 / |\mathcal{B}|)$. As a result, training ELBO values across different batch sizes are not directly comparable, and model selection should be evaluated on held-out log-likelihoods.

\subsection{The Factorized P\'olya-Gamma Bound} \label{apx:pg-factorized}

To empirically demonstrate personalization collapse, we need to derive a factorized bound.
To do this, we replace the code-conditional factor $q(U \mid Z)$ by a free, full-rank multivariate Gaussian that is independent of $Z$:
\begin{align}
    q(U) &= \mathcal{N}(\mu_U,\, \Sigma_U), \qquad \Sigma_U = L_U L_U^\intercal ,
\end{align}
with $\mu_U \in \mathbb{R}^M$ and $L_U$ a lower-triangular Cholesky factor with positive diagonal, both free variational parameters ($\mu_U, L_U \in \phi$), optimized via gradient descent. 
This gives us \cref{alg:pg_fac}.

\begin{algorithm}[t]
\caption{A Single Iteration for the \emph{Factorized} PG-LSGP.}
\label{alg:pg_fac}
\begin{algorithmic}[1]
\Require parameters $\{\theta, W, \phi\}$; rows $(X, Y)$ (or a mini-batch $\mathcal{B}$, with scale $I/|\mathcal{B}|$); code samples $J$
\State $\Sigma_U \gets L_U L_U^\intercal$ \Comment{one $q(U)$ for every draw: it does not read the codes}
\For{$j = 1$ \textbf{to} $J$} \Comment{Monte Carlo samples of the codes}
  \State draw $Z^{(j)} \sim q(Z; \phi)$ (reparameterized)
  \For{each row $i$}
    \State $c_i \gets \sqrt{\mathbb{E}_{q(f_i \mid z_{n_i}, X; W, \theta)}[f_i^2]}$ at $\big(\mu_U, \Sigma_U\big)$ \Comment{\cref{eq:pg_ci} under the substitution: one step, no sweeps}
  \EndFor
  \State Freeze $c$ via stop-gradient \Comment{envelope theorem, as in \cref{alg:pg_cavi}}
  \State $\mathcal{L}^{(j)} \gets$ the row sum of \cref{eq:pg_elbo_closed} at $\big(c, \mu_U, \Sigma_U\big)$
\EndFor
\Statex \textbf{M-step} (a single gradient step; no E-step precedes it):
\State $\mathcal{L}^{\mathrm{fac}}_{\mathrm{PG-ELBO}} \gets \tfrac{1}{J}\sum_{j=1}^{J} \mathcal{L}^{(j)} - D_{\mathrm{KL}}[q(U) \,\|\, p(U; W, \theta)] - D_{\mathrm{KL}}[q(Z;\phi) \,\|\, p(Z)]$ \Comment{neither KL depends on $Z$}
\State $\{\theta, W, \phi\} \gets \textsc{AdamStep}\big(\{\theta, W, \phi\},\, \nabla \mathcal{L}^{\mathrm{fac}}_{\mathrm{PG-ELBO}}\big)$
\end{algorithmic}
\end{algorithm}

\section{THEORY FOR PERSONALIZATION COLLAPSE} \label{apx:bias}

\subsection{Proof That the Coupling Gap Is a Bregman Information} \label{apx:proof-gap}

We express the multivariate Gaussian family in natural parameters as:
\begin{align}
    q_\nu(U) &:= \exp\Big( \langle \nu, T(U) \rangle - \mathcal{A}(\nu) - \tfrac{M}{2}\log 2\pi \Big) ,
\end{align}
where $\nu = (\nu_1, \nu_2)$ denotes the natural parameters, and the sufficient statistic $T(U)$ and log-partition function $\mathcal{A}(\nu)$ \citep{wainwright2008graphical} are defined as:
\begin{align}
    T(U) &:= \Big( U, \, -\tfrac{1}{2} UU^\intercal \Big) , \\
    \mathcal{A}(\nu) &:= \tfrac{1}{2}\nu_1^\intercal \nu_2^{-1}\nu_1 - \tfrac{1}{2}\log|\nu_2| .
\end{align}
Correspondingly, the code-conditional natural parameters $\eta(Z) = \big(\eta_1(Z), \eta_2(Z)\big)$ constructed from \cref{eq:pg_sigma} are given by:
\begin{align}
    \eta_1(Z) &= \Sigma(Z)^{-1}\mu(Z) , \qquad \eta_2(Z) = \Sigma(Z)^{-1} .
\end{align}

\begin{proof}[Proof of \cref{thm:gap}]
We begin by establishing that $q^*(U \mid Z)$ and $\bar q(U) \propto \exp\big(\mathbb{E}_{q(Z; \phi)}\big[\log q^*(U \mid Z)\big]\big)$ are Gaussian (Parts 1 and 2, respectively).
Then we show that the GAP is an expectation of KL divergences (Part 3), which can be expressed as a Bregman information (Part 4).

\textbf{Part 1: $q^*(U \mid Z)$ is Gaussian.}
With all other variational factors held fixed, the coordinate-ascent variational inference (CAVI) optimum for the inducing values satisfies
\begin{align}
    \log \tilde q^*(U \mid Z) = \sum_i \mathbb{E}_{q(\omega_i \mid Z)\, p(f_i \mid U, \hat{x}_i(z_{n_i}); W, \theta)}\!\left[\log \ell(y_i \mid f_i, \omega_i)\right] + \log p(U; W, \theta) ,
    \label{eq:bary_recall}
\end{align}
where $\tilde q^*(U \mid Z)$ is an unnormalized density with normalizer $\mathcal{C}(Z) := \int \tilde q^*(U \mid Z)\,\mathrm{d}U$ and normalized distribution $q^*(U \mid Z) := \tilde q^*(U \mid Z)/\mathcal{C}(Z)$. Completing the square in $U$ (\cref{eq:pg_qstar,eq:pg_sigma}) confirms that $q^*(U \mid Z)$ is Gaussian:
\begin{align}
\begin{split}
    q^*(U \mid Z) &= \mathcal{N}\big(\mu(Z),\, \Sigma(Z)\big), \\
    \Sigma(Z)^{-1} &= K_\theta(W, W)^{-1} + \Psi(Z)^\intercal \bar{\Omega}(Z) \Psi(Z), \\
    \Sigma(Z)^{-1}\mu(Z) &= \tfrac{1}{2}\Psi(Z)^\intercal\big(2Y - 1 - 2\bar{\Omega}(Z) m\big) .
\end{split} \label{eq:bary_qstar_recall}
\end{align}

\textbf{Part 2: $\bar q(U) \propto \exp\big(\mathbb{E}_{q(Z; \phi)}\big[\log q^*(U \mid Z)\big]\big)$ is Gaussian.}
By Part 1, $q^*(U \mid Z) = q_{\eta(Z)}(U)$ is a member of the family above, with natural parameters $\eta(Z) = \big(\Sigma(Z)^{-1}\mu(Z),\, \Sigma(Z)^{-1}\big)$. Since $\log q_\nu(U)$ is \emph{linear} in $\nu$, taking the expectation over $Z$ simply averages the natural parameters:
\begin{align}
    \mathbb{E}_{q(Z; \phi)}\big[\log q_{\eta(Z)}(U)\big]
    = \big\langle \bar\eta,\, T(U) \big\rangle - \mathbb{E}_{q(Z; \phi)}\big[\mathcal{A}(\eta(Z))\big] - \tfrac{M}{2}\log 2\pi ,
    \qquad \bar\eta := \mathbb{E}_{q(Z; \phi)}\big[\eta(Z)\big] .
    \label{eq:bary_eta}
\end{align}
The only $U$-dependence is the inner product $\langle \bar\eta, T(U)\rangle$; every remaining term is a constant. Exponentiating and normalizing therefore gives
\begin{align}
    \bar q(U) = q_{\bar\eta}(U) ,
    \label{eq:bary_square}
\end{align}
the Gaussian at the \emph{expectation of the natural parameters}. Reading $\bar\eta$ back in moment form,
\begin{align}
    \bar\Sigma = \Big(\mathbb{E}_{q(Z; \phi)}\big[\Sigma(Z)^{-1}\big]\Big)^{-1} , \qquad
    \bar\mu = \bar\Sigma\, \mathbb{E}_{q(Z; \phi)}\big[\Sigma(Z)^{-1}\mu(Z)\big] ,
    \label{eq:barycenter}
\end{align}
so it is the \emph{precisions} that average, not the covariances. Normalizability is immediate: $\bar\Sigma^{-1}$ is an expectation of positive-definite precision matrices under a probability measure, hence positive definite.

\textbf{Part 3: GAP is an Expected KL Divergence.} Let $\mathcal{Q} \in \{ q(U \mid Z), q(U) \}$.
As a function of $\mathcal{Q}$ (with all other factors fixed), writing $\mathrm{const}$ for terms independent of $\mathcal{Q}$, we express the optimization of $\mathcal{L}_{\mathrm{PG-ELBO}}$ as:
\begin{align*}
    \max_{\mathcal{Q}} \mathcal{L}_{\mathrm{PG-ELBO}}
    &= \mathrm{const} + \max_{\mathcal{Q}} \bigg[ \sum_i \mathbb{E}_{q(Z; \phi)\,\mathcal{Q}\,p(f_i\mid U)\,q(\omega_i\mid Z)}\big[\log \ell(y_i\mid f_i,\omega_i)\big] + \mathbb{E}_{q(Z; \phi)\,\mathcal{Q}}\big[\log p(U; W, \theta) - \log \mathcal{Q}\big] \bigg] \\
    &= \mathrm{const} + \max_{\mathcal{Q}} \mathbb{E}_{q(Z; \phi)\,\mathcal{Q}}\big[ \log \tilde q^*(U\mid Z) - \log \mathcal{Q} \big]
        \qquad (\text{by Part 1}) \\
    &= \mathrm{const} + \max_{\mathcal{Q}} \bigg[ \mathbb{E}_{q(Z; \phi)}\big[ \log \mathcal{C}(Z) \big] - \mathbb{E}_{q(Z; \phi)}\big[ D_{\mathrm{KL}}[\mathcal{Q}\|q^*(U\mid Z)] \big] \bigg] \\
    &= \mathrm{const} - \min_{\mathcal{Q}} \mathbb{E}_{q(Z; \phi)}\big[ D_{\mathrm{KL}}[\mathcal{Q}\|q^*(U\mid Z)] \big] ,
\end{align*}
where $q^*(U\mid Z) = \tilde q^*(U\mid Z)/\mathcal{C}(Z) = \mathcal{N}(\mu(Z),\Sigma(Z))$ is the CAVI-optimal conditional distribution (\cref{eq:opt-qu-given-z,eq:pg_qstar}), and $\mathbb{E}_{q(Z; \phi)}[\log \mathcal{C}(Z)]$ is absorbed into $\mathrm{const}$.

Evaluating this objective under both $q(U \mid Z)$ and $q(U)$ yields:
\begin{align}
    \max_{q(U\mid Z)}\mathcal{L}_{\mathrm{PG-ELBO}} 
    &= \mathrm{const} - \underbrace{\min_{q(U\mid Z)}\mathbb{E}_{q(Z; \phi)} \left[ D_{\mathrm{KL}}[q(U\mid Z)\|q^*(U\mid Z)] \right]}_{= 0 \text{ by setting } q(U\mid Z) = q^*(U \mid Z)} = \mathrm{const} , \\
    \max_{q(U)}\mathcal{L}_{\mathrm{PG-ELBO}} 
    &= \mathrm{const} - \min_{q(U)}\Big[ D_{\mathrm{KL}}[q(U)\|\bar q(U)] + \mathbb{E}_{q(Z; \phi)} \left[ D_{\mathrm{KL}}[\bar q(U)\|q^*(U\mid Z)] \right] \Big] \nonumber \\
    &= \mathrm{const} - \mathbb{E}_{q(Z; \phi)} \left[ D_{\mathrm{KL}}[\bar q(U)\|q^*(U\mid Z)]\right] - \underbrace{\min_{q(U)} D_{\mathrm{KL}}[q(U)\|\bar q(U)]}_{= 0 \text{ by setting } q(U) = \bar q(U)} \nonumber \\
    &= \mathrm{const} - \mathbb{E}_{q(Z; \phi)}D_{\mathrm{KL}}[\bar q(U)\|q^*(U\mid Z)] ,
\end{align}
where setting $q(U) = \bar q(U)$ is admissible since $\bar q(U)$ is Gaussian (Part 2).

Finally, subtracting the factorized bound from the structured bound cancels the shared $\mathrm{const}$ terms:
\begin{align}
    \mathrm{GAP} = \max_{q(U\mid Z)}\mathcal{L}_{\mathrm{PG-ELBO}} - \max_{q(U)}\mathcal{L}_{\mathrm{PG-ELBO}}
    = \mathbb{E}_{q(Z; \phi)}\big[D_{\mathrm{KL}}[\bar q(U)\|q^*(U\mid Z)]\big],
\end{align}
expressing the coupling gap directly as an expected KL divergence.

\textbf{Part 4: The Expected KL is a Bregman Information.} 
Parts 1 and 2 put both distributions in the family: $q^*(U \mid Z) = q_{\eta(Z)}(U)$ and $\bar q(U) = q_{\bar\eta}(U)$ with $\bar\eta = \mathbb{E}_{q(Z;\phi)}[\eta(Z)]$ (\cref{eq:bary_eta,eq:bary_square}).
Each KL is then the Bregman divergence of $\mathcal{A}$ (\cref{def:bregman}),
\begin{align}
    D_{\mathrm{KL}}\big[\bar q(U)\,\big\|\,q^*(U\mid Z)\big]
    &= D_{\mathrm{KL}}\big[q_{\bar\eta}(U)\,\big\|\,q_{\eta(Z)}(U)\big] \\
    &= \mathbb{E}_{q_{\bar\eta}(U)}\Big[ \log q_{\bar\eta}(U) - \log q_{\eta(Z)}(U) \Big] \\
    &= \mathbb{E}_{q_{\bar\eta}(U)}\Big[ \big\langle \bar\eta - \eta(Z),\, T(U) \big\rangle \Big] - \mathcal{A}(\bar\eta) + \mathcal{A}(\eta(Z))
        && (\tfrac{M}{2}\log 2\pi \text{ cancels}) \\
    &= \big\langle \bar\eta - \eta(Z),\, \nabla\mathcal{A}(\bar\eta) \big\rangle - \mathcal{A}(\bar\eta) + \mathcal{A}(\eta(Z))
        && \big(\mathbb{E}_{q_\nu(U)}[T(U)] = \nabla\mathcal{A}(\nu)\big) \\
    &= B_{\mathcal{A}}\big(\eta(Z),\, \bar\eta\big)
        && (\text{\cref{def:bregman}}) 
\end{align}
and since $\bar\eta = \mathbb{E}_{q(Z;\phi)}[\eta(Z)]$ is the mean, 
\begin{align}
\mathrm{GAP} = \mathbb{E}_{q(Z;\phi)}[B_{\mathcal{A}}(\eta(Z),\bar\eta)] = I_{\mathcal{A}}(\eta(Z)) = \mathbb{E}_{q(Z;\phi)}[\mathcal{A}(\eta(Z))] - \mathcal{A}(\bar\eta),
\end{align}
by \cref{def:breg_info}, which is~\cref{eq:apx_gap}. By \cref{def:bregman} it is $\ge 0$ and vanishes iff $\eta(Z) = \bar\eta$ $q(Z;\phi)$-a.s., i.e.\ iff $\eta(Z)$---equivalently $q^*(U\mid Z) = q_{\eta(Z)}(U)$---is $q(Z;\phi)$-a.s.\ constant.
\end{proof}

\subsection{Proof That the Gap Vanishes Exactly When Similarity Collapses} 
\label{apx:proof-oversim}

\begin{proof}[Proof of \cref{thm:oversim}]
First, we show that the latent codes $Z$ affect the conditional optimum $q^*(U \mid Z)$ exclusively through the cross-covariance matrix $K_\theta(\widehat{X}(Z), W)$ between observations and inducing points.
By the Hadamard factorization of the Similarity Kernel (\cref{def:iskernel}), cross-covariance entries separate across feature and latent channels:
\begin{align}
    \big[K_\theta(\widehat{X}(Z), W)\big]_{im} = K^x_\theta(x_i, w^x_m)\, K^z_\theta(z_{n_i}, w^z_m) . 
    \label{eq:apx_psi_hadamard}
\end{align}
Given a code-independent auxiliary matrix $\bar\Omega^\star$ (\cref{thm:oversim}\textbf{(iii)}) and base mean $m_\theta(X)$, the natural parameters (\cref{eq:pg_sigma,eq:pg_moments}) satisfy:
\begin{align}
    \eta_1(Z) = \tfrac{1}{2}\Psi(Z)^\intercal\big(2Y - 1 - 2\bar\Omega^\star m_\theta(X)\big) , \qquad
    \eta_2(Z) = K_\theta(W,W)^{-1} + \Psi(Z)^\intercal \bar\Omega^\star \Psi(Z) , \label{eq:apx_eta_of_R}
\end{align}
where $\Psi(Z) := K_\theta(\widehat{X}(Z), W)K_\theta(W,W)^{-1}$, and $K_\theta(W,W)^{-1}$ is code-independent and well defined by \cref{thm:oversim}\textbf{(i)}. Thus, $\eta(Z)$ depends on $Z$ solely through $K_\theta(\widehat{X}(Z), W)$.

\textbf{Part 1 ($\Leftarrow$): Collapse implies a zero coupling gap.}
If $S = 0$ and $\sigma_\theta(z) \equiv \sigma$, then $\zeta_\theta \equiv \rho_\theta(0) = 1$, making $K^z_\theta(z, z') \equiv \sigma^2$. The code-dependent term in \cref{eq:apx_psi_hadamard} becomes constant, rendering $K_\theta(\widehat{X}(Z), W)$---and by extension $\eta(Z)$---independent of $Z$. By \cref{thm:gap}, $\mathrm{GAP} = 0$.

\textbf{Part 2 ($\Rightarrow$): A zero coupling gap forces personalization collapse.}
Assume $\mathrm{GAP} = 0$. By \cref{thm:gap}, $\eta(Z)$ is $q(Z;\phi)$-almost surely constant, i.e.\ constant except possibly on a set of $q(Z;\phi)$-probability zero. No such exceptions exist: if $\eta$ took a different value at some $Z$, then by continuity (\cref{def:iskernel}) it would do so on a whole neighborhood of $Z$, and under full support (\cref{thm:oversim}\textbf{(ii)}) every neighborhood has positive probability. Hence, $\eta(Z)$ is pointwise constant everywhere on $\mathbb{R}^{N D_z}$.

\textit{Step 1: Pointwise constancy of live-inducing covariances.}
By \cref{eq:apx_eta_of_R}, constancy of $\eta_2(Z)$ implies that $\Psi(Z)^\intercal \bar\Omega^\star \Psi(Z)$---and consequently the Gram matrix $K_\theta(\widehat{X}(Z), W)^\intercal \bar\Omega^\star K_\theta(\widehat{X}(Z), W)$---is constant in $Z$. With all patient codes set equal ($z_1 = \dots = z_N = z$) for an arbitrary $z \in \mathbb{R}^{D_z}$, the $m$-th diagonal entry expands as:
\begin{align}
    \Big[K_\theta(\widehat{X}(Z), W)^\intercal\, \bar\Omega^\star\, K_\theta(\widehat{X}(Z), W)\Big]_{mm}
    = b_m\, K^z_\theta(z, w^z_m)^2 , \qquad
    b_m := \sum_{i=1}^{I} \bar\omega_i\, K^x_\theta(x_i, w^x_m)^2 .
    \label{eq:apx_eta2_diag}
\end{align}
For any live inducing point $m$, $b_m > 0$ because $\bar\omega_i > 0$ and $K^x_\theta(x_i, w^x_m) \neq 0$ for at least one observation. Thus, $K^z_\theta(z, w^z_m) \ge 0$ is constant across $z \in \mathbb{R}^{D_z}$ for every live $m$.

\textit{Step 2: Joint live-inducing constraints force $S = 0$.}
Since $K^z_\theta(z, w^z_m)$ is constant across $z$, evaluating it at $z = w^z_m$ fixes its value to $K^z_\theta(w^z_m, w^z_m) = \sigma_\theta(w^z_m)^2$ (since $\zeta_\theta(w^z_m, w^z_m) = 1$). Writing out $K^z_\theta(z, w^z_m) = \sigma_\theta(z)\,\sigma_\theta(w^z_m)\,\zeta_\theta(z, w^z_m)$ (\cref{def:iskernel}), equating it to $\sigma_\theta(w^z_m)^2$, and dividing through by $\sigma_\theta(w^z_m) > 0$ yields:
\begin{align}
    \sigma_\theta(z)\,\zeta_\theta(z, w^z_m) = \sigma_\theta(w^z_m) \qquad \forall z \in \mathbb{R}^{D_z}, \; \text{for every live } m .
    \label{eq:apx_live_const}
\end{align}
Fix $d \in \{1, \dots, D_z\}$ and select live points $m \neq m'$ with distinct warped coordinates $\varphi_\theta(w^z_m)_d \neq \varphi_\theta(w^z_{m'})_d$ (\cref{thm:oversim}\textbf{(v)}). Evaluating \cref{eq:apx_live_const} for $m$ at $z = w^z_{m'}$ and for $m'$ at $z = w^z_m$ gives:
\begin{align*}
    \zeta_\theta(w^z_{m'}, w^z_m) = \frac{\sigma_\theta(w^z_m)}{\sigma_\theta(w^z_{m'})} , \qquad
    \zeta_\theta(w^z_m, w^z_{m'}) = \frac{\sigma_\theta(w^z_{m'})}{\sigma_\theta(w^z_m)} .
\end{align*}
By kernel symmetry, $\zeta_\theta(w^z_m, w^z_{m'}) = \zeta_\theta(w^z_{m'}, w^z_m) =: r$. Multiplying these equations yields $r^2 = 1$. Since $r \in [0, 1]$, we have $r = 1$, which requires:
\begin{align*}
    \rho_\theta\Big(S\big(\varphi_\theta(w^z_m) - \varphi_\theta(w^z_{m'})\big)\Big) = 1 .
\end{align*}
Since $\rho_\theta(v) < 1$ for all $v \neq 0$, this implies $S\big(\varphi_\theta(w^z_m) - \varphi_\theta(w^z_{m'})\big) = 0$. Because $S = \mathrm{diag}(S_1, \dots, S_{D_z})$ is diagonal, the $d$-th coordinate of this vector equation reads $S_d \big(\varphi_\theta(w^z_m)_d - \varphi_\theta(w^z_{m'})_d\big) = 0$, where $S_d$ is the importance of latent dimension $d$ (\cref{def:iskernel}); we know that $\varphi_\theta(w^z_m)_d - \varphi_\theta(w^z_{m'})_d$ is non-zero by the choice of $m, m'$; hence $S_d = 0$. Since $d$ was arbitrary, $S = 0$.

\textit{Step 3: Uniform amplitude constancy and complete personalization collapse.}
With $S = 0$, $\zeta_\theta \equiv 1$. Equation~\labelcref{eq:apx_live_const} reduces to $\sigma_\theta(z) = \sigma_\theta(w^z_m)$ for all $z$, proving that $\sigma_\theta$ is constant (\cref{thm:oversim}\textbf{(a)}). Furthermore, $\zeta_\theta \equiv 1$ implies $[A^{\mathrm{trend}}]_{nm} = \zeta_\theta(z_n, z_m) = 1$ everywhere, and constant $\sigma_\theta$ implies $[A^{\mathrm{amp}}]_{nm} = \min(\sigma_\theta(z_n), \sigma_\theta(z_m)) / \max(\sigma_\theta(z_n), \sigma_\theta(z_m)) = 1$ everywhere, so $A^{\mathrm{comb}} = A^{\mathrm{trend}} \cdot A^{\mathrm{amp}} = 1$ as well, establishing total personalization collapse (\cref{thm:oversim}\textbf{(b)}).
\end{proof}

\section{DATA AND EXPERIMENTAL SETUP} \label{apx:data}

We begin with the components of the experimental setup that are common to both the synthetic and real-data experiments.

\textbf{Evaluation Metrics.}
We evaluate model performance along four complementary dimensions, the last of which requires a known ground truth and is therefore reported on synthetic data alone:
\begin{itemize}[nosep, leftmargin=*]
    \item \textbf{Test Average Precision (AP):} Measures performance across precision-recall thresholds.
    \item \textbf{Test Log-Likelihood (LL):} Assesses probabilistic calibration and goodness-of-fit.
    \item \textbf{Mean Off-Diagonal Similarity (MODS):} For an $N \times N$ similarity matrix $A$ (e.g., $A^{\mathrm{trend}}$, $A^{\mathrm{amp}}$, or $A^{\mathrm{comb}}$), define:
    \begin{equation}
        \mathrm{MODS}(A) := \frac{1}{N(N-1)} \sum_{n \neq m} A_{nm} . \label{eq:mods}
    \end{equation}
    MODS quantifies inter-patient information sharing; values near 1 signal personalization collapse (\cref{thm:oversim}), whereas values near 0 reflect fully independent modeling.
    \item \textbf{Similarity Recovery:} The Spearman rank correlation between the off-diagonal entries of a learned similarity matrix and the corresponding ground-truth patient similarities (only applicable to synthetic data). Recovery is scored on $A^{\mathrm{comb}}$ and complements MODS: MODS asks whether the model distinguishes patients at all, recovery whether the distinctions it draws are the right ones. It is undefined for a similarity matrix that is constant off the diagonal, where no ranking exists.
\end{itemize}
We evaluate AP and LL both at the pooled level (aggregating cohort observations to assess global discrimination) and at the per-patient level (measuring within-patient temporal forecasting). Per-patient AP excludes any patient whose evaluation period contains only one class---no SREs, or no non-SRE surveys---since AP is undefined there. To contextualize performance across baseline risk profiles, we stratify results by whether a patient experienced an SRE during training.

\textbf{Software.}
The LSGP is implemented in NumPyro~\citep{bingham2019pyro} on JAX~\citep{jax2018github}, with the
neural components (e.g.~the DeepSet encoder and the kernel warps) written in Equinox~\citep{kidger2021equinox}. The linear
baselines use \texttt{scikit-learn}~\citep{scikit-learn}, the variational and empirical-Bayes
logistic regressions use \texttt{sklearn-bayes}~\citep{sklearnbayes}, and the hierarchical
(mixed-effects) baselines use \texttt{statsmodels}~\citep{seabold2010statsmodels}. All fits run in double
precision, which the Cholesky steps of the sparse bound require.

\subsection{The ``Hemisphere'' Synthetic Dataset} \label{apx:data-synthetic}\label{apx:setup-hemisphere}

Because EMA data lacks ground-truth patient similarities, we construct a synthetic benchmark---the ``Hemisphere'' dataset---to evaluate similarity recovery alongside predictive accuracy. The dataset satisfies three primary design objectives: (1) providing an explicit ground-truth similarity metric to verify recovery, (2) modeling continuous patient heterogeneity by rotating decision boundaries along a manifold, and (3) generating Likert-formatted responses to mirror real EMA survey structures.

\begin{figure}[t]
    \centering
    \includegraphics[width=\linewidth]{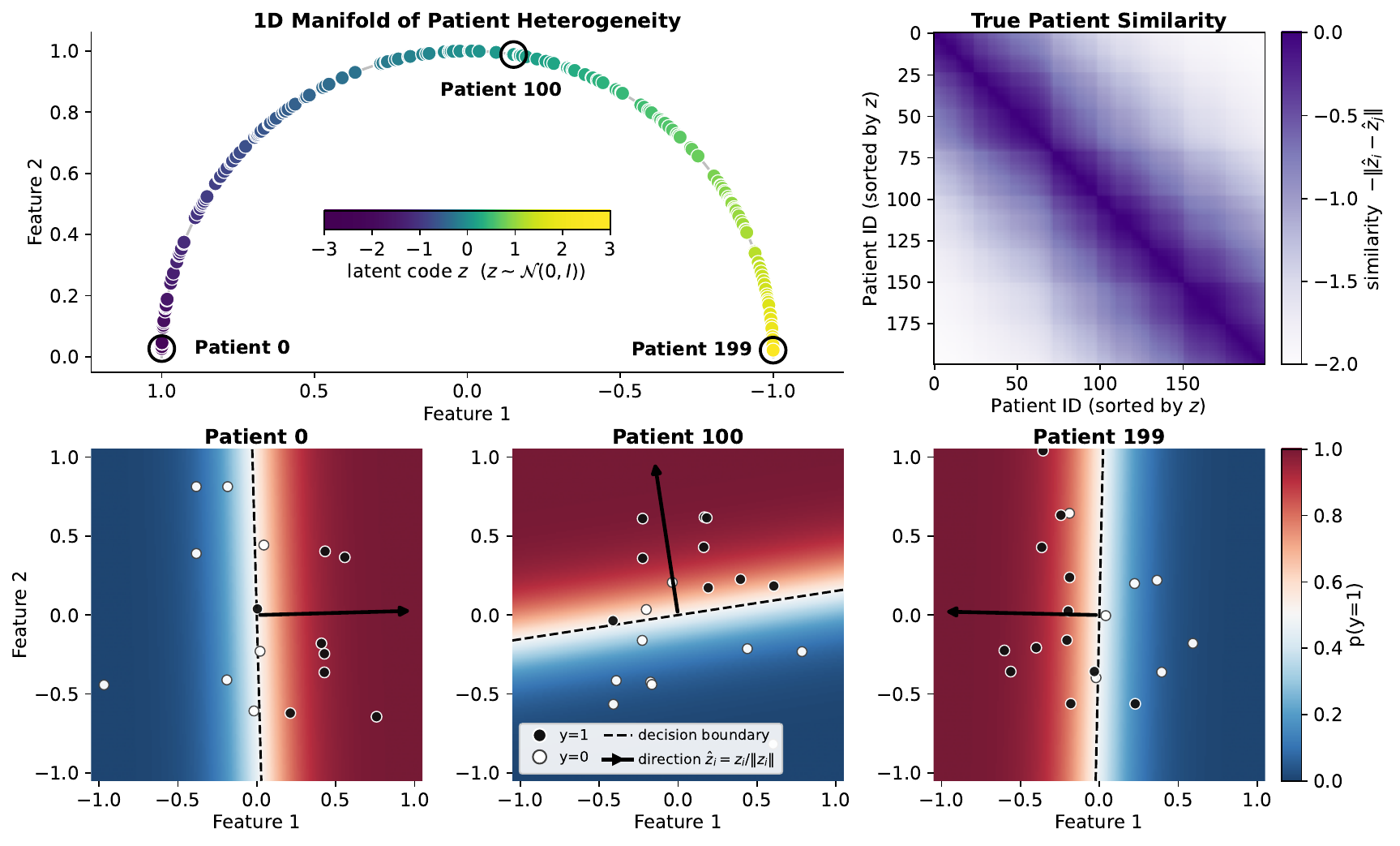}
    \caption{\textbf{Synthetic Hemisphere dataset with ground-truth similarity.} 
    \emph{Top left:} Patient directions $v_n$ on the upper unit semicircle ($D_s = 2$). 
    \emph{Top right:} True similarity matrix $-\lVert v_n - v_{n'} \rVert$ (monotonically equivalent to $v_n^\intercal v_{n'}$), ordered along the manifold. 
    \emph{Bottom row:} Observed feature space for three representative patients, showing patient-specific decision boundaries and survey outcomes.}
    \label{fig:synthetic}
\end{figure}

\textbf{Data-Generating Process.}
Each patient $n \in \{1, \dots, N\}$ is assigned a latent direction $v_n$ on the upper hemisphere of a unit sphere $\mathcal{S}^{D_s-1}$ in feature space $\mathbb{R}^{D_s}$ (where $D_s$ need not equal $D_z$ in experiments). For survey $i$ belonging to patient $n_i \in \{1, \dots, N\}$, the sampling process is defined as:
\begin{align}
    z_n &\sim \mathcal{N}(0, \mathbb{I}_{D_s}), \quad v_n = \frac{(z_{n,1}, \dots, z_{n,D_s-1},\, \lvert z_{n,D_s} \rvert)^\intercal}{\lVert z_n \rVert} \in \mathcal{S}^{D_s-1}, \label{eq:synth_code} \\
    \lvert \mathcal{D}_n \rvert &\sim \operatorname{Uniform}\{k_{\min}, \dots, k_{\max}\}, \\
    x_{i,d} &= \operatorname{clip}_{[0,10]}\!\left(\operatorname{round}(5 + 2\varepsilon_{i,d})\right), \quad \varepsilon_{i,d} \sim \mathcal{N}(0,1), \quad d = 1, \dots, D_s, \\
    y_i &\sim \operatorname{Bernoulli}\!\left(\operatorname{sigmoid}\!\Big(\gamma \cdot \tfrac{1}{5}\sum_{d=1}^{D_s}(x_{i,d} - 5)\,v_{n_i,d}\Big)\right), \label{eq:synth_label}
\end{align}
where $x_i \in \{0, \dots, 10\}^{D_s}$ represents Likert survey responses. Taking the absolute value $\lvert z_{n,D_s} \rvert$ projects every direction onto the upper hemisphere, yielding ground-truth pairwise similarities $v_n^\intercal v_{n'}$. The parameter $\gamma$ controls decision boundary sharpness. Surveys are sampled per patient until at least three positive and three negative outcomes are observed.
For a visual illustration of this data-generating process, see \cref{fig:synthetic}.

\textbf{Cohort Construction.}
We generate $36$ synthetic cohorts by crossing six cohort sizes ($N \in \{10, 48, 86, 124, 162, 200\}$) with six ``boundary sharpness levels'' ($\gamma \in \{1, 4, 7, 10, 13, 16\}$), dictating label noise around the decision boundary. These two parameters govern how much ``signal'' is in the data. Each cohort uses $D_s = 2$ features and $\lvert\mathcal{D}_n\rvert \sim \operatorname{Uniform}\{12, \dots, 20\}$ surveys per patient, producing datasets with rough class label balance to ensure results are not confounded by the class imbalance present in real EMA data.

\textbf{Experimental Setup.}
Surveys are split, in generation order, into a $50/25/25$ train/validation/test partition, with features min--max scaled on training rows. Models are fit on each cohort across three random initializations, with the restart selected via the highest validation log-likelihood.

\textbf{Hyperparameters and Model Selection.}
The two evaluated variants differ \emph{only} in how $q(U)$ couples to latent codes $Z$, isolating the structural effect of variational coupling:
\begin{itemize}[leftmargin=*]
    \item \textbf{PG-LSGP (Structured $q(U \mid Z)$):} P\'olya-Gamma bound (\cref{apx:pg}) where $q(U \mid Z)$ attains the exact code-conditional optimum in closed form (\cref{eq:pg_moments}).
    \item \textbf{PG-LSGP (Factorized $q(U)$):} Factorized bound (\cref{apx:pg-factorized}) using a free, full-covariance Gaussian.
\end{itemize}
Both variants use $M = 100$ free inducing points, full-batch Adam optimizer updates ($2{,}000$ iterations, learning rate $0.01$), a product kernel $K_\theta = K^x_\theta(X, X') \odot K^z_\theta(Z, Z')$ (linear $K^x_\theta$, ARD-RBF $K^z_\theta$), and $D_z = 1$ (matching the manifold's intrinsic dimension). The code posterior $q(Z; \phi)$ uses a DeepSet encoder $g_\phi$ (2 hidden layers, width 64, 3 Monte Carlo samples per step), with $m_\theta(x) = b$ warm-started to the logit of the training prevalence. Encoder inputs are remapped from $[0, 1]$ to $[-1, 1]$ to prevent amortized posterior collapse.
Two necessary implementation differences exist: structured $q(U \mid Z)$ executes $T = 15$ inner CAVI sweeps per step, whereas factorized $q(U)$ requires gradient norm clipping at $1.0$ to stabilize its free $M \times M$ Cholesky factor.

\subsection{The EMA Dataset} \label{apx:data-ema}\label{apx:setup-u01}

\textbf{Patients and Recruitment.}
A total of 623 unique patients presenting with suicidal thoughts or recent suicidal behavior were recruited across \ifpreprintflag two Boston-area hospital sites\else two hospital sites in a large U.S. metropolitan area\fi: 315 adults (ages 18+) from an emergency psychiatric service and 308 adolescents (ages 12--19) from an inpatient unit. Patients who re-enrolled are counted once, even if they were assigned more than one study ID. Exclusions included non-smartphone ownership, lack of English fluency, inability to consent, or severe cognitive impairment (e.g., psychosis, acute intoxication, dementia).

\textbf{Consent, Compensation, and Safety Protocol.}
Patients completed baseline assessments, installed the \ifpreprintflag LifeData smartphone app\else study's smartphone app\fi, and received \$10 for baseline completion plus \$1 per submitted survey. Study protocols were approved by the institutional review board. A dedicated clinical risk team monitored real-time responses to intervene upon high reported suicidal intent.

\textbf{EMA Sampling and Survey Structure.}
Prompts were delivered 6 times daily for three months: a daily survey and a nighttime survey at fixed times as the first and last prompts of each day, with 4 randomized prompts in between. Patients could also submit voluntary self-initiated surveys (e.g., to report self-injury). Surveys evaluated 3 suicidal-thoughts items (urge, intent, ability to resist) and 17 affective states on a $0$--$10$ Likert scale.

\textbf{Suicide-Related Event (SRE) Outcomes.}
SREs could be identified from seven sources: three patient self-reports (a user-initiated survey, the daily survey, or the nighttime survey); the study's event log, maintained by the clinical team; emergency calls, in which clinicians learned of an event and logged it; reports relayed by study staff over the team's internal messaging channel; and, for patients who consented to record access, the electronic health record (EHR), consensus-coded by trained reviewers. An SRE is retained only if preceded by at least one EMA survey within 7 days.

\textbf{De-Identification.}
All data was de-identified before being shared with the research team: patients are indexed only by a study ID, with no names, dates of birth, or other direct identifiers retained. The demographic-annotated similarity figures we report (\cref{fig:graph,fig:heatmap} and their appendix variants) are drawn entirely from these de-identified records; each patient appears only as a node or a row keyed by study ID, colored by a broad demographic category shared by many patients, so no figure in this paper displays a name, a raw survey response, or any information beyond a patient's de-identified demographic category and their position in the learned similarity structure.

\textbf{Data Cleaning and Cohort Construction.}
When patients explicitly skipped suicidal-thoughts items, the instrument recorded a skip; these surveys were retained, and missing items were imputed using each patient's individual training-data mean to capture within-person levels without introducing between-person differences. Two kinds of surveys were dropped instead. First, a survey in which a suicidal-thoughts item was missing \emph{without} a recorded skip: we do not know why the value is absent, so imputing it would tell the model the question was answered. Second, a skipped survey from a patient who never answered that item anywhere in their training data, since there is then no personal mean to impute from.

This strategy retained 76,798 of 79,448 recorded surveys (a 3.3\% loss). In contrast, listwise deletion would have retained only 73,739 surveys, discarding 3,059 surveys where patients declined suicidal-thoughts questions---a response pattern plausibly tied to clinical risk and therefore not missing at random. Of 623 recruited patients, 503 contributed at least one EMA survey. Six were excluded for lacking valid responses required for person-mean imputation, resulting in a final analytical cohort of N = 497 patients and 76,798 surveys (81.7\% of patients experienced zero SREs).

\textbf{Features.}
Every model sees the same $D_x = 35$ columns:
\begin{itemize}[nosep, leftmargin=*]
    \item \emph{Affect items (17):} humiliated, isolated, negative, numb, desire to escape, energetic, fatigued, hopeless, trapped, worried, positive, self-hate, agitated, angry, burdensome, desire to approach people, places, and things, and desire to avoid people, places, and things.
    \item \emph{Suicidal-thoughts items (3):} urge to die by suicide, intent to die by suicide, and ability to resist dying by suicide.
    \item \emph{Survey metadata and context (8):} minutes from prompt to survey start, survey duration in minutes, days since the previous survey, daytime indicator, weekend indicator, holiday indicator, day-before-holiday indicator, and daylight duration.
    \item \emph{Other survey-level columns (4):} count of extreme affect responses, COVID-19 state-of-emergency indicator, number of SREs accumulated so far in the study, and adult-versus-adolescent indicator (by recruitment site).
    \item \emph{Skip indicators (3):} whether the urge, intent, and ability-to-resist items were skipped.
\end{itemize}

\textbf{Experimental Setup.}
Models are evaluated on these 35 columns across three training fractions, $h \in \{0.3, 0.5, 0.8\}$. Because temporal splitting is deterministic for a given $h$, we evaluate the three training fractions as distinct datasets representing varying amounts of history rather than reporting a mean and variance across them. Class prevalence remains stable across splits (train: $3.9\%$, $3.7\%$, $3.6\%$; test: $4.1\%$, $3.7\%$, $3.4\%$). Test windows contain 69, 57, and 31 SRE episodes across 51, 47, and 30 patients for $h = 0.3, 0.5, 0.8$, respectively.

Within each patient's trajectory, surveys are ordered chronologically. To prevent temporal leakage and avoid splitting multi-day clinical events across sets, split boundaries are snapped strictly outside any continuous 7-day SRE run. Each patient's timeline is then cut into \textsf{train} $\mid$ \textsf{val} $\mid$ \textsf{test} in time order, training taking a fraction $h$ and validation and test each taking half of what remains: $h_{\mathrm{val}} = h_{\mathrm{test}} = \frac{1 - h}{2}$.
Features are min--max standardized to $[0, 1]$ based on training set statistics. 

We fit non-deterministic models across 5 random initializations, selecting among them on validation log-likelihood for the LSGP and on validation per-patient AP for the baselines, for which the seed is tuned alongside the remaining hyperparameters.

The two criteria answer two different questions.
Restart selection asks which of several local optima of \emph{one} objective to keep: every LSGP variant maximizes the same evidence lower bound on the log marginal likelihood, so the held-out LL is the out-of-sample counterpart of that objective, and selecting on it mitigates local optima without introducing a criterion the objective never saw.
Model comparison asks which method forecasts within-patient risk, which per-patient AP measures (Result \ref{sec:results}.1), and this criterion is applied identically to every LSGP variant and every baseline.
The baselines have no comparable inner objective (their seed enters through data subsampling, not through a bound being maximized), which is why their seed is tuned as a hyperparameter instead.

\textbf{Hyperparameters and Model Selection.}
Each baseline's hyperparameters are selected via validation per-patient AP (see Result \ref{sec:results}.1). Shared LSGP settings include $M = 1024$ inducing points, mini-batch size $\vert{}\mathcal{B}\vert{} = 4096$, 10,000 Adam iterations ($\text{learning rate} = 0.005$), the product kernel $K_\theta = K^x_\theta \odot K^z_\theta$, linear $K^x_\theta$, a DeepSet encoder $g_\phi$ (2 layers, width 16), $D_z \in \{1, 2\}$, gradient norms clipped at 1.0, and a constant mean $m_\theta(x) = b$ warm-started to the logit of the training prevalence.
The LSGP ablation has no encoder to stabilize, so it is unclipped at $D_z = 1$ with a mean-field $q(Z; \phi)$, and it searches its $K^x_\theta$ under a zero-initialized linear mean $m_\theta(x) = w^\intercal x + b$. We give the (non-ablation) LSGPs the less expressive mean deliberately: a linear mean's per-feature slopes absorb the shared trend, which pre-explains the signal and, due to local optima, collapses the amortized codes onto the prior, whereas a scalar intercept is code-independent and cannot. The LSGP ablation has no codes to collapse, so it keeps the more expressive mean. One Monte Carlo code sample is drawn per step in every model; the two sparse-variational models additionally draw 10 samples of the latent function $f$, conditional on that single code, to estimate their reconstruction term---a term the P\'olya-Gamma bound does not sample, being closed form in $f$ once the codes and the auxiliary variables are fixed. The inner CAVI sweeps ($T = 15$) belong to the two models optimizing the P\'olya-Gamma bound---the deployed model and the LSGP ablation---the factorized bound and the sparse-variational model having no E-step.

We evaluate the LSGP against the two extremes it bridges, alongside targeted controls that isolate specific model capabilities; each model's search space follows.

\begin{itemize}[leftmargin=*]
    \item \textbf{PG-LSGP (Structured $q(U \mid Z)$):} Grid-searched over the $2 \times 2$ crossing of amplitude scaling with warping,
    \begin{align*}
        K^z_\theta \in \{\text{ARD},\ \sigma_\theta^2\!\cdot\!\text{ARD},\ \text{Warped},\ \sigma_\theta^2\!\cdot\!\text{Warped}\},
    \end{align*}
    each further crossed with whether the amortized encoder's own copy of the input features is re-centered from $[0,1]$ to $[-1,1]$ before it sees them (the kernel itself always sees the untouched $[0,1]$ features). This hyperparameter axis was motivated by preliminary experiments on synthetic data, which revealed that the amortized DeepSet encoder can otherwise collapse to the prior when processing strictly non-negative $[0, 1]$ inputs. 
    
    \item \textbf{PG-LSGP (Factorized $q(U)$):} Uses the factorized bound over the same $K^z_\theta$ and encoder-input grid as the structured model.
    \item \textbf{SV-LSGP:} Sparse-variational bound comparing mean-field vs.\ amortized $q(U \mid Z)$ (diagonal covariance, 10 latent-function samples per step), over the same $K^z_\theta$ and encoder-input grid.
    \item \textbf{LSGP Ablation ($K^z_\theta \equiv 1$):} Retains the full model's variational bound, inducing points, and optimizer, but pools across patients. Switching the similarity channel off makes the codes inert, so this variant drops the amortized code posterior for a mean-field one, needs no gradient clipping, and takes the default linear mean. It additionally searches $K^x_\theta \in \{\text{Linear},\ \text{ARD},\ \text{Arc-Cosine},\ \text{NN-Erf}\}$, held fixed by the personalized variants, which makes it the natural shared-GP reference rather than a single-factor ablation.
    
    \item \textbf{Logistic Regression Baselines (LR / VB-LR / EB-LR):} $L_2$-regularized logistic regression, with variational (VB-LR) and empirical-Bayes (EB-LR, via type-II maximum likelihood with a Laplace approximation) variants. Fitting each both nomothetically and idiographically captures the trade-off between masking individual dynamics and overfitting data-sparse patients (\cref{apx:related-work}). Evaluated in both modes without early stopping to ensure convergence rather than relying on default library iteration limits ($\texttt{tol} = 0$ for standard LR; $5{,}000$ iterations for VB-LR and EB-LR). Hyperparameter grids comprise:
\begin{itemize}[leftmargin=*]
    \item \emph{Standard LR:} Inverse $L_2$ regularization strength $C \in \{10^{-3},\, 3 \times 10^{-3},\, 10^{-2},\, 3 \times 10^{-2},\, 0.1,\, 0.3,\, 1,\, 3,\, 10,\, 30,\, 100\}$.
    \item \emph{Variational Bayes LR (VB-LR):} Gamma prior shape and rate $a, b \in \{1.0, 2.0\}$.
    \item \emph{Empirical Bayes LR (EB-LR):} Initial precision $\alpha \in \{10^{-12},\, 10^{-9},\, 10^{-6},\, 10^{-3},\, 10^{-2},\, 0.1,\, 0.5,\, 1,\, 2,\, 3,\, 5,\, 10,\, 30\}$.
\end{itemize}

\item \textbf{Tree Baselines (GBM / RF):} Gradient-boosted decision trees and random forests fitted at the cohort level. Because the input covariance $K^x_\theta$ is linear, these test for non-linear structure its input kernel omits. Evaluated in nomothetic mode using \texttt{HistGradientBoostingClassifier} ($300$ boosting rounds, no early stopping or withheld validation split; $\texttt{max\_leaf\_nodes} \in \{15, 31\}$, $\texttt{learning\_rate} \in \{0.05, 0.1\}$) and \texttt{RandomForestClassifier} ($300$ trees, inverse-frequency class weighting; $\texttt{max\_depth} \in \{\text{None}, 8, 16\}$, $\texttt{min\_samples\_leaf} \in \{1, 5\}$).

\item \textbf{Hierarchical LR Baselines:} We evaluate mixed-effects logistic regression as the standard bridge between nomothetic and idiographic modeling in clinical psychology, in two variants: a random-intercept model (\emph{intercept}) and a full random-intercept-and-slopes model (\emph{intercept+slopes}). The generative process, notation, and function-space (GP) form of both variants are given in \cref{apx:hlm-function-space}. We fix the fixed-effect prior scale to $\sigma_{\mathrm{fe}} = 2$ and grid-search the log-scale variance component prior $\sigma_{\mathrm{vc}} \in \{0.5, 1, 2\}$, which dictates the strength of hierarchical pooling across patients.
Models are fit via mean-field variational inference using the \texttt{statsmodels}~\citep{seabold2010statsmodels} Bayesian mixed GLM implementation.

\item \textbf{Base-Rate Baselines:} A featureless baseline that ignores survey responses entirely, separating \emph{identifying high-risk individuals} from \emph{forecasting temporal risk within a patient}: a prediction constant within a patient gives an uninformative temporal ranking yet can score well pooled, so a model matching its pooled metrics is merely stratifying patients rather than forecasting their dynamics. It predicts smoothed event rates derived from a conjugate Beta--Binomial prior $\operatorname{Beta}(\alpha\pi, \alpha(1-\pi))$, where $\pi$ is the global training prevalence and $\alpha$ acts as a prior pseudocount. The predictive rate $p_n$ is the posterior mean:
\begin{equation}
    p_n = \operatorname{clip}_{[\epsilon,\, 1-\epsilon]}\!\left( \frac{k_n + \alpha \pi}{\lvert \mathcal{D}_n \rvert + \alpha} \right), \qquad \epsilon = 10^{-3},
\end{equation}
for patient event count $k_n$ and survey count $\lvert \mathcal{D}_n \rvert$. In nomothetic mode (\emph{Nomothetic Base-Rate} in the figures), data is pooled so that every patient receives the uniform global prevalence $\pi$. In idiographic mode (\emph{Idiographic Base-Rate}), $p_n$ shrinks individual rates toward $\pi$, with $\alpha$ either grid-searched over $\{0,\, 0.5,\, 1,\, 2,\, 4,\, 8,\, 16,\, 32,\, 64,\, 128,\, 256\}$ or fit via empirical Bayes maximizing the Beta--Binomial marginal likelihood:
\begin{align}
\begin{split}
    \alpha^\star = \operatorname*{arg\,max}_{\alpha} \sum_{n=1}^{N} \Big[
    &\log \Gamma\big(k_n + \alpha\pi\big) + \log \Gamma\big(\lvert \mathcal{D}_n \rvert - k_n + \alpha(1-\pi)\big) - \log \Gamma\big(\lvert \mathcal{D}_n \rvert + \alpha\big) \\
    &- \log \Gamma\big(\alpha\pi\big) - \log \Gamma\big(\alpha(1-\pi)\big) + \log \Gamma\big(\alpha\big) \Big] ,
\end{split}
\end{align}
written with $\log\Gamma$ rather than the Beta function so that \emph{Beta} refers only to the prior distribution.
\end{itemize}

\section{ADDITIONAL RESULTS} \label{apx:results}

\textbf{Structured Bound Behavior and Factorized Baselines.}\label{apx:structured-collapse}
While \cref{thm:oversim} proves that the factorized bound carries an explicit optimization penalty $I_{\mathcal{A}}(\eta(Z))$ that vanishes only at the all-ones similarity (actively pulling inference toward personalization collapse), it makes no such prediction for the structured bound. The structured objective eliminates this artificial penalty, allowing the model to either personalize or collapse strictly as dictated by the data likelihood. 
On this cohort, structured variants can also reach personalization collapse, $\mathrm{MODS}(A^{\mathrm{comb}}) = 1$; for example, the selected structured SV variant collapses at $h = 0.8$ (\cref{fig:q8}). We interpret this behavior through three distinct mechanisms:
\begin{enumerate}[leftmargin=*,itemsep=2pt,topsep=2pt]
    \item \textbf{Data Homogeneity:} A collapsing structured variant does not contradict our theoretical framework. In a cohort where $81.7\%$ of patients experience no SREs, individual trajectories may lack sufficient heterogeneity to warrant personalization, rendering the nomothetic model likelihood-optimal. Crucially, \cref{thm:gap,thm:oversim} demonstrate that an all-ones similarity matrix is far more credible under the structured bound: because the structured objective contains no variational penalty favoring collapse, an all-ones matrix reflects a less confounded, likelihood-driven outcome rather than an artifact of mean-field optimization.
    \item \textbf{Optimization Artifacts in Baselines:} Factorized references achieve substantially lower per-patient LL than structured variants, indicating convergence to local minima away from the global optimum analyzed in \cref{thm:gap}. Consequently, lower $\mathrm{MODS}$ in factorized baselines may reflect optimization noise rather than genuine personalization missed by the structured bound. Furthermore, the low-prevalence LL dynamics (Result \ref{sec:results}.1) prevent using log-likelihood alone to rank these fits.
    \item \textbf{Likelihood as the Training Signal:} Both bounds are maximized as surrogates for the marginal likelihood, and Result \ref{sec:results}.1 shows that on this cohort the likelihood barely distinguishes a model that forecasts within a patient from one that matches the prevalence: a constant predictor is already near the best LL any model attains. The training objective therefore exerts little pull toward personalization for \emph{either} bound, and a small residual gain in likelihood from personalizing can be outweighed by the KL cost of the codes. Under such an objective the structured bound is free to collapse, and a metric that rewarded within-patient discrimination directly might not collapse where LL does.
\end{enumerate}
Definitively separating data-driven collapse from optimization local minima requires an exact optimum-versus-optimum comparison on real-world cohorts, which we leave to future work.

\begin{figure}[htb]
    \centering
    \includegraphics[width=\textwidth]{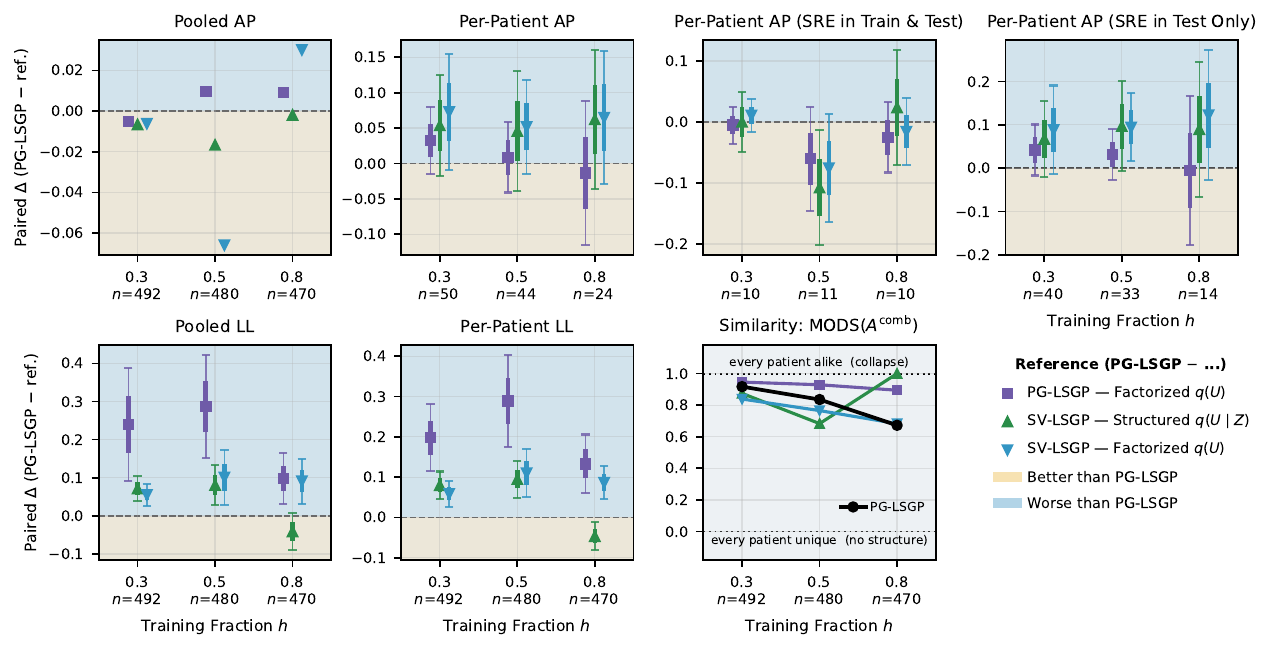}
    \caption{\textbf{Paired comparison of the structured PG-LSGP against the factorized and SV variants on the EMA data.} This plot is similar to \cref{fig:q7}: each marker is the mean paired per-patient difference (best-validated structured PG-LSGP minus the best-validated member of each reference family at each $h$), with thick and thin bars spanning $\pm 1$ and $\pm 2$ paired standard errors; the MODS panel shows each method's own similarity. Within the PG bound, adopting the structured over the factorized variational family improves per-patient LL by more than two paired standard errors at every $h$, reflecting superior predictive calibration, while per-patient AP differences remain largely within noise; the within-SV contrast is mixed. In the MODS panel, the factorized PG variant sits closer to collapse than the structured one at every $h$, yet the selected structured SV variant itself collapses at $h = 0.8$---neither class of models consistently exhibits personalization collapse (\cref{apx:structured-collapse}). Still, employing the structured family provides critical theoretical safeguards: it eliminates the optimization penalty against non-nomothetic fits, guaranteeing that any collapse is likelihood-driven rather than an artifact of the variational family.}
    \label{fig:q8}
\end{figure}

\begin{figure}[htb]
    \centering
    \includegraphics[width=\textwidth]{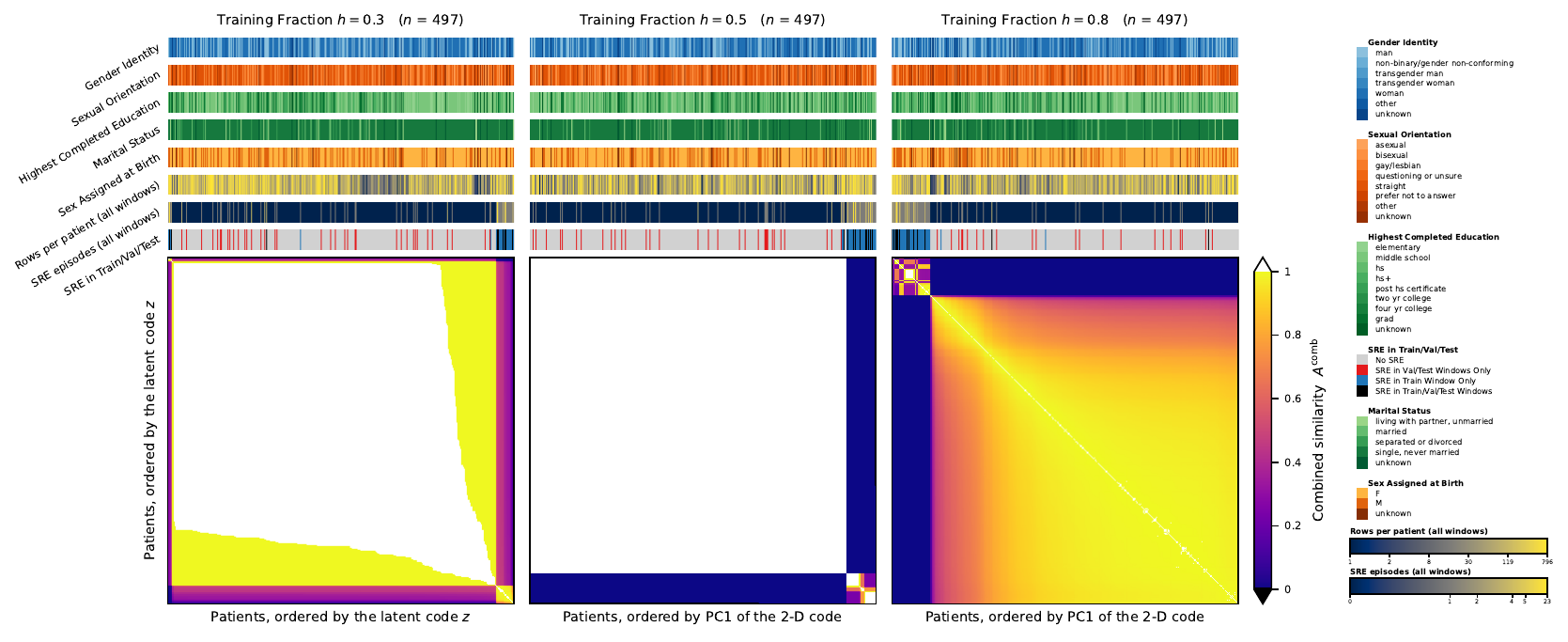}
    \caption{\textbf{Combined similarity ($A^{\mathrm{comb}}$) across the full cohort ($N = 497$).} Extending \cref{fig:heatmap} to every patient, across all training fractions $h$. Patients with training-window SREs exhibit distinct, well-defined similarity structures, while the remaining patients display less differentiated similarity, as the absence of positive training labels limits the model's ability to resolve fine-grained boundaries among lower-risk trajectories.}
    \label{fig:heatmap-extended}
\end{figure}

\begin{figure}[htb]
    \centering
    \includegraphics[width=0.99\textwidth]{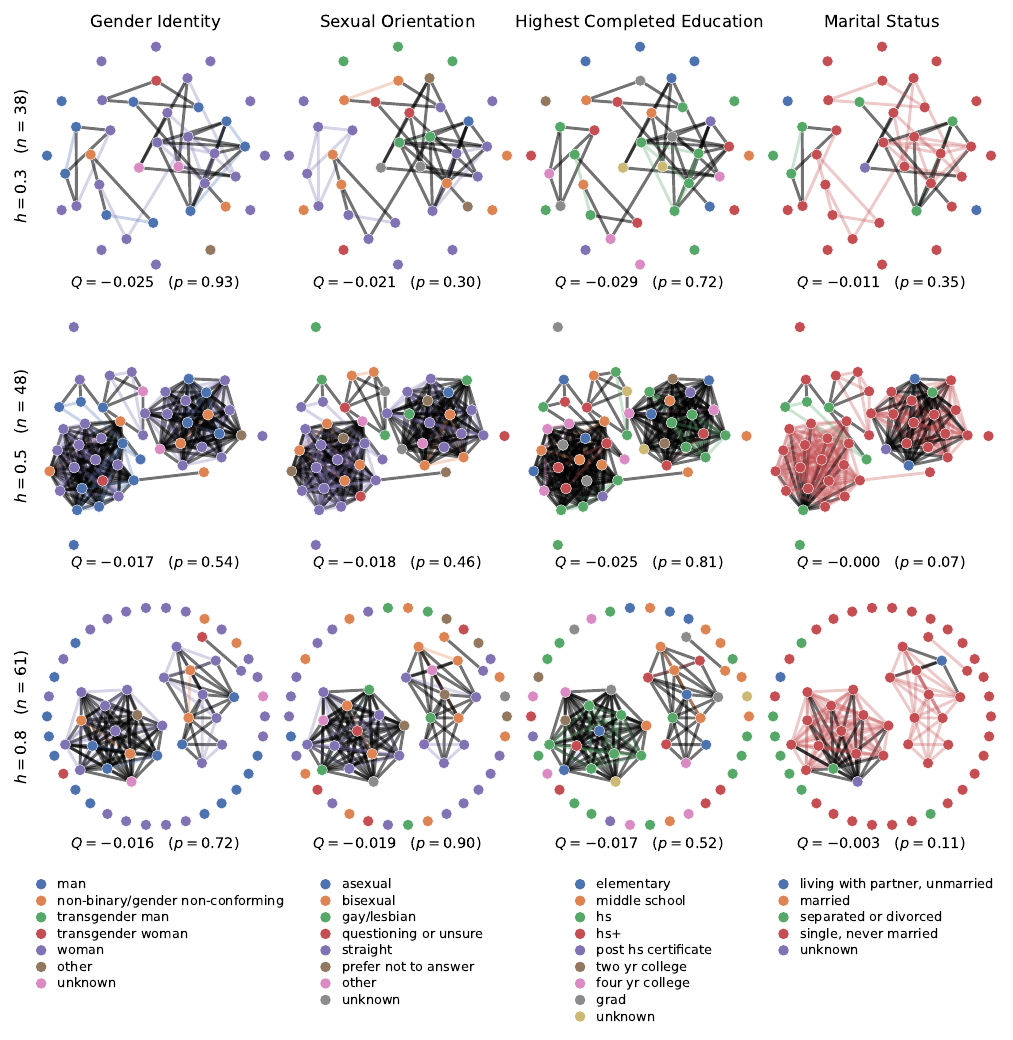}
    \caption{\textbf{Patient similarity graphs for the training-event cohort: demographic information does not explain patient similarity.} Nodes represent patients who experienced an SRE during their training window. Edge thickness reflects the learned combined similarity $A^{\mathrm{comb}}_{nm}$. For legibility, only the strongest edges are drawn: a single similarity threshold, shared across all panels so that an edge means the same similarity everywhere, is chosen so that each panel shows about six edges per patient. Graph modularity $Q$ and permutation $p$-values are computed on the complete similarity matrix $A^{\mathrm{comb}}$ before this pruning. Rows correspond to training fractions $h \in \{0.3, 0.5, 0.8\}$, each showing the variant with the highest validation per-patient AP at the restart selected on validation log-likelihood; columns correspond to demographic attributes (node colors).}
    \label{fig:graph}
\end{figure}

\end{document}